\preto\fullcite{\AtNextCite{\defcounter{maxnames}{99}}}
\theoremstyle{plain}
\newtheorem{theorem}{Theorem}[section]
\newtheorem{proposition}[theorem]{Proposition}
\theoremstyle{definition}
\theoremstyle{remark}
\definecolor{purple}{rgb}{0.5,0.0,0.5}
\definecolor{mypink1}{RGB}{219, 48, 122}
\definecolor{mypink2}{cmyk}{0, 0.7808, 0.4429, 0.1412}
\definecolor{mygray}{gray}{0.6}
\newcommand\here[1]{\fcolorbox{red}{red}{\rule{0pt}{6pt}\rule{6pt}{0pt}}\quad}
\newcommand{\Efunc}[1]{\mathbb{E}\left[ #1\right]}
\newcommand{\Efuncc}[2]{\mathbb{E}_{#1}\left[ #2 \right]}
\newcommand{\KLfunc}[2]{\mathbb{KL}\left[ \ #1 \ \middle|\middle| \ #2 \ \right]}
\newcommand{\Trfunc}[1]{\text{Tr}\left[ #1 \right]}
\newcommand{\tr}[1]{\text{Tr}\left[ #1 \right]}
\newcommand{\drift}{\mu(X_t, t)}
\newcommand{\discretizeddrift}{\mu(X_{t_n}, t_n)}
\newcommand{\diff}{\sigma(X_t, t)}
\newcommand{\discretizeddiff}{\sigma(X_{t_n}, t_n)}
\newif\ifshowprompt
\DeclareFontFamily{U}{MnSymbolA}{}
\DeclareFontShape{U}{MnSymbolA}{m}{n}{
	<-6>  MnSymbolA5
	<6-7>  MnSymbolA6
	<7-8>  MnSymbolA7
	<8-9>  MnSymbolA8
	<9-10> MnSymbolA9
	<10-12> MnSymbolA10
	<12->   MnSymbolA12}{}
\DeclareFontShape{U}{MnSymbolA}{b}{n}{
	<-6>  MnSymbolA-Bold5
	<6-7>  MnSymbolA-Bold6
	<7-8>  MnSymbolA-Bold7
	<8-9>  MnSymbolA-Bold8
	<9-10> MnSymbolA-Bold9
	<10-12> MnSymbolA-Bold10
	<12->   MnSymbolA-Bold12}{}
\DeclareSymbolFont{MnSyA}{U}{MnSymbolA}{m}{n}
\DeclareRobustCommand{\overleftharpoon}{\mathpalette{\overarrow@\leftharpoonfill@}}
\DeclareRobustCommand{\overrightharpoon}{\mathpalette{\overarrow@\rightharpoonfill@}}
\def\leftharpoonfill@{\arrowfill@\leftharpoondown\mn@relbar\mn@relbar}
\def\rightharpoonfill@{\arrowfill@\mn@relbar\mn@relbar\rightharpoonup}
\newcommand{\forward}[1]{\smash{\overrightharpoon{#1}}}
\newcommand{\backward}[1]{\smash{\overleftharpoon{#1}}}
\DeclareMathSymbol{\leftharpoondown}{\mathrel}{MnSyA}{'112}
\DeclareMathSymbol{\rightharpoonup}{\mathrel}{MnSyA}{'100}
\DeclareMathSymbol{\mn@relbar}{\mathrel}{MnSyA}{'320}
\renewcommand{\P}{\mathbbm{P}}
\newcommand{\E}{\mathbbm{E}}
\newcommand{\R}{\mathbb{R}}
\DeclareRobustCommand{\cev}[1]{%
	{\mathpalette\do@cev{#1}}%
}
\newcommand{\do@cev}[2]{%
	\vbox{\offinterlineskip
		\sbox\z@{$\m@th#1 x$}%
		\ialign{##\cr
			\hidewidth\reflectbox{$\m@th#1\vec{}\mkern4mu$}\hidewidth\cr
			\noalign{\kern-\ht\z@}
			$\m@th#1#2$\cr
		}%
	}%
}
\newcommand{\subsetcong}{\mathrel{\mathpalette\subset@cong\relax}}
\newcommand{\subset@cong}[2]{%
	\vbox{\offinterlineskip\m@th
		\ialign{\hfil$#1##$\hfil\cr
			\sim\cr\subset\cr
		}%
	}%
}
\begin{document}

% ---------------------------------------------------------------
\frontmatter
\thispagestyle{empty}
\begin{center}

% \vspace*{1.4cm}
% {\LARGE \textbf{Technische Universität Berlin}}

% \vspace{2cm}

% {\large Information Systems Engineering\\[1mm]}

% Fakultät IV\\
% Einsteinufer 17\\
% 10587 Berlin\\
% https://www.tu.berlin/ise\\

\vspace*{1.5cm}

% \includegraphics[width=4cm]{}

% \vspace*{1.0cm}

% {\LARGE Thesis}\\

% \vspace{3.0cm}
{\LARGE \textbf{Time-Reversible Bridges of Data}}\\
\vspace*{0.3cm}
{\LARGE \textbf{with Machine Learning}}\\
\vspace*{1.0cm}
{vorgelegt von}\\
\vspace*{1.0cm}
{\LARGE \textbf{Ludwig Winkler, M. Sc.}}\\

\vspace*{1.0cm}
an der Fakultät IV – Elektrotechnik und Informatik \\
der Technischen Universität Berlin \\
zur Erlangung des akademischen Grades \\
\vspace*{0.5cm}
Doktor der Naturwissenschaften\\
– Dr. rer. nat. – \\
\vspace*{0.5cm}
genehmigte Dissertation
\vspace*{1.0cm}
% \vspace*{0.5cm}
% Matriculation Number: 1234567\\
% 01.01.2010\\ % 	date of submission
\vspace*{\fill}

\begin{flushleft}
Promotionsausschuss:

\vspace{0.5cm}

{\hspace*{-0.35cm}
\begin{tabular}{ll}
Vorsitzender: & Prof. Benjamin Blankertz \\
Gutachter: & Prof. Klaus-Robert M\"uller \\
Gutachter: & Prof. Manfred Opper \\
Gutachter: & Prof. Sebastian Reich \\
Gutachter: & Prof. Wojciech Samek \\
\end{tabular}
}
\vspace{0.5cm}

Tag der wissenschaftlichen Aussprache: 18. November 2024
\end{flushleft}

\vspace*{\fill}

Berlin 2024

\end{center}

\cleardoublepage
\thispagestyle{empty}
\vspace*{\fill}
\begin{center}
  \emph{To my little entropy,}
  \newline
  \emph{and everyone who had the patience to explain it to me,}
  \newline
  \emph{then once more, and finally again a third time}
\end{center}
\vspace*{\fill}

\newpage

% \begin{center}
%   \textbf{Acknowledgements}
% \end{center}

\thispagestyle{empty}
\cleardoublepage

\newpage

\thispagestyle{empty}

\begin{large}

	\vspace*{6cm}

	\noindent
	Hereby I declare that I wrote this thesis myself with the help of no more than the mentioned literature and auxiliary means.
	\vspace{2cm}

	\noindent
	Berlin, 01.08.2024

	\vspace{3cm}

	\hspace*{7cm}%
	\dotfill\\
	\hspace*{8.5cm}%
	\textit{(Signature)}

\end{large}

\thispagestyle{empty}
\cleardoublepage

\thispagestyle{empty}
% \vspace*{1.0cm}

\begin{center}
    \textbf{Abstract}
\end{center}

% \vspace*{0.5cm}

\noindent
The analysis of dynamical systems is a fundamental tool in the natural sciences and engineering.
It is used to understand the evolution of systems as large as entire galaxies and as small as individual molecules.
With predefined conditions on the evolution of dynamical systems, the underlying differential equations have to fulfill specific constraints in time and space.
This class of problems is known as boundary value problems.

This thesis presents novel approaches to learn time-reversible deterministic and stochastic dynamics constrained by initial and final conditions.
The dynamics are inferred by machine learning algorithms from observed data, which is in contrast to the traditional approach of solving differential equations by numerical integration.

The work in this thesis examines a set of problems of increasing difficulty each of which is concerned with learning a different aspect of the dynamics.
Initially, we consider learning deterministic dynamics from ground truth solutions which are constrained by deterministic boundary conditions.
Secondly, we study a boundary value problem in discrete state spaces, where the forward dynamics follow a stochastic jump process and the boundary conditions are discrete probability distributions.
In particular, the stochastic dynamics of a specific jump process, the Ehrenfest process, is considered and the reverse time dynamics are inferred with machine learning.
Finally, we investigate the problem of inferring the dynamics of a continuous-time stochastic process between two probability distributions without any reference information.
Here, we propose a novel criterion to learn time-reversible dynamics of two stochastic processes to solve the Schrödinger Bridge Problem.

In summary, we show that neural networks are a flexible function class to learn deterministic and stochastic dynamics of systems under the constraints of boundary conditions given by data.
Importantly, they are able to infer the dynamics of systems where the underlying differential equations are unknown or intractable.
The corresponding methodology can be applied to a wide range of problems in computer science, chemistry, and biology, providing a novel tool set for these fields.

\thispagestyle{empty}
\cleardoublepage

\thispagestyle{empty}

\begin{center}
    \textbf{Zusammenfassung}
\end{center}

Die Analyse dynamischer Systeme ist ein grundlegendes Instrument in den Natur- und Ingenieurwissenschaften und wird verwendet, um die Entwicklung von Systemen zu verstehen, die so groß wie ganze Galaxien und so klein wie einzelne Moleküle sind.
Bei vordefinierten Bedingungen für die Entwicklung dynamischer Systeme müssen die zugrundeliegenden Differentialgleichungen bestimmte Bedingungen in Zeit und Raum erfüllen.
Diese Klasse von Problemen wird als Rand\-wertprobleme bezeichnet.

In dieser Arbeit werden neue Ansätze zum Erlernen zeitlich reversibler deterministischer und stochastischer Dynamiken vorgestellt, die durch Anfangs- und End\-wertbedingungen eingeschränkt sind.
Die Dynamik wird durch Algorithmen des maschinellen Lernens ermöglicht, welche im Gegensatz zum traditionellen Ansatz der Lösung von Differentialgleichungen durch numerische Integration steht.

% In dieser Arbeit wird eine Reihe von Problemen mit steigendem Schwierigkeitsgrad untersucht, bei denen jeweils das Lernen partikulärer Aspekte von Dynamiken von verschiedenen Systems behandelt werden.
Zunächst befassen wir uns mit dem Lernen der deterministischen Dynamik aus deterministischen Lösungen, die durch deterministische Randbedingungen einge\-schränkt sind.
Zweitens untersuchen wir ein Randwertproblem in diskreten Zustandsräumen, in denen die Vorwärtsdynamik einem stochastischen Sprungprozess folgt und die Randbedingungen diskrete Verteilungen sind.
Insbesondere wird die stochastische Dynamik eines speziellen Sprungprozesses, des Ehrenfest-Prozesses, betrachtet und die Rückwärtsdynamik mit maschinellem Lernen inferriert.
Als letztes untersuchen wir das Problem des Modellierens eines zeitkontinuierlichen stochastischen Prozesses zwischen zwei Wahrscheinlichkeits\-verteilungen ohne jeg\-liche Referenzinformationen.
Hier schlagen wir ein neuartiges Kriterium zum Erlernen der zeitlich umkehrbaren Dynamik zweier stochastischer Prozesse vor.

Zusammenfassend zeigen wir, dass neuronale Netze eine flexible Funktionsklasse zum Erlernen der deterministischen und stochastischen Dynamik von Systemen unter den Einschränkungen der durch die Daten gegebenen Randbedingungen sind.
Besonders wichtig ist, dass sie in der Lage sind, die Dynamik von Systemen abzuleiten, deren zugrunde liegende Differentialgleichungen unbekannt oder numerisch teuer zu lösen sind.
Die entsprechende Methodik kann auf ein breites Spektrum von Problemen in der Informatik, Chemie und Biologie angewandt werden und bietet neuartige Lösungsansätze für diese Anwendungsbereiche.

\thispagestyle{empty}

\pdfbookmark[section]{\contentsname}{toc}
\tableofcontents
\thispagestyle{empty}

% \listofpropositions

\listoffigures
\thispagestyle{empty}

\listoftables
\thispagestyle{empty}

% \glsaddall
% \addchap{Glossary}
% \input{./misc/glossary}
% --------------------------------------------------------------

\mainmatter % comment single chapters for faster compilation

\chapter{Introduction}
\label{cha:chapter1}
\graphicspath{{./img/chapterintro}}

The intersection of machine learning and traditional computational methods presents a fertile ground for innovation.
In particular, the role of neural networks has become increasingly prominent in various fields of science and technology.
Initially conceived as computational models loosely inspired by the human brain, neural networks have undergone a series of transformative advances in capabilities in the last decade.
While their early iterations were limited by both computational resources and theoretical understanding, significant progress has been made since then on both fronts.
Recent advances in neural network architectures, coupled with substantial increases in computational power and the availability of large datasets, have extended the utility of these models far beyond their original scope \cite{lecun2015deep, alzubaidi2021review}.

As with other technological advances throughout history, neural networks have become a catalyst for innovation, rapidly diffusing across various fields as a powerful tool to solve computational problems and giving rise to a diverse array of applications.
Following their proven track record in areas such as machine perception, i.e. image and speech recognition, \cite{nassif2019speech, chai2021deep}, they swiftly found new and impactful applications among others in solving differential equations, further broadening their influence and utility \cite{beck2020overview, chen2018neural,richter2022robust,richter2023improved,berner2022optimal}.

Differential equations are the foundations upon which we understand a wide range of phenomena in the natural world, from the movement of individual atoms in molecules to the trajectories of enormous galaxies through faraway space \cite{hartmann1964ordinary, betounes2010differential, oksendal2003stochastic}.
Differential equations describe the infinitesimal changes of a system over time and their solution is a function that satisfies these equations.
However, the complexity inherent in these equations, particularly when dealing with systems characterized by high dimensionality, non-linearity, or stochasticity, poses significant challenges for traditional analytical and numerical methods \cite{braun1983differential, butcher2016numerical}.
Of particular concern to this thesis, two limiting factors to solving differential equations can be highlighted: the need for \emph{explicit knowledge of the underlying dynamics} and the \emph{computational cost of solving high-dimensional systems}.

Commonly, differential equations are solved from an initial condition, which is then propagated forward in time to predict the system's behavior at a future time.
But what happens if we require the system to also fulfill specific conditions at a later point in time?
This is referred to as a \emph{boundary value problem} which requires solving a differential equation subject to constraints on the possible solutions of the underlying differential equation \cite{gakhov1990boundary}.
In the context of this thesis, we aim to infer the differential equation of dynamical systems while requiring the corresponding solution to fulfill specific conditions at the initial and final time of the solution.

% This approach is fundamental in control theory and optimization, such as trajectory optimization in aerospace, where final conditions are as vital as initial ones.
% Boundary value problems frequently enhance the stability and accuracy of numerical solutions by offering additional constraints that reduce error accumulation.
% They are crucial in inverse problems, specifically in geophysics and medical imaging, where internal properties are inferred from surface measurements.
% The ability to solve BVPs also enhances the theoretical understanding of differential equations, resulting in a more comprehensive mathematical theory and expanding the range of their applications.

The stochastic counterpart to boundary value problems are stochastic bridges \cite{chen2015stochastic, vargas2021solving, berner2022optimal}, which provide probabilistic boundary conditions that constrain the solution of the underlying stochastic dynamics.
Here the dynamics are modeled by stochastic differential equations which provide a framework to describe the evolution of random processes over time \cite{chen2016modeling, oksendal2003stochastic, oksendal2013stochastic, gardiner1985handbook}.
Stochastic bridges assume a single overarching stochastic process in both directions.

Yet, fixing one direction of the bridge to a predetermined, tractable, and time-reversible stochastic diffusion process \cite{nelson1966derivation, nelson1967dynamical, nelson1988stochastic, nelson1979connection,anderson1982reverse}, allows us to reduce the complexity of the inference problem.
Now, the problem can be posed as learning the reverse dynamics of a known process.
The advantages this modeling ansatz has led to a series of breakthroughs in generative modeling \cite{cao2024survey, yang2023diffusion, song2023consistency,ho2020denoising, song2020denoising}.

Obtaining analytical solutions to differential equations is often infeasible for a wide range of problems of interest, especially in high-dimensional spaces, and numerical methods are required to approximate the solution \cite{butcher2016numerical,braun1983differential}.
The computational cost of numerical solutions poses one problem that neural networks can alleviate by learning the dynamics with a smaller computational footprint.
More profoundly, neural networks can infer previously unknown dynamics from data, thus accelerating the solution process by bypassing the need for explicit knowledge of the actual data generating process, making them an attractive tool for solving differential equations \cite{raissi2018deep, berner2020numerically, richter2022robust}.
By leveraging the flexibility and adaptability of neural networks, it is possible to learn the underlying dynamics of a system purely from data in an \emph{approximate} manner, thus preventing the need for explicit knowledge of the actual data generating process.
% This approach naturally fits the paradigm of data-driven science, where the focus is on learning from data.
% Moreover, the ability to learn from data makes neural networks particularly well-suited for solving differential equations in high-dimensional spaces, where traditional methods often struggle.
% Thus neural networks can be used to infer \emph{unknown} dynamics from data which automatically identify relevant subspaces in the solution spaces thus potentially \emph{accelerating the solution of differential equations}.
% Neural networks therefore present a promising alternative to distill the underlying dynamics of complex systems from data.

% With the ability to connect gaps and deduce hidden dynamics within data sets, BVPs not only enhance our comprehension of intricate systems but also strengthen our capacity to foretell, manipulate, and maximize these systems across varied scientific and pragmatic domains.

\section{Objective and Scope}
\label{sec:objective}

The goal of the research conducted within the context of this thesis is the exploration and development of methodologies for solving stochastic bridges and boundary value problems by integrating the principles of ordinary differential equations, stochastic differential equations, and the advanced computational capabilities of neural networks.
We aim to bridge the gap between traditional mathematical approaches and modern, data-driven computational techniques, creating a framework that leverages the strengths of both domains.
% This thesis aims to address these challenges by employing neural networks as a tool to learn complex dynamics purely from high-dimensional data thus obviating the need for explicit knowledge of the actual data generating process.
% The overarching research question of this thesis considers the methodologies of incorporating neural networks as differential equations into the solution of boundary value problems and stochastic bridges.
Instead of directly applying neural networks as a silver bullet to solve these problems by brute computational force, we aim to understand the underlying dynamics to deduce strong inductive biases and subsequently use neural networks only to infer the essential quantities required to model the identified family of dynamics.

The research conducted for this thesis is divided into three stages, each considering a more complex and challenging problem than the previous one.
We define more challenging as removing an increasing amount of information about the underlying dynamics and requiring the neural network to infer more and more of the missing information about the underlying dynamics.

\begin{figure}[H]
    \centering
    \includegraphics[width=0.6\linewidth]{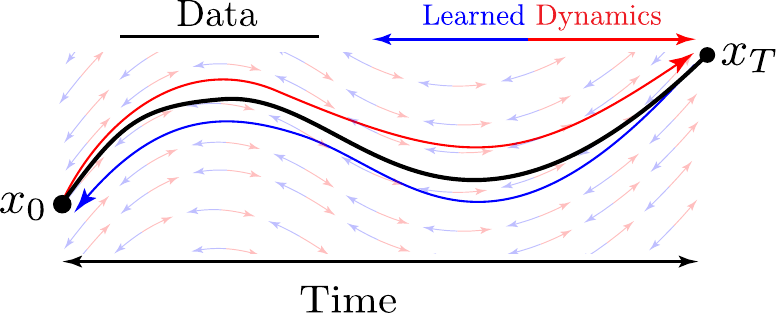} % Example image
    \caption{In \Cref{cha:deterministicbvps} we consider boundary value problems in a deterministic setting in which we learn the time-reversible, deterministic dynamics with neural networks from observed solutions.
        The boundary conditions $x_0$ and $x_T$ are deterministic.}
    \label{fig:introtoy1}
\end{figure}

We can thus condense the problems to be tackled in this thesis into three (very compressed) sentences and their respective illustrative visualizations:
\begin{itemize}
    \item
          Firstly, the thesis focuses on \textbf{deterministic boundary value problems} for which the underlying, time-reversible dynamics are learned from \textbf{provided realizations of the dynamics} (see \Cref{fig:introtoy1}).
    \item
          Secondly, we consider stochastic half-bridges with \textbf{probabilistic boundary conditions} and stochastic forward dynamics in a discrete state space for which the \textbf{unknown, stochastic backward dynamics} have to be inferred(see \Cref{fig:introtoy2}).
    \item
          Finally, we propose a criterion to solve the \textbf{Schr\"odinger Bridge Problem}, a stochastic bridge, for which we have to learn the \textbf{unknown forward and backward stochastic dynamics between two provided probability distributions} serving as stochastic boundary conditions (see \Cref{fig:introtoy3}).
\end{itemize}

\begin{figure}[H]
    \centering
    \includegraphics[width=0.7\linewidth]{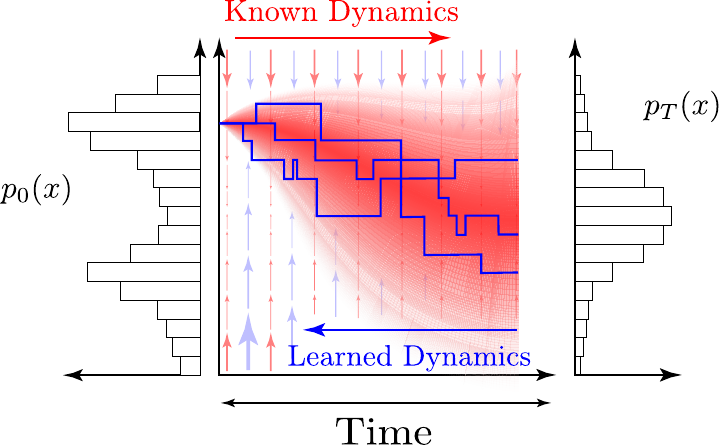} % Example image
    \caption{In \Cref{cha:discretehalfbridge} we consider the learning of stochastic half bridges in discrete state spaces with known forward dynamics and unknown backward dynamics which are inferred with a neural network. The learned dynamics then invert the known dynamics in time.
        The boundary conditions are discrete probability distributions denoted by $p_0(x)$ and $p_T(x)$.}
    \label{fig:introtoy2}
\end{figure}

\begin{figure}[ht]
    \centering
    \includegraphics[width=0.7\linewidth]{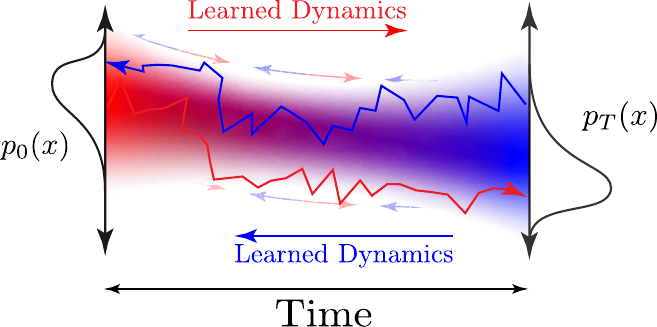} % Example image
    \caption{In \Cref{cha:stochasticbridge} we consider the learning of Schrödinger bridges in continuous state spaces with unknown forward and backward dynamics which are inferred with a neural network.
        Each of the learned dynamics is trained to reverse the opposite dynamics in time between the two probabilistic boundary conditions given by the two continuous distributions $p_0(x)$ and $p_T(x)$.}
    \label{fig:introtoy3}
\end{figure}

% For all three research questions we propose a framework centered around the computational capabilities of neural networks to infer the required dynamics with machine learning.
Of particular interest to us is the concept of \emph{time-reversibility} and how it can be exploited to learn the reverse dynamics of a system from provided dynamics.
For deterministic systems, time-reversibility is a natural property of the underlying dynamics and the reverse dynamics can be derived in a sufficiently convenient way.
For stochastic systems, time-reversibility is a more complex concept and requires a more nuanced approach to revert the inherent diffusion in stochastic processes.

A significant portion of the thesis will be dedicated to the practical implementation of these models with the relevant dynamics approximated with neural network architectures.
As such the focus lies on the applicability of neural networks and how they can be integrated into the relevant mathematical frameworks.
This research aims to contribute to the broader field of computational mathematics by providing insights into how neural networks can be utilized to solve other complex differential equations.
Therefore, this thesis concerns itself with approaches to solving stochastic bridges and boundary value problems by approximating formerly intractable dynamics with neural networks in a substitutive manner.
% In summary, this thesis aims to develop a novel, neural network-based approach to solving stochastic bridges and boundary value problems, offering a powerful alternative to traditional methods.
By combining the rigor of differential equations with the adaptability of neural networks, this research seeks to advance the field of computational mathematics and provide practical solutions to complex stochastic problems.

\section{Outline}

The remainder of the thesis is organized into four chapters.

\textbf{Chapter \ref{cha:fundamentals}}
introduces the theoretical and practical topics deemed necessary to enjoy reading this dissertation.
The concepts can be broadly differentiated into two groups: ordinary differential equations for deterministic dynamical systems and stochastic differential equations for probabilistic systems.

\textbf{Chapter \ref{cha:deterministicbvps}}
examines the application of neural networks as learnable deterministic dynamics and showcases how recurrent neural network architectures can be used to speed up molecular dynamics simulations.
These architectures are successfully trained to reconstruct deterministic bridges between the initial and final point of the evolution of the underlying deterministic dynamical system.

\textbf{Chapter \ref{cha:discretehalfbridge}}
relaxes the strictly deterministic setting and introduces the concept of stochastic half bridges.
The chapter concerns itself with discrete state spaces in which the forward dynamics between two discrete probability distributions are known and the backward dynamics are learned.
For ordered discrete state spaces with vanishing differences between states, we show that the learned backward dynamics converge to a continuous state space reverse process.

\textbf{Chapter \ref{cha:stochasticbridge}} introduces the Schr\"odinger Bridge Problem.
Only the boundary conditions in the form of two probability distributions are provided and both the forward and backward dynamics have to be learned from the data.
This represents the most general formulation of which the deterministic bridges and the stochastic half bridges are special cases.

\textbf{Chapter \ref{cha:conclusion}} summarizes the thesis, discusses the problems that were posed, and gives an outlook about future work.

\section{List of Publications}

Significant parts of this thesis follow published or submitted work in peer-reviewed journals or conferences.
Therefore I would like to thank my Co-Authors for allowing me to use parts of our joint work for this thesis.
The primary contributions and ﬁndings of this thesis are based on the following peer-reviewed publications:

\begin{itemize}
    \item \fullcite{winkler_2022}
    \item \fullcite{winkler2024ehrenfest}
    \item \fullcite{winkler_2023}
\end{itemize}

Other publications that are not directly related to the content of this thesis but have been published during my PhD studies are:

\begin{itemize}
    \item \fullcite{studer2021towards}
    \item \fullcite{winkler2022stochastic}
    \item \fullcite{wang2023interpolating}
    \item \fullcite{vaitl2024fast}
\end{itemize}
\chapter{Theoretical Context and Fundamentals}
\label{cha:fundamentals}

\begin{tcolorbox}[colback=gray!10!white, colframe=black]
      Parts of this chapter are constituted from the publications:
      \begin{itemize}
            \item \fullcite{winkler_2022}
            \item \fullcite{winkler_2023}
            \item \fullcite{winkler2024ehrenfest}
      \end{itemize}
\end{tcolorbox}

\section{Neural Networks}

In the realm of computational intelligence, neural networks play a critical role due to their impressive capacity to learn from data and adapt to a diverse range of issues.
Over the last decade, neural networks have experienced notable advancements, broadening their impact beyond previously conventional applications of machine learning systems.

The architectures of these networks have undergone profound changes with the advent of deep learning, resulting in models that are deep and wide enough to capture complex patterns in data.
Innovations like convolutional neural networks (CNNs) and Recurrent Neural Networks (RNNs) have become increasingly nuanced and specialized \cite{lecun2015deep}.
Attention mechanisms and Transformer models have established novel benchmarks in sequence modeling tasks, impacting domains ranging from natural language processing to time-series analysis \cite{vaswani2017attention, GoogleNMT, bahdanau2014neural}.
With the development of these architectures serving as trailblazers, their application and machine learning in general has expanded to a wide array of fields, including computer perception, natural language processing, and protein folding \cite{he2015delving,radford2019language,jumper2021highly, studer2021towards, dotenco2016automatic, mullan2015unobtrusive}.

The computational power needed to train sizable neural networks has grown in tandem with architectural intricacies.
Advancements in graphics processing unit (GPU) technology, distributed computing, and specialized hardware have greatly decreased the time needed to train and infer complex models.
Experimental insights into model performance have revealed significant gains in performance by scaling up the size of neural networks, leading to the development of models with billions of parameters.
Naturally, the data requirements for training these models have also increased, necessitating the development of more efficient data collection and preprocessing techniques \cite{brown2020language, radford2019language, touvron2023llama}.
The biggest impact of these scaling laws has been observed in natural language processing, where a series of models like GPT-4, Gemini, and Llama have demonstrated remarkable capabilities in understanding and generating human-like text \cite{achiam2023gpt,radford2019language, team2023gemini, touvron2023llama}.

Algorithmic enhancements in training methods, including advanced optimization techniques, regularization methods, and novel activation functions, have addressed issues of convergence and overfitting \cite{talbi2020optimization, sun2019optimization}.
For example, the incorporation of batch normalization and residual connections has enabled the training of networks that are significantly deeper than previously achievable \cite{ioffe2015batch, srivastava2014dropout}.

Neural networks have progressed beyond the mere recognition of patterns in images and speech. 
They can now tackle more complex cognitive tasks, such as reasoning, planning, and generating human-like text.
This cognitive leap is best demonstrated by large language models, which now exhibit the ability to produce and comprehend natural language \cite{touvron2023llama, radford2019language, achiam2023gpt, team2023gemini}.

In scientific computing, neural networks are increasingly employed to model intricate systems described by differential equations.
The ability of neural networks to approximate functions and derivatives makes them useful in solving forward and inverse problems related to differential equations \cite{berner2022optimal,richter2023improved, beck2020overview}.
This has greatly expanded research opportunities and practical applications, especially in fields such as physics, chemistry, and systems biology \cite{unke2021machine,sgdml_bookAppl,chmiela2017machine}.

% One notable development is the emergence of Physics-Informed Neural Networks (PINNs), which incorporate physical laws into the learning process.
% Neural networks are trained to satisfy the governing differential equations as well as the available data, making them highly effective in simulating physical processes and discovering new scientific knowledge.

% In the case of stochastic differential equations, which are crucial for modeling systems affected by random phenomena, neural networks offer a promising alternative to conventional numerical methods.
% Learning probabilistic dynamics directly from data provides a tool for understanding complex stochastic processes in finance, climate modeling, and neuroscience.

Although significant progress has been made, ongoing research addresses challenges in interpretability, robustness, and computational cost.
Efforts to enhance training algorithms' efficiency, minimize data requirements of models, and improve generalization dominate neural network research.

Future directions seek to seamlessly integrate domain expertise into neural network architectures, develop more sophisticated models of uncertainty, and establish frameworks that can optimize the strengths of neural networks in generalizing with the precision of exact methodologies.

% Neural networks are a crucial tool for modern data scientists, with broad applications ranging from analyzing complex datasets to deciphering the mysteries of differential equations.
% The maturing field of neural networks remains dynamic and full of innovation, providing both opportunities and challenges for researchers.
% As we further explore the capabilities and applications of neural networks, they are poised to stay at the forefront of technology and science, expanding the limits of what is computationally achievable.

\section{Ordinary Differential Equations}
\label{sec:ode}

In general, a differential equation is an equation that relates one or more unknown functions and their derivatives.
An ordinary differential equation is a differential equation that involves only one independent variable, typically denoted as time, and its derivatives of the dependent variable, usually represented by space \cite{hartmann1964ordinary, braun1983differential}.
% The dependency in ordinary differential equation is realized by relating the change in the dependent variable to the independent variable and the dependent variable itself by expressing the change in the variable with a function which takes the variable and the independent variable as arguments.
Intuitively, an ordinary differential equation (ODE) describes how a function evolves over time, capturing the rate of change of the function at each point in time.

Mathematically, an ODE for the dependent variable $x_t$ and the independent time variable $t$ is expressed as
\begin{equation}
      \frac{dx_t}{dt} = f(t, x_t).
\end{equation}

Here, the derivative \( \frac{dx_t}{dt} \) signifies the rate of change of the function $x_t$, and $f(t, x_t)$ is a function that describes how the rate of change is influenced by \( t \) and $x$ itself \cite{hartmann1964ordinary}.
The `order' of an ODE is determined by the highest derivative present in the equation.

ODEs are ubiquitously employed across the sciences and engineering to model dynamic systems evolving over time.
From the oscillations of a simple pendulum to the complexities of celestial mechanics, ODEs provide a framework for understanding natural phenomena.

Solving an ODE means finding a function, or a set of functions, that satisfies the differential equation for all times $t$ and state values $x_t$.
There are two primary approaches: analytical and numerical.
Analytical solutions offer a closed-form expression, involving classical methods for finding solutions to ODEs. However, many ODEs, particularly non-linear ones, cannot be solved analytically \cite{hartmann1964ordinary}.

When dealing with an initial value problem (IVP), we include the additional condition $ x_0$ at the initial time $t=0$.
Solving an IVP involves finding a solution \( x_t \) that satisfies both this initial condition and its differential equation for all values of time $t$ \cite{hartmann1964ordinary,braun1983differential,dormand1980family}.
The value of $x_{t + \Delta t}$ with the initial condition $x_t$ can then be computed by solving the integral
\begin{align}
      x_{t + \Delta t} = x_t + \int_t^{t + \Delta t} f(x(\tau), \tau) d\tau.
\end{align}

\begin{figure}[htb]
      \centering
      \includegraphics[width=\textwidth]{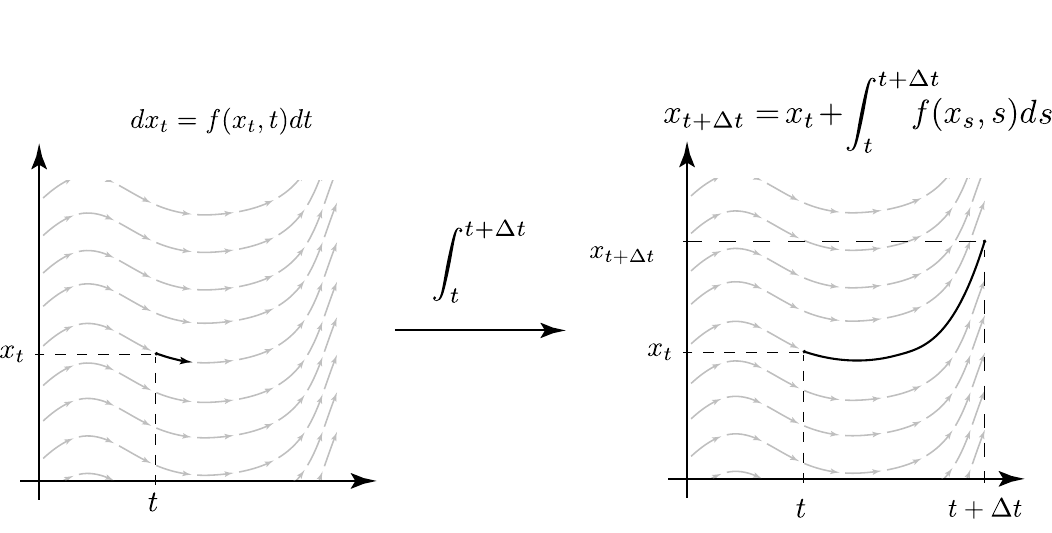}\\
      \caption{A figurative visualization of solving an ordinary differential equation.}\label{fig:intro}
\end{figure}

Obtaining the integral of the differential equation is a nontrivial problem as most differential equations take on complicated forms, thus rendering their integral even more complicated \cite{braun1983differential}.
Thus one resorts to numerical methods to solve ODEs by approximating the integral over time with a series of discrete steps in time.
One common numerical approach involves starting from $x_0$ and iteratively constructing the trajectory $x_t$ forward or backward in time, using a series of approximations.
Numerical methods provide approximate solutions and are essential when analytical solutions are unobtainable.
Techniques like the Euler method and Runge-Kutta methods are widely used to compute approximate solutions of ODEs \cite{betounes2010differential,butcher2016numerical,braun1983differential}.

These numerical methods work iteratively by tracing a trajectory through the state space of $x_t$, repeatedly evaluating and adding the time differential $f(x_t, t)$ to the respective state $x_t$.
Importantly, these evaluations occur at discrete points in time and thus are a numerical approximation.
The Euler method uses a predefined step size $\Delta t$ to evaluate the differential equation at a series of discrete points in time $t_n$ \cite{butcher2016numerical,dormand1980family},
\begin{align}
      x_{t_{n+1}} = x_{t_n} + f(x_{t_n}, t) \Delta t.
\end{align}
Choosing $\Delta t$ sufficiently small and repeatedly computing the recursive formula provides an approximate solution to the true solution of the differential equation.
The Euler method assumes that the differential $f(x_t, t)$ remains constant in the time frame $[t, t+\Delta t]$.
For nonlinear or rapidly changing differential equations this is not guaranteed and the solution of the Euler method can accumulate large errors.

To mitigate these numerical issues, higher order solvers such as the Runge-Kutta method have been proposed which evaluate the differential equation at multiple distinct points in the time frame $[t, t+\Delta t]$ \cite{butcher2016numerical,dormand1980family, butcher1996history}.
In cases where higher-order solvers fail, adaptive solvers can be used which adjust the step size $\Delta t$ based on various heuristics.
These methods belong to the family of explicit solvers which compute the future state of the system from the current state \cite{lambert1991numerical, butcher1996history}.
Implicit solvers compute the state of the system at a later time by solving an equation that involves both the current and the next state in order to provide numerical stability \cite{pareschi2000implicit}.
A large set of these solvers adapt their step size based on the local error of the solution, which is computed by comparing the solution of two different step sizes.
For ODE's which exhibit rapid changes for even small changes in time, these solvers resort to infeasible small step sizes, which makes computing their solution hard to attain in practical time \cite{braun1983differential,pareschi2000implicit,dormand1980family}.

% Despite the established study of ODEs, numerous unsolved problems and active research areas remain.
% Challenges include the existence and smoothness of solutions to the Navier-Stokes equations and the study of chaotic systems, which are highly sensitive to initial conditions.

\subsection{Reverse Time Ordinary Differential Equations}

Typically, ODEs are solved forward in time to project future states from known initial conditions.
However, the methodology of solving ODEs backward in time, known as backward time integration, is equally feasible.
This technique involves deducing a system's past states $x_t$ for $0 \leq t \leq T$ from its known final state $x_T$.

In backward time integration, the focus shifts to starting from a known final condition $x_T$ at time $T$ and integrating backward to determine $x_t$ for $t \leq T$ \cite{hartmann1964ordinary, chen2018neural,kolmogorov1938analytic}.
To obtain the reverse-time ODE we first define the reverse time index $\tau(t) = T - t$.
We can directly see that $d\tau = - dt$ by differentiating both sides.
Naturally, we can invert and equally express the original time index in terms of the reverse time index as $t(\tau) = \tau^{-1}(t) = T - \tau$.
Applying the chain rule to the time derivative of $x_t$ in terms of $\tau$ we obtain
\begin{align}
      \frac{d x_t}{d\tau} \Big|_{t=t(\tau)}
      = & \frac{d x_t}{dt}\Big|_{t=t(\tau)} \frac{dt(\tau)}{d\tau} \\
      = & f(x_t, t) \frac{d (T - \tau)}{d\tau}                     \\
      = & - f(x_{T - \tau}, T-\tau).
\end{align}

With this result we can equate the change $dx_t$ in terms of our reverse time $\tau$ and vice-versa by using $d\tau = - dt$,
\begin{align}
      dx_{T-\tau} = & - f(x_{T-\tau}, T-\tau) d\tau \\
      dx_t =        & f(x_t, t) dt
\end{align}

The reverse time ODE can be expressed in terms of the forward time ODE by changing the sign and defining the time progression in terms of the reverse time index $\tau$.
For this reason, the same precautions have to be taken and the same solvers can be applied to reverse time ODE which are applied to their forward time equivalents.

\section{Stochastic Processes}

Stochastic processes are crucial when mathematically modeling dynamic systems impacted by randomness.
Concretely, a stochastic process consists of random variables $\{X_t\}_{0 \leq t \leq T}$, where $X_t$ represents the system's state at a particular time $t$, and the index $t$ typically indicates time \cite{van1992stochastic,kolmogorov1938analytic,viniotis1998probability}.
This index set can be discrete, $t = \{0, 1, 2, \ldots\}$, or continuous, $t = [0, T)$.

A stochastic process is defined using a probability space $(\Omega, \mathcal{F}, \mathbb{P})$, where $\Omega$ represents the sample space, $\mathcal{F}$ the sigma-algebra of events, and $\mathbb{P}$ the probability measure \cite{van1992stochastic,kolmogorov1938analytic,viniotis1998probability}.
Each path, or instance, of the process, represents a possible realization of the random variables $\{ X_t\}_{0 \leq t \leq T}$ \cite{parzen1999stochastic}.

The classification of stochastic processes depends on the characteristics of their time index and state space.
In discrete-time processes (e.g. Markov chains), the advancement occurs at discrete and separate time intervals, which render them appropriate for models that require repeating observations at discrete time steps \cite{hastings1970monte}.
Conversely, continuous-time processes, such as Brownian motion, are defined for every moment within a time frame and are applied to model phenomena that present continuous fluctuations \cite{van1992stochastic,parzen1999stochastic}.
Processes in continuous time are further classified into continuous state spaces, modeled by stochastic differential equations, and discrete state spaces, modeled using continuous-time Markov chains (CTMC) \cite{privault2013understanding,privault2022introduction}.

Markov processes are a subset of stochastic processes distinguished by the Markov property: $\mathbb{P}(X_{t+1} | X_t) = \mathbb{P}(X_{t+1}| X_1, \ldots, X_t)$, for all $t=\{1, \ldots, t-1, t, t+1\}$ throughout the state space.
This feature implies that the future state depends exclusively on the current state, resulting in a \emph{memoryless} procedure.
% Discrete-time Markov chains and continuous time Brownian motion exemplify Markov processes.
The Brownian motion is a classic stochastic process commonly used to simulate events like fluctuations in stock prices and particle movements.
In its simplest form, the random walk is defined as a sequence of independently and identically distributed random variables.
The sum of these variables determines the process's position at each time step.

Another prominent discrete process is the Poisson process, characterized by independent events over non-overlapping intervals and Poisson distribution \cite{privault2022introduction,privault2013understanding}.
It is frequently used to model count data over time, for instance, call arrivals in a network or vehicles passing a checkpoint.

% Brownian motion is a continuous-time process with paths that are almost certainly continuous and possess independent, stationary increments that follow a Gaussian distribution with zero mean and a variance of $t$. This process is of fundamental importance in both physical sciences and financial mathematics.

Stochastic differential equations (SDEs) and the Fokker-Planck Equation (FPE) frameworks play vital roles in modeling, analyzing and understanding stochastic processes \cite{risken1996fokker,gardiner1985handbook}.
While stochastic differential equations describe how stochastic processes evolve at the level of individual sample paths or trajectories, the Fokker-Planck Equation (FPE) offers a complementary perspective by focusing on the distributional evolution of these processes.
The FPE is derived from an SDE and elucidates how the probability density function (PDF) of the process's state variable changes over time.

The FPE facilitates the statistical analysis of a stochastic process, offering insights into the system's probabilistic behavior and distributional properties across time.
It enables us to examine the likelihood of different states of the system and understand its long-term, macroscopic behavior, including the presence of steady-state distributions or the process's ergodic features.
While stochastic differential equations provide a framework for modeling the path-wise behavior of a process, the Fokker-Planck equation helps us understand the aggregate evolution of the process.
This dual characterization of a stochastic process is particularly beneficial in intricate systems, where analyzing individual trajectories directly proves difficult or when more emphasis is placed on the statistical properties as opposed to specific paths.
% Together, these principles constitute an all-encompassing structure for examining stochastic processes, aiding in the comprehension of random system influences and fostering practical usage within various domains, spanning from economics and finance to biology and physics.

\subsection{Stochastic Differential Equations}

Stochastic Differential Equations (SDEs) extend the concept of Ordinary Differential Equations (ODEs) to incorporate randomness or noise within dynamic systems \cite{oksendal2003stochastic,gardiner1985handbook}.
The inclusion of randomness or noise factors in SDEs allows for the modeling of more chaotic, complex, and unpredictable systems.
They are represented mathematically as Ito drift-diffusion processes \cite{ito1951stochastic, ito1984introduction}:
\begin{align}
      dX_t = \drift \, dt + \diff \, dW_t.
      \label{eq:SDE}
\end{align}
where $\{X_t\}_{0 \leq t \leq T}$ represents the state variable at time $t$, $f(X_t, t)$ is the deterministic drift component, $\sigma(X_t, t)$ denotes the diffusion term modeling the random fluctuations, and $dW_t$ refers to a differential of a Wiener process (Brownian motion), capturing the stochastic behavior of the system.

A Wiener process is a stochastic process with the initial condition of $W(0)=0$ and has independent Gaussian increments.
Thus a Wiener process is Gaussian distributed random variable with variance $u$, $W(t+u) - W_t \sim \mathcal{N}(0, u)$ for which the variance of the random variable is dependent on the elapsed time $u$.
Practically, a Wiener process can be approximately simulated for a fixed step size $\Delta t$ as $\Delta W_t = \epsilon \sqrt{\Delta t}, \epsilon \sim \mathcal{N}(0,1)$.

\begin{figure}[htb]
      \centering
      \includegraphics[width=\textwidth]{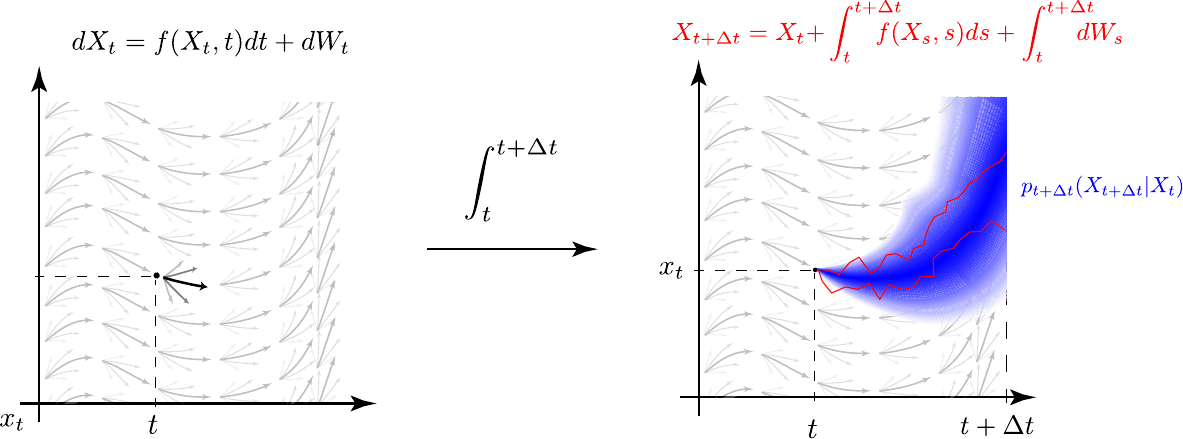}\\
      \caption{A figurative visualization of solving an Ito drift-diffusion process.
            Integrating the deterministic dynamics $f(x, t)$ and the stochastic Wiener process $dW_t$ generates a conditional probability distribution $p_{t+ \Delta t}(x_{t+\Delta t} | x_t)$ over subsequent states $X_t$.}
      \label{fig:SDE}
\end{figure}

The solution to an SDE, the random variable $X_t$, is itself a stochastic process that models the evolution of a system under both deterministic trends and random influences and describes the distribution of state variables \cite{van1992stochastic,gardiner1985handbook,kolmogorov1938analytic}.
Importantly, the inclusion of a random variable, modeled by $\sigma(X_t, t) dW_t$ forces us to consider not a single deterministic solution, but a probability distribution of solutions.

Numerical solutions often use the Euler-Maruyama and Milstein methods, which are stochastic counterparts to deterministic methods used for ODEs \cite{oksendal2013stochastic,parzen1999stochastic,gardiner1985handbook,sarkka2019applied}.
These involve discretizing the time domain and solving the SDE iteratively to construct solution paths to estimate the probability distribution of outcomes.
Similarly to earlier, we proceed by discretizing the time axis into distinct points in time $t_n$ and recursively compute the solution as
\begin{align}
      X_{t_{n+1}}
      = & X_{t_n} + \discretizeddrift \Delta t + \discretizeddiff \ \epsilon \sqrt{\Delta t}.
\end{align}

In the case of ordinary differential equations, more advanced solvers beyond the Euler discretization like the Runge-Kutta methods are used to solve the differential equation.
For stochastic differential equations, the inherent randomness present in the Wiener process makes such advanced solvers hard to apply as the error of the Wiener process is not deterministic \cite{sarkka2019applied,oksendal2003stochastic}.
In fact, the derivative of a Wiener process in the limit of $\Delta t \rightarrow 0$ has infinite variance, which makes the development of higher order and adaptive solvers more intricate.
Therefore, the Euler-Maruyama method is the most common method used to solve SDEs, as it is simple to implement and provides a good approximation of the true solution \cite{sarkka2019applied,oksendal2003stochastic}.

Integrating an Ito drift-diffusion process numerically realizes a sample path of the stochastic process, providing a trajectory of the system's state over time.
As the Wiener process introduces randomness, each realization of the process will differ, reflecting the inherent stochasticity of the system.
Therefore, while the numerical solution of an SDE provides a single path, the quantity of interest is in fact the distribution of these paths, each representing a possible realization of the system's evolution which is captured by the Fokker-Planck equation \cite{van1992stochastic,gardiner1985handbook}.
% Stochastic differential equations find application across diverse domains, including finance, where they are a key modeling tool for the stochastic dynamics of markets and the valuation of options.
% SDEs capture inherent randomness in the evolution of species and gene frequencies in ecology and population genetics.
% In physics, they are fundamental to describing systems subject to thermal or quantum fluctuations, and in astrophysics, they model the stochastic behavior of celestial bodies.

% Solving stochastic ordinary differential equations poses significant theoretical and computational challenges, particularly in higher dimensions.
% The Fokker-Planck equation, which governs the evolution of the probability density function of the state variable, presents complications.
% Furthermore, parameter estimation within SDEs is complicated due to the stochastic nature of the solutions.
% Ongoing research aims to better understand the long-term behavior of solutions to nonlinear SDEs.

% Moreover, the pursuit of effective algorithms able to manage large-scale systems driven by SDE continues, carrying great implications for real-time applications in areas like control systems and robotics.
% The ongoing development of computational and theoretical methods presents obstacles, but it also offers great potential for better comprehending and predicting a wide range of intricate stochastic phenomena.

\subsection{The Fokker-Planck Equation}

The Fokker-Planck equation (FPE), also referred to as the Kolmogorov forward equation, is an important equation in the field of stochastic processes and statistical physics \cite{risken1996fokker,tabar2019analysis,van1992stochastic}.
It describes the temporal evolution of a particle's probability density function in the presence of both deterministic forces and stochastic perturbations, providing a statistical characterization of the system's behavior over time.

Mathematically, the Fokker-Planck equation is expressed as a partial differential equation (PDE) that governs the evolution of the probability density function \( p_t(x) \) of a particle's position \( x \) at time \( t \) the dynamics of which is governed by the corresponding SDE \eqref{eq:SDE},
\begin{align}
      \frac{\partial p_t(x)}{\partial t} = -\frac{\partial}{\partial x}\left[\mu(x, t) p_t(x)\right] + \frac{1}{2}\frac{\partial^2}{\partial x^2}\left[\sigma(x, t) p_t(x)\right].
\end{align}
where the drift component, $\mu(x, t)$, represents the deterministic aspect of motion and the diffusion term, $\sigma(x,t)$, accounts for random fluctuations usually related to thermal disturbances or noise.
Its derivation can be achieved succinctly from its governing stochastic differential equation via Ito calculus and integration by parts and is provided in its full length in appendix \ref{app:ch2fpederivation} for the interested reader.

% The resulting PDE is widely used in various scientific fields.
% It describes non-equilibrium systems like colloidal dispersions in a solvent or electron dynamics in a plasma within the realm of physics.
% The equation models the probabilistic evolution of market prices and risks in finance and represents the dispersal of cellular populations or the genetic drift in a population in biological sciences.

% The Fokker-Planck equation is a fundamental aspect of studying stochastic processes.
It is an essential tool for understanding the probabilistic dynamics dictated by stochastic differential equations.
Although SDEs adeptly describe the path-dependent behaviors, the Fokker-Planck equation offers a complementary macroscopic perspective.
The methodology involves characterizing the evolution of probability density functions over time, which shifts the focus to a broader ensemble-level view and away from individual sample paths that are characterized by SDEs.

The solution of the Fokker-Planck equation characterizes the probability density function $p_t(x)$ and answers the question of how probable a certain position in space is at a specific time \cite{bogachev2022fokker,risken1996fokker,van1992stochastic,gardiner1985handbook}.
Analytical solutions are available only for a few cases with certain forms of $\mu(x,t)$ and $\sigma(x,t)$ as in linear or quadratic potential fields.

As PDE's in general are non-trivial to solve, numerical methods are frequently used to solve the Fokker-Planck equation \cite{risken1996fokker,van1992stochastic}.
Various techniques like finite difference, spectral, and finite element methods are employed to discretize the equation \cite{gardiner1985handbook}.
This reformulates the PDE into a system of algebraic equations that are easily computable.
Nevertheless, this represents a numerical discretization of the solution which automatically incurs a discretization error in the solution.

\subsection{Jump Processes}
\label{sec: jump process}

\begin{figure}[htbp]
      \centering
      \includegraphics[width=0.8\textwidth]{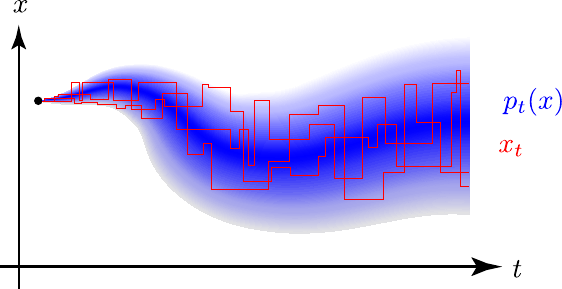}\\
      \caption{A figurative visualization of a continuous-time Markov chain and the corresponding marginal distribution.
            The trajectories move in stochastic, discrete jumps between states, and the holding time between each jump is stochastic. The marginal distribution $p_t(x)$ is a categorical distribution but was visualized in a continuous manner for illustrative purposes.}
      \label{fig:CTMC}
\end{figure}

Jump processes and continuous-time Markov chains (CTMCs) constitute a fundamental area of study in stochastic modeling, particularly in systems exhibiting sudden transitions \cite{berger2012introduction, viniotis1998probability,privault2022introduction,privault2013understanding}.
In essence, jump processes are a type of stochastic process characterized by the ability to model dynamics where systems experience abrupt changes, as opposed to the gradual evolution typical of diffusion processes.
These abrupt changes, or 'jumps', are important for representing real-world phenomena where instantaneous events significantly influence the system's behavior.
The discrete state space of CTMCs makes them particularly suitable for modeling systems where changes can be categorized into distinct, countable states.

Continuous-time Markov chains, a specific class of jump processes, are defined by their discrete state space and memoryless properties in a continuous time domain \cite{van1976expansion, privault2013understanding, privault2022introduction}.
Mathematically, a CTMC is defined over a discrete state space of $S$ states (which can also take on an infinite value), typically denoted as $s \in \{1, 2, \ldots, S \}$.

The evolution of the process is governed by transition probabilities $p_{t+\Delta t|t}(y | x)$ from state $x$ at time $t$ to state $y$ at time $t+\Delta t$ that depend only on the current state, not on the history of the process, embodying the Markov property.
The transition rates in a CTMC are derived by taking limits of the transition probabilities over infinitesimally small time intervals, $\lim_{\Delta t \rightarrow 0} p_{t+\Delta t|t}(y \neq x | x) = r_t(y, x) \Delta t$.
In order to conserve probability mass over all transitions, the rate of staying in the current state $x$ is given by the total probability of 1 from which we subtract the probability of leaving $x$, resulting in $r_t(x, x) = -\sum_{y \neq x} r_t(y, x)$ \cite{privault2013understanding,privault2022introduction}.

The transition dynamics in a CTMC are encapsulated in its generator matrix $R$, a $S \times S$ matrix where each element $R_{ij}$ for $i \neq j$ represents the rate of transitioning from state $i$ to state $j$ at time $t$.
The diagonal elements of $R$ are defined such that the sum of each row is zero, ensuring the conservation of probability.
For a small time increment \( \Delta t \), the probability of transitioning from state \( i \) to state \( j \) in time \( \Delta t \) is approximately \( R_{ij} \Delta t \) for \( i \neq j \), and the probability of remaining in state \( i \) is \( R_{ii}\Delta t = - \sum_{i\neq j} R_{ij} \Delta t \).
By taking the time derivative it follows that $R_{ii} = -\sum_{i \neq j} R_{ij}$.
As \( \Delta t \) approaches zero, these expressions converge to the instantaneous rates of transition, reflecting the jump nature of the process.

% This mathematical formulation allows for the detailed analysis and prediction of the system's behavior over time.

% The rate matrix \(R\) is a $S \times S$ matrix, where each element \(R_{ij}\) represents the transition rate from state \(i\) to state \(j\).
% The diagonal elements of $R_{ii}$, are typically negative or zero, and they represent the rate at which the process leaves state \(i\), while the off-diagonal elements \(R_{ij}\) (for \(i \neq j\)) are non-negative, representing the rate of transitioning from state \(i\) to state \(j\).

% This is a common scenario in fields like finance, where market states are discretely classified, or in queueing theory, where the number of items in a queue can be counted.
% The generator matrix plays a crucial role in these applications, providing a compact and efficient way to describe the entire dynamics of the process.

\subsection{Master Equation for Jump Processes}

\label{sec: master equation jump process}

In the study of jump processes, particularly in the context of continuous-time Markov chains (CTMCs), the master equation describes the time evolution of the probability distribution over the states of the system.
The master equation provides an analytical equation that formalizes how probabilities flow between different states over time, making it a tool in the analysis of stochastic processes where transitions between states occur randomly but with well-defined rates between discrete states.
Thus the master equation serves a similar purpose in discrete spaces as does the FPE in continuous spaces \cite{berger2012introduction,privault2013understanding,privault2022introduction}.

Consider a jump process with a finite state space of $S$ states. The probability of the system being in state $x$ at time \(t\) is denoted by $p_t(x)$.
The dynamics of the probability of each state in an infinitesimal change of time $\Delta t$ is governed by the master equation, which is expressed for the probability of a particular state $x$ as

\begin{align}
      p_{t+\Delta t}(x)
      = & p_t(x) + \overbrace{\sum_{x \neq y} p_{t+\Delta t | t}(x| y) p_t(y) \Delta t}^{\text{inflow from } y} - \overbrace{\sum_{x \neq y} p_{t+\Delta t | t}(y | x) p_t(x) \Delta t}^{\text{outflow to } y}
\end{align}

Rearranging the terms and taking the limit in $\Delta t$ results in
\begin{align}
      \lim_{\Delta t \rightarrow 0} p_{t+\Delta t}(x) - p_t(x)
      = & \sum_{x \neq y} \overbrace{\lim_{\Delta t \rightarrow 0} p_{t+\Delta t | t}(x | y)}^{r_t(x, y)} p_t(y) \Delta t    \\
        & - \sum_{x \neq y} \underbrace{\lim_{\Delta t \rightarrow 0} p_{t+\Delta t | t}(y | x)}_{r_t(y, x)} p_t(x) \Delta t
\end{align}
and we obtain the master equation for the probability distribution of the discrete system in continuous time,
\begin{align}
      \frac{d p_t(x)}{dt} = & \sum_{x \neq y} r_t(x, y) p_t(y) - \sum_{x \neq y} r_t(y, x) p_t(x).
\end{align}

Writing this out in matrix-vector notation with the transition rate matrix $R$ and the vectorized probabilities $ p_t(x_t)= [p_t(x^{(1)})_t, p_t(x^{(2)})_t, \ldots, p_t(x^{(S)})]^T$, we have
\begin{align}
      \label{eq: master equation 2}
      \frac{d p_t(x)}{dt} = R \ p_t(x)
\end{align}

% The rate matrix \(R\) encapsulates the dynamics of the jump process. It satisfies certain properties to ensure the conservation of the total probability and the non-negativity of the probabilities over time.
% In particular, the sum of the elements in each row of \(R\) must be zero, which mathematically translates into \(\sum_{j=1}^{S} R_{ij} = 0\) for all \(i\). This condition ensures that the total probability across all states remains constant at 1 as the system evolves.

A more flexible family of jump processes can be obtained by making the rates time-dependent, ergo $R_t$.
The time transformation can be seen by looking at the master equation defined in \eqref{eq: master equation 2}, namely
\begin{align}
      \label{eq: master equation}
      \frac{d p_t(x)}{ d t} = R_t \ p_t(x)
\end{align}
where now the rate matrix $R_t$ is time-dependent. For simplicity, let us assume $R_t = \lambda_t R$, where $\lambda : [0, T] \to \R$ and $R$ is time-independent. We can now introduce the new time $\tau = \tau(t)$ and apply the chain rule to compute
\begin{align}
      \frac{d p_{\tau(t)}(x)}{dt} =\frac{ d p_\tau(x)}{ d \tau} \frac{ d \tau}{ d t} = \lambda_t R \ p_{\tau(t)}(x).
      \label{eq: discrete rates time transformation one}
\end{align}
Now, choosing $\frac{ d \tau}{ d t} = \lambda_t$ and thus $\tau(t) = \int_0^t \lambda_s ds$ (where we have assumed $\tau(0) = 0$), eliminates the time-dependent factor $\lambda_t$ and yields the equation
\begin{align}
      \frac{ d }{ d \tau} p_\tau(x)  = R \ p_\tau(x),
      \label{eq: discrete rates time transformation two}
\end{align}
and thus the master equation is constant in time.

% The master equation for stochastic jump processes is widely used in fields such as physics, chemistry, biology, and finance, where systems exhibit transitions between discrete states.
% In physics, it is used to model the kinetics of particles in different energy states; in chemistry, for reaction kinetics; in biology, for population dynamics and epidemiological models; and in finance, for credit rating transitions and other stochastic models with discrete events.

% The master equation, especially when formulated with a rate matrix \(R\), serves as a cornerstone in the quantitative analysis of jump processes.
% It provides a representation of how probabilities evolve over time in systems characterized by discrete stochastic transitions.
% As such, the master equation is a fundamental concept in the theory of stochastic processes and a practical tool for modeling and understanding complex systems.

\section{Reverse Time Stochastic Processes}

\subsection{Reverse Time Stochastic Differential Equations}

The addition of the stochastic, diffusive term $\diff$ to an ordinary differential equation denoted by $\drift$ induces a probability distribution over all possible solutions.
The evolution of the distribution of paths $p_t(x_t)$ is described by the Fokker-Planck partial differential equation.
Fundamentally, the time reversion of the FPE rests upon the same simple time reversion $\tau = T -t$, yet its analytical form takes on a more intricate term.
In the following, we will consider a modified stochastic process with a state independent diffusion term $\sigma^2(t)$ which for a great majority of used processes in this thesis is suitable.

For an Ito drift-diffusion SDE in the form of
\begin{equation}
      dx_t = \drift \, dt + \sigma(t) \, dW_t.
\end{equation}
we have the FPE
\begin{align}
      \frac{\partial p_t(x)}{\partial t} = -\frac{\partial}{\partial x}\left[\mu(x, t) p_t(x)\right] + \frac{1}{2}\frac{\partial^2}{\partial x^2}\left[\sigma(t) p_t(x)\right].
\end{align}
which describes the evolution of the probability distribution $p(x_t, t)$ over time of all possible trajectories of $x_t$.
For the interested reader, the time reversion of a stochastic differential equation is derived in its full length in the appendix \ref{app:cha2reversetimederivation}.

We have seen that a stochastic differential equation is intricately linked to the corresponding FPE.
For a forward time SDE and FPE with state independent diffusion $\sigma^2(t)$, the corresponding reverse time SDE and FPE \cite{anderson1982reverse, nelson1988stochastic, nelson1979connection, karras2022elucidating,grenander1994representations} read in their most general form read
\begin{align}
      dX_\tau = & ( - \mu(X_\tau, \tau) + \sigma_\tau^2 \left( \nicefrac{1}{2} + \alpha^2 \right) \nabla_x \log p_{\tau}(X_\tau) ) d\tau
      + \alpha \ \sigma_\tau dW_\tau
\end{align}
and
\begin{align}
      \frac{ \partial p_{\tau}(x)}{\partial t}  = & - \partial_x \Bigg[ \Big( - \mu(x, \tau)
      + \sigma_\tau^2 \left( \nicefrac{1}{2} + \alpha^2 \right) \nabla_x \log p_{\tau}(x) \Big) \ p_{\tau}(x) \Bigg]         \nonumber \\
                                                  & \quad  + \ \alpha^2 \ \sigma_\tau^2 \partial_x^2 \left[ \ p_{\tau}(x) \right]
\end{align}
where the $\alpha$ term is a free-to-choose auxiliary parameter which in fact denotes a family of reverse time stochastic processes.
Varying $\alpha$ will yields the same marginal probability distributions $p_\tau(x)$ but the paths of the stochastic process will differ as both the diffusion and the additional control term of the score are scaled.

Interestingly, choosing $\alpha=0$ simplifies the dynamics of $x_\tau$ from an SDE into a regular ODE, obviating the need for SDE solvers \cite{maoutsa2020interacting,song2020score}.
In this special case, the dynamics of the particle are defined by the ODE
\begin{align}
      dX_\tau = & \left( - \mu(X_\tau, \tau) + \frac{1}{2} \sigma_\tau^2 \nabla_x \log p_{\tau}(X_\tau) \right) d\tau
\end{align}
which is defined by the reversed drift $-\mu(X_\tau, \tau)$ and the score of the solution of the FPE of the forward process.
Thus simulating an infinite ensemble of particles with the SDE defined above will in reverse time yield the same stationary distributions as the forward process, i.e. $p_\tau(x)= p_t(x)$.
Whereas solving ODE's backward in reverse time necessitates a change of sign of the drift, the addition of a stochastic diffusion term $\sigma_t dW_t$ requires us to reverse diffusive characteristics with the score of the respective stationary distribution at the given time.

\subsection{Reverse Time Jump Processes}
\label{sec: reverse time jump processes}

In reverse time jump processes, the conventional time parameter \( t \) in a stochastic process \( \{X_t\}_{t \geq 0} \) is replaced by a transformed time variable \( \tau = T - t \), where \( T \) is a fixed terminal time \cite{berger2012introduction, anderson2012continuous,privault2022introduction,privault2013understanding}.
This transformation inverts the direction of time.
Under this particular time transformation, the dynamics of the process are reversed, and the system evolves backward in time, from the terminal time \( T \) to the initial time \( 0 \).
The mathematical intricacies lie in redefining the jump rates and transition probabilities in this reversed time framework.
One of the key challenges is the derivation of the reversed jump rates and transition dynamics such that the marginal distributions of the backward process $p_\tau (x)$ necessarily have to match those of the forward process.

Fundamentally, the transitions out of a state $x$ in the forward jump process with time index $t$ need to match the transition into the state in the backward jump process with the reverse time index $\tau = T -t$, namely
\begin{align}
      p_{t+\Delta t| t}(y | x) \ p_t(x) = p_{t | t+ \Delta t}(x | y) \ p_{t+ \Delta t}(y).
\end{align}
Taking the limit of the respective increments $\lim_{\Delta t \rightarrow 0}$ we obtain the transition rates
\begin{align}
      \forward{r}_t(y | x) \ p_t(x) = \backward{r}_t(x | y) \ p_t(y).
\end{align}
which yields under the necessary condition of matching marginals the backward rate
\begin{align}
      \backward{r}_t(x | y) = \frac{p_t(x)}{p_t(y)} \forward{r}_t(y| x).
\end{align}

Under the provision of the marginal probabilities $p_t(x)$ the reverse rates $\backward{r}_t$ can be obtained in a straightforward manner.
Given a terminal state $X_T$, we can simulate a CTMC with the according backward rates $\backward{r}_t$, and the resulting backward stochastic process will generate the same marginal distributions $p_t(x)$ to the forward process with rates $\forward{r}_t$.

In practice, obtaining the marginal distributions for systems of relevant size and complexity is often infeasible and computationally intractable.
For this reason, we can employ an alternative formulation of the reverse rates based on the provision of a conditional expectation $p_{0|t}(x_0|x_t)$ \cite{campbell2022continuous}.
We proceed by expressing the reverse rates in terms of the initial condition $x_0$ as follows
\begin{align}
      \backward{r}_t(x | y)
       & = \frac{\sum_{x_0} p_{t|0}(x|x_0) p_0(x_0)}{p_t(y)} \forward{r}_t(y | x)                                         \\
       & = \sum_{x_0}p_{t|0}(x|x_0) \underbrace{\frac{p_0(x_0)}{p(y)}}_{=\frac{p_{0|t}(x_0|y)}{p_{t|0}(y|x_0)}} \forward{r}_t(y | x) \\
       & = \sum_{x_0} \frac{ p_{t|0}(x|x_0)}{p_{t|0}(y|x_0)} p_{0|t}(x_0 | y)
      \forward{r}_t(y | x)                                                                                                                              \\
       & = \mathbb{E}_{p_{0|t}(x_0 | y)} \left[ \frac{ p_{t|0}(x|x_0)}{p_{t|0}(y|x_0)} \right]
      \forward{r}_t(y | x)
\end{align}

The equation above expresses the reverse rates in terms of the forward rates $\forward{r}_t$, the solution of the forward Kolmogorov equation $p_{t|0}(x|x_0)$ and the conditional expectation $p_{0|t}(x_0 | x)$.
This formulation in terms of an unknown conditional expectation allows for the computation of the reverse rates without the need for the marginal distributions $p_t(x)$ or $p_t(y)$.

\section{Boundary Value Problems}

The preceding sections have introduced the fundamental concepts of different types of dynamics that can govern dynamical systems.
We want to highlight the fact that these dynamics can be reversed in time.
For different dynamical systems, this has different implications.
Whereas deterministic systems governed by ODEs can be reversed quite easily in time, the reverse dynamics of stochastic systems require solutions of the corresponding FPE and Kolmogorov equations to accurately invert the respective stochastic processes.
% For example, the dynamics of a deterministic system are governed by a set of ordinary differential equations (ODEs) which can be reversed in time by simply changing the sign of the time variable.
% The dynamics of a stochastic system are governed by a set of continuous-state stochastic differential equations (SDEs) which can be reversed in time by inverting the sign of the forward drift and adding the score function of the marginals of the forward stochastic process.
% The dynamics of a jump process are governed by a set of transition rates which can be reversed in time by inverting the direction of the transitions by evaluating the marginal distributions which are commonly intractable and thus have to be circumvented with the conditional expectation over the forward solutions.

We will now consider the problem of learning time-reversible dynamics of a system between two boundary conditions \cite{gakhov1990boundary, ladyzhenskaya2013boundary, wendt2008computational}.
Boundary value problems (BVPs) constitute a broad class of mathematical challenges that are pivotal in understanding and analyzing the behavior of dynamical systems governed by differential equations \cite{gakhov1990boundary}.
These problems are distinguished by the specification of values, or conditions, that the solution must satisfy at the boundaries of the domain over which the differential equation is defined.
Unlike initial value problems, where the conditions are given at a single point (typically the start), boundary value problems involve conditions at two or more points.

Given a time interval $t \in [a, b]$ and a spatial domain $x$, the boundary conditions are typically given as $g_a(x_a, a)=0$ and $g_b(x_b, b)=0$.
These problems can be expressed mathematically as

\begin{align}
      \frac{d x_t}{dt} = f(t, x_t), \quad t \in [a, b], \quad g_a(x_a, a) = 0, \quad g_b(x_b, b) = 0.
\end{align}
where \(x_t\) denotes the state of the system at time \(t\), \(f\) represents the dynamics of the system, and \(g\) encapsulates the boundary conditions.
% The boundary conditions are typically formulated separably and thus given as $g_a(x_a)=0$ and $g_b(x_b) = 0$, where $g_a$ and $g_b$ are functions that specify the state of the system at the boundaries of the domain.
Whereas $g_a$ and $g_b$ can encode intricate boundary conditions, we only consider Dirichlet boundary conditions of the simple form $g_a(x_a, a) = 0$ and $g_b(x_b, b) =0$ which are given by the ground truth data.

The underlying dynamical system between the boundary conditions is of central interest in the study of BVPs.
The dynamics $f$ of the boundary value problem in question can be ordinary differential equations (ODEs) for systems with a single independent variable (e.g., time), partial differential equations (PDEs) for systems with multiple independent variables (e.g., time and space) or stochastic differential equations (SDEs).
The dynamics between the boundary conditions are often complex and difficult to model, particularly in high-dimensional systems or systems with intricate interactions \cite{gakhov1990boundary,ladyzhenskaya2013boundary}.

In the context of this thesis, a boundary value problem is formulated as follows: given a flexible class of learnable differential equations $f(t, x_t)$ that describes the evolution or change of a system's state, along with boundary conditions of the system specifying the state of the system $x_t$ at the initial time $t=0$ and the final time $t=T$ given by ground truth data, the task is to find a particular parameterization of $f$ that satisfies both the differential equation and the boundary conditions.

In practical terms, this means that we aim to learn the differential equations governing the dynamics with data posing the boundary conditions.
Consequently, the neural network is tasked with learning the underlying dynamics of the system while respecting the boundary conditions, capturing the complex interactions and stochastic influences that govern the system's behavior.

The neural network approach involves constructing a network that takes the independent variables (e.g., time, space) as inputs and outputs a function that approximates the solution of the differential equation while satisfying the boundary conditions.
Training the network involves minimizing a loss function that captures the discrepancy between the network's outputs and the desired outcomes (i.e., adherence to the differential equation and boundary conditions).
This framework transforms the problem of solving a BVP into one of learning: the neural network learns the dynamics of the system between the boundary conditions by adjusting its parameters to minimize the discrepancy between the predicted and actual behaviors.

% In summary, boundary value problems are a fundamental concept in mathematics and science, offering a window into the complex dynamics of systems governed by differential equations. The traditional mathematical approach focuses on finding solutions that satisfy both the differential equations and boundary conditions, revealing the system's behavior within those constraints. Meanwhile, the neural network approach represents a modern paradigm, framing the problem in terms of learning the system's dynamics between the boundary conditions, thereby expanding our capacity to solve and understand complex BVPs in diverse domains.

\subsection{Stochastic Bridges}

\begin{figure}[htb]
      \centering
      \includegraphics[width=0.8\textwidth]{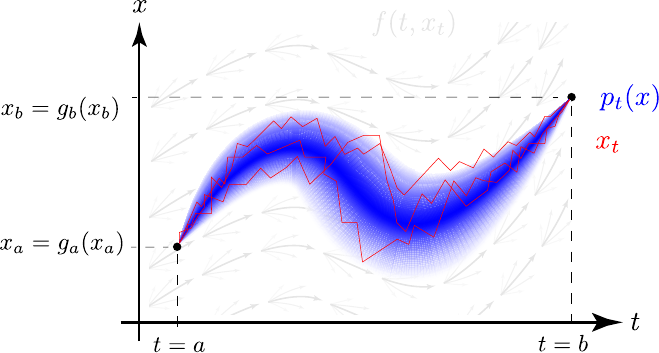}\\
      \caption{A figurative visualization of a boundary value problem with stochastic dynamics $f(t, x_t)$ and deterministic boundary conditions $g_a(x_a)$ and $g_b(x_b)$.}
      \label{fig:fundamentals bvp}
\end{figure}

In order to build up intuition for the Schr\"odinger Bridge Problem in the next section, we will introduce the concept of stochastic bridges.
A stochastic bridge is a type of boundary value problem where the dynamics are influenced by stochastic perturbations \cite{schrodinger1931umkehrung}.

In a stochastic setting, the dynamics between the boundary conditions are influenced by random fluctuations, which are captured by the stochastic differential equation \cite{nelson1979connection,nelson1966derivation}.
Whereas deterministic dynamics result in a particular solution in the form of a trajectory, the stochastic nature of SDEs results in a probability distribution of solutions, reflecting the system's behavior under uncertainty.
The distribution of solutions is governed by the Fokker-Planck equation, which describes the evolution of the probability density function over time through the formulation of the underlying stochastic differential equation.

The learning task is now to find a parameterization of the SDE that satisfies the differential equation and the boundary conditions, while also capturing the system's stochastic behavior.
Stochastic bridges formalize boundary value problems with non-deterministic dynamics and deterministic boundary conditions.

\section{The Schrödinger Bridge Problem}

Classical boundary value problems (BVPs) in physics and engineering typically involve determining a function that satisfies a differential equation subject to fixed conditions at the endpoints of a domain.
In many practical scenarios, especially those involving uncertainty and randomness, these deterministic models are inadequate.
By introducing stochasticity, we can more accurately model systems influenced by random fluctuations and noise, leading to the formulation of stochastic differential equations (SDEs) as a natural generalization of deterministic differential equations.

A stochastic boundary value problem extends this concept by not only considering the dynamics governed by an SDE but also requiring the solution to meet specific probability distributions at the start and end of the time interval.
This setup naturally leads into the framework of Schr\"odinger Bridges \cite{schrodinger1931umkehrung,vargas2021solving,chen2015stochastic,chen2016modeling,chen2016relation}, which aim to determine the dynamics of a stochastic process to achieve prescribed marginal distributions at the boundary times $t=0$ and $t=T$ while adhering closely to the original system dynamics.

Mathematically, the Schr\"odinger bridge problem is posited as constructing an SDE with fixed given diffusion $\sigma$ such that the probability densities of the corresponding state variables $X_{t=0}$ and $X_{t=1}$ at initial and final times, which for simplicity we take to be $t=0$ and $t=1$, coincide with given densities $\pi_0(x)$ and $\pi_1(x)$.
So far, we could choose any SDE which fulfills the given boundary conditions, but the Schr\"odinger bridge problem introduces an additional condition.
To make the problem unique, one imposes the additional constraint that the probability measure $\mathbb{P}$ over the corresponding paths of the stochastic process should be close to a given reference measure $\mathbb{Q}_0$.
The latter is itself defined by a drift function $\forward{\mu}^{Q_0}(x_t, t)$ and a given
\textit{initial} density $\pi_0(x)$.

If we define $\mathbb{D}(\pi_0, \pi_1)$ to be the set of probability measures over paths $\{X_t\}_{0\leq t \leq 1}$
with fixed marginal densities $\pi_0, \pi_1$, the measure $\mathbb{P}^*$ over paths of the SDE which solves the Schr\"odinger bridge is
defined by the solution of the minimization problem
\begin{align}
      \mathbb{P}^*  =  \arg\inf_{\mathbb{P} \in \mathbb{D}(\pi_0, \pi_1) }  \KLfunc{\mathbb{P}}{\mathbb{Q}_0} .
      \label{minim_Sbridge}
\end{align}
The explicit expression of the KL--divergence between two different path measures $\mathbb{P}$ and $\mathbb{Q}$ with the same diffusion parameter $\sigma$ induced by two SDE with drift functions $\forward{\mu}^P(x, t)$ and $\forward{\mu}^Q(x, t)$ and {\em initial} densities $\pi_0^P(x)$ and $\pi_0^Q(x)$ (for $X_{t=0}$) is
given by
\begin{align}
      \label{KL_FW}
      \KLfunc{\mathbb{P}}{\mathbb{Q}} = \KLfunc{\pi_0^P}{\pi_0^Q} + \frac{1}{2\sigma^2} \int_0^1
      \mathbb{E}_\mathbb{P}\left[\left(\forward{\mu}^P(x_t, t) - \forward{\mu}^Q(x_t, t)\right)^2\right] dt
\end{align}
where
\begin{align}
      \KLfunc{\pi_0^P}{\pi_0^Q} = \int \pi_0^P(x) \ln\left(\frac{\pi_0^P(x)}{\pi_0^Q(x)}\right) dx
\end{align}
denotes the usual KL--divergence between probability densities in $\mathbb{R}^D$.
Hence, the Schr\"odinger bridge problem can be understood as a problem of optimal stochastic control,
where one has to find a drift function $\forward{\mu}^{P^*}(x, t)$ as a state and time-dependent control variable which steers the stochastic dynamical system in such a way, that the marginal density of the state variable evolving form a given initial density reaches a predefined end density.
In addition, control variables are quadratically penalized by the KL divergence \eqref{KL_FW} to stay close on average to the drift of the reference system $\forward{\mu}^{Q} (x, t)$.

It should be noted that the reference measure $\mathbb{Q}_0$ is not unique and can be chosen in various ways.
As such, the reference measure need not fulfill the condition of matching the marginal densities $\pi_0$ and $\pi_1$.
Since the reference measure $\mathbb{Q}_0$ can be chosen, we choose it to be the measure of the forward process with the same diffusion parameter $\sigma$ and the drift function $\forward{\mu}^{Q_0}(x, t)$ with the initial distribution $\pi_0$.
This choice is motivated by the fact that the forward process is the most likely process to reach the final distribution $\pi_1$ from the initial distribution $\pi_0$.

Schr\"odinger bridges find extensive applications in fields like optimal control, optimal transport, and even fluid dynamics \cite{chen2016relation,chen2021optimal,berner2022optimal}.
In optimal control, Schr\"odinger bridges represent a strategy to control a stochastic system in such a way that it transitions between states in the most probable manner given the boundary constraints.
This concept closely relates to the principle of minimum energy or least action found in classical mechanics.

The derivation of $\forward{\mu}^{P^*}(x, t)$ can be derived by solving the optimal control problem under the constraint of the continuity equations of incompressible fluids \cite{chen2021optimal, chen2015stochastic}.
In this interpretation, the drift function $\forward{\mu}^{P^*}(x, t)$ represents the velocity field of the fluid, and the continuity equation ensures that the mass is conserved.
The analogy lies in the optimal rearrangement of fluid particles to achieve a desired flow pattern while minimizing energy dissipation, akin to the entropy-minimizing pathways in Schr\"odinger bridges.

In optimal transport \cite{villani2009optimal, peyre2019computational}, Schr\"odinger bridges provide a probabilistic method for transporting mass (or probability) from one distribution to another while minimizing a cost function related to the entropy of the transport plan \cite{villani2009optimal}.
This approach is particularly useful in economics and logistics where the cost of transportation is a critical factor.

Schr\"odinger bridges elegantly extend classical deterministic boundary value problems into the stochastic domain, offering a robust framework for managing uncertainty in dynamic systems.
By connecting the dots between stochastic processes, optimal control, and transport theory, Schr\"odinger bridges not only enrich our mathematical toolkit but also enhance our ability to engineer systems and processes that adeptly navigate the complexities of randomness and uncertainty.

\subsection{Schrödinger Half-Bridges as Generative Diffusion Models}

A special case of the Schr\"odinger bridge problem has garnered widespread interest in the machine learning community \cite{song2020score,ho2020denoising,song2020denoising}.
Known as diffusion models, this variant simplifies the original problem by carefully designing the reference dynamics $\forward{\mu}^Q$ to start from the initial distribution $\pi_0$ and subsequently choosing $\pi_1$ as the equilibrium distribution induced by the reference dynamics.
A careful choice of reference dynamics allows for the derivation of a closed-form solution to the half-bridge problem \cite{berner2022optimal,richter2023improved}.

In general, the optimal control solution can be derived from the gradient of the log densities of the marginal densities that characterize the reference path measure $\mathbb{Q}_0$ \cite{chen2016modeling,chen2015stochastic}.
Designing the reference dynamics to yield a path measure $\mathbb{Q}_0$ with analytical marginal densities, the solution, commonly referred to as the score of the marginal distribution, is known in its analytical form and can be used as a regression target.

Naturally, this necessitates the target distribution $\pi_1$ to be the equilibrium distribution of the reference dynamics, such that given sufficient time, the system will converge to the target distribution from any initial state from $\pi_0$.
Conveniently, the reference dynamics are commonly chosen deliberately such that the equilibrium distribution is an easy to sample from distribution such as the Normal distribution.

With these conditions fulfilled, we are then able to simulate the reverse-time stochastic process starting from the equilibrium distribution $\pi_1$ and ending in the initial distribution $\pi_0$.
Sampling this reverse-time process allows us to draw a new sample from the data distribution $\pi_0$, which is commonly chosen as a data distribution from which we want to draw more samples.

\section{Summary}
\Cref{cha:deterministicbvps} will focus on the learning of the dynamics of deterministic systems, with the ground truth dynamics provided by solutions of ODE's.
By formulating a neural network as a symplectic integrator, we learn time-reversible dynamics between the deterministic boundary values.

\Cref{cha:discretehalfbridge} will examine the applicability of neural networks modeling the dynamics of a reverse-time jump process such that the resulting process matches the empirical distribution of the boundary conditions.
Here, the dynamics are stochastic, and a reference stochastic process is provided in one direction between the boundary conditions.

Finally, \cref{cha:stochasticbridge} tackles the learning of continuous stochastic dynamics in the form of SDEs between two boundary values without the provision of any reference dynamics.
Two separate neural networks are trained recursively on each other to learn two stochastic processes that converge to the optimal control solution of the Schrödinger Bridge problem.
\graphicspath{{./img/chapterMD}}

\chapter{Deterministic Boundary Value Problems in Molecular Dynamics}
\label{cha:deterministicbvps}

\begin{tcolorbox}[colback=gray!10!white, colframe=black]
      Parts of this chapter are mainly based on:
      \begin{itemize}
            \item \fullcite{winkler_2022}
      \end{itemize}
\end{tcolorbox}

In a dynamical system, the state of the system evolves according to the underlying ordinary differential equation integrated from an initial condition.
The provision of a final condition adds a further constraint to the solution of the ordinary differential equation and transforms the initial value problem into a boundary value problem.
Under the premise of time reversibility of dynamical systems, we can train the same neural network to provide a symplectic, first-order ODE solver.
This solver can be used to interpolate the trajectory of a molecular system over a finite time horizon by integrating the learned dynamics forward and backward in time.
Given a coarse simulation of the molecular system, the neural network can be used to reconstruct the missing trajectory segments at a higher resolution.

\section{Molecular dynamics}
Performing MD simulations of a molecular system in practice requires discretizing in time Newton's equations of motion $\ddot{a}=\nicefrac{F}{m}$.
The Velocity-Verlet algorithm is a popular choice for this discretization given a potential energy surface (PES) $U=U(r)$ which defines the force field $\mathbf{F}=-\nabla_{\mathbf{r}} U$ acting on each atom.

Hence, as a result, we obtain a trajectory with the molecular time evolution or, in other words, a time series of the atomic coordinates \textbf{r} and momentum \textbf{p}: $\mathcal{S}=\{ \mathbf{x}_t=(\mathbf{r}_t,\mathbf{p}_t); t=i\Delta\tau, i=0,\ldots,N_T \}$. Here $\mathbf{x}$ is known as a point in the phase space of the system.
The discretization parameter $\Delta \tau$ has to be selected according to the system and the simulation conditions.

In fact, it is key to choose it much smaller than the fastest oscillation period in the system.
Which, due to the nature of chemical bonds, renders $\Delta \tau$ in the order of 0.1 to 1 fs for organic systems.
Now, another fundamental aspect to generate meaningful simulations is the length of the time series or the total simulation time $T=N_T \cdot \Delta \tau$, which in contrast to the selection of $\Delta \tau$, has to be much larger than the slowest oscillation in the system.

\begin{figure}[tbp]
      \centering
      \includegraphics[width=0.9\textwidth]{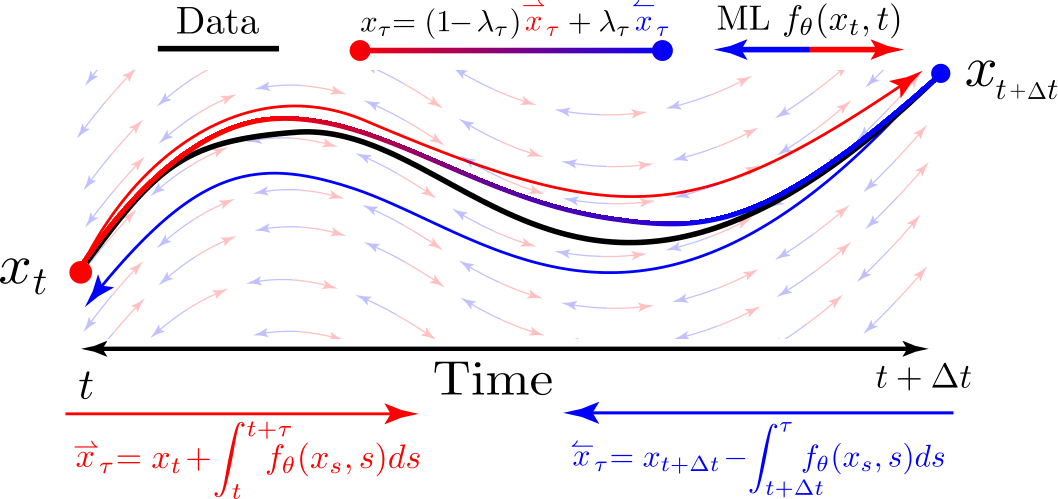}
      \caption{A figurative representation of the boundary values considered in this chapter.
            A time-reversible integrator $f_\theta(x_t, t)$ is learned with neural networks to reconstruct the solution $x_\tau$ with $\tau \in [t, t+\Delta t]$ of the boundary value problem by reconstructing the underlying data between $x_t$ and $x_{t+\Delta t}$.
            The interpolation $x_\tau$ is achieved by shifting the weight $\lambda_\tau$ from the forward trajectory $\smash{\stackrel{\rightharpoonup}{x}_\tau}$ to the backward trajectory $\smash{\stackrel{\leftharpoonup}{x}_\tau}$ over the course of the interpolation.}
      \label{fig:deterministic boundary value graphical abstract}
\end{figure}

According to the ergodic theorem, in order to fully recover the thermodynamical properties of the system $N_T\to\infty$, nevertheless in practice a careful selection of a finite value of these parameters gives converged thermodynamic averages.
Thereby, the selection of $N_T$ is a more abstract task because it is tightly correlated to the type of system and the physical phenomena to be studied.
For example, studying light diatomic molecules could require simulation times on the order of tens of fs ($N_T\approx10^3$), while more interesting molecules containing a couple of dozens of atoms and a fluxional molecular structure would require simulations on the order of ns ($N_T\approx10^7$)~\cite{sGDML_Appl_jcp,Sauceda_bigdml}.

Keeping in mind that every integration step during the MD simulation requires the explicit calculation of the potential energy and forces, which in the case of electronic structure calculations such as Density Functional Theory (DFT) can take on the order of seconds, we can see the benefit on creating a methodology that helps to reduce the value of $N_T$ without losing accuracy.

%HES: Here the idea is to reduce the resolution from the ab initio MD and recover a higher precision as a post-processing step.
In a straightforward manner, this implies that we take larger integration steps in the simulation.
%Managing to accomplish such reduction of computational effort without losing dynamical information is precisely the topic of this article.

% \begin{figure*}[htbp]
%   %\centering
%   \includegraphics[width=1.0\textwidth]{}
%   \caption{A) The information flow in a bi-directional integrator, which is trained to interpolate the differential equation that governs the molecular dynamics. The same neural network differential equation is utilized to predict the solution from the initial condition $[r_0,p_0]$ at $t=0$ forward in time and from the final condition $[r_3,p_3]$ at $t=3$ backward in time. Subsequently, the forward and backward solution is interpolated with $\lambda_\tau$ to obtain the interpolated values. B) The Hamiltonian Neural Network (HNN) architecture which uses partial derivatives of the network output in the forward pass. C) The information propagation of LSTM neural networks and the inner dynamics of each LSTM cell consists of aggregating the input information $x$ with the memory cell $c$ and computing the output $h$. D) Neural ordinary differential equations use neural networks in black box adaptive ODE solvers. E) RNN model architecture showcasing how the neural network integrators propagate and aggregate information through time in their hidden state $h$.}
%   \label{fig:LSTM}
% \end{figure*}

\section{MD Trajectory Interpolation with Neural Networks}

In this article, we propose to employ machine learning algorithms to integrate Newton's laws of motion directly from the phase space vector representation of molecules.
To that end, we train a machine learning integrator to interpolate the phase space trajectory of the molecules over a finite time horizon.
In general, $\mathbf{r}$, $\mathbf{p}\in\mathbb{R}^{3N}$ where $N$ is the number of atoms in the molecular system, but for the sake of simplicity here we will analyze the one-dimensional case, $r$, $p \in \mathbb{R}$ and later generalize to $3N$ dimensions.

A time-dependent variable can be described by the differential equation $\dot{x}_t = f(x_t, t)$ for a general time index $t$, which is derived from the phase space vector, which then has the discrete solution,
\begin{align}
      x_{t+\Delta t} = x_t + \int_{t}^{t+\Delta t} f(x_{\tau},\tau) d\tau.
      \label{eq:ODEsol}
\end{align}

Now, instead of performing a "big" jump by $\Delta t$, we would like to interpolate in between the two subsequent phase space vectors $x_{t}$ and $x_{t+\Delta t}$ to achieve a higher resolution.
Hence, we choose to integrate a dynamics described by, $\dot{x}_\tau = f(x_\tau, \tau)$ which has a higher temporal resolution $\tau$ between any two subsequent states in the simulation $x_t$ (i.e. $\tau\in [t,t+\Delta t]$).

% The coarse simulation $x_t$ provides a trajectory at discrete time steps $\Delta t$, from which the learned
% intermediate dynamics $f_\theta(x_\tau, \tau)$ reconstructs the missing trajectory segment at a higher resolution (See Fig.~\ref{fig:bi_interpol}-B).
% This means obtaining $x_\tau$, where $\tau \in \{ t, t+\Delta\tau,\dots, t+(n-1)\Delta\tau,t+\Delta t \}$ and $\Delta\tau/\Delta t$ gives the enhanced resolution.
% The coarse trajectory at time steps $x_t$ and $x_{t+\Delta t}$ thus provides the initial and the final condition for the dynamics $f_\theta(x_\tau, \tau)$.

Now, for better error control, we can make use of the fundamental property of time reversibility of Newton's equations.
We obtain the same trajectory if we start from the initial conditions $x_t$ and get to the final state $x_{t+\Delta t}$ and if we start from $x_{t+\Delta t}$ and propagate the system backwards in time to $x_t$.
With the provision of the initial condition $x_t$ and the final condition $x_{t+\Delta t}$, we can compute the forward $\overrightharp{x}_{\tau}$ and backward $\overleftharp{x}_\tau$ solution to the dynamics for $\tau \in \{ t, t+ \Delta t \}$,

\begin{align}
      \overrightharp{x}_{\tau} & = x_t + \int_{t}^{\tau} f_\theta (x_s, s) ds \label{eq:deterministic forward solution}                      \\
      \overleftharp{x}_{\tau}  & = x_{t+\Delta t} - \int_{t+\Delta t}^{\tau} f_\theta (x_s, s) ds \label{eq:deterministic backward solution}
\end{align}

The interpolation of the two trajectories is achieved via the interpolation parameter $\lambda_\tau \in [0,1], \tau \in [0, \Delta t]$,

\begin{align}\label{eq:lambda}
      \lambda_\tau = \frac{2 \int_{s=0}^{\tau} s \ ds}{\Delta t}
\end{align}

\noindent which is monotonically increasing for the duration $\tau \in [0, \Delta t]$ and is designed to shift the weight from the forward trajectory to the backward trajectory over the course of the interpolation.

\begin{align}
      x_\tau = (1 - \lambda_\tau) \overrightharp{x}_\tau + \lambda_\tau \overleftharp{x}_\tau.
\end{align}

This equation was inspired by the very insightful concept of thermodynamic integration from the free energy surface computation's toolbox~\cite{tuckerman2010statistical}.
The interpolation is visualized in Fig. \ref{fig:deterministic boundary value graphical abstract}, which highlights the shifting interpolation parameter $\lambda_\tau$.
We choose to model the dynamics in a time-reversible manner such that the forward and backward solutions are consistent with each other.

The notion of time-reversible integration of the dynamics can be summarized in the following proposition:
\begin{proposition}[Reconstruction of Time-Reversible Dynamics]
      \label{prop: deterministic time reversible dynamics}
      For a time-reversible dynamic $f(x_\tau, \tau)$ with the boundary conditions $x_t$ and $x_{t + \Delta t}$ on the time interval $[t, t + \Delta t]$, the solution $x_\tau$ at a time $\tau \in [t , t + \Delta t]$ is given by,
      \begin{align}
            x_\tau
            = (1 - \lambda_\tau) \left( x_t + \int_{t}^{\tau} f (x_s, s) ds \right)
            + \lambda_\tau \left( x_{t+\Delta t} - \int_{t+\Delta t}^{\tau} f (x_s, s) ds\right)
      \end{align}
      with the time-dependent weighting
      \begin{align}
            \lambda_\tau = \frac{2\int_{s=0}^\tau s ds}{\Delta t}
      \end{align}
\end{proposition}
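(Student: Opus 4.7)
The plan is to verify the identity by recognising that, for an exactly time-reversible dynamic, both the forward and backward integrated trajectories coincide with the true solution $x_\tau$ on the interval $[t, t+\Delta t]$, so any convex combination of them must equal $x_\tau$ as well. The specific choice of the weighting $\lambda_\tau$ is then essentially a design freedom (chosen so that $\lambda_t = 0$ and $\lambda_{t+\Delta t} = 1$), not a constraint required for the equation to hold.

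First I would record the defining identities for the forward and backward trajectories from equations \eqref{eq:deterministic forward solution} and \eqref{eq:deterministic backward solution}, namely
\begin{align}
      \overrightharp{x}_\tau &= x_t + \int_t^\tau f(x_s,s)\,ds,\\
      \overleftharp{x}_\tau  &= x_{t+\Delta t} - \int_{t+\Delta t}^\tau f(x_s,s)\,ds.
\end{align}
By the fundamental theorem of calculus, $\overrightharp{x}_\tau$ is exactly the solution of the initial value problem $\dot{x}_s = f(x_s,s)$ with initial condition $x_t$ evaluated at time $\tau$, so $\overrightharp{x}_\tau = x_\tau$.

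Next I would invoke time-reversibility. Rewriting the backward integral gives $\overleftharp{x}_\tau = x_{t+\Delta t} + \int_\tau^{t+\Delta t} f(x_s,s)\,ds$, which is exactly the statement that integrating $\dot{x}_s = f(x_s,s)$ from the terminal condition $x_{t+\Delta t}$ back to time $\tau$ recovers the same trajectory. Since $f$ is assumed time-reversible and $x_{t+\Delta t}$ is the forward-evolved image of $x_t$, this identity holds and yields $\overleftharp{x}_\tau = x_\tau$.

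Combining both equalities, for any $\lambda_\tau \in [0,1]$,
\begin{equation}
      (1-\lambda_\tau)\overrightharp{x}_\tau + \lambda_\tau \overleftharp{x}_\tau = (1-\lambda_\tau) x_\tau + \lambda_\tau x_\tau = x_\tau,
\end{equation}
which is the claimed identity. The specific quadratic form $\lambda_\tau = 2\int_0^\tau s\,ds/\Delta t$ (modulo a normalisation so that $\lambda_{\Delta t}=1$) satisfies the boundary conditions $\lambda_t=0$, $\lambda_{t+\Delta t}=1$ and produces a smooth monotone shift of weight from forward to backward, but as the calculation shows, any choice with these endpoint values would formally work in the exactly time-reversible case. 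The \emph{actual} obstacle is therefore not the algebraic verification, which is trivial, but rather motivating why this particular $\lambda_\tau$ is preferable when $f$ is replaced by a learned approximation $f_\theta$ that is only approximately time-reversible; there, the two trajectories diverge and the weighting controls how errors from each side are blended. I would flag this as a remark rather than part of the formal proof.
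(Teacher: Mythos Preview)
Your proof is correct and follows essentially the same approach as the paper: both recognise that the forward and backward integrated trajectories each equal $x_\tau$ under time-reversibility, so any convex combination (in particular the one with weight $\lambda_\tau$) trivially reproduces $x_\tau$. Your additional remark that the specific form of $\lambda_\tau$ is a design choice rather than a necessity is a useful observation the paper does not make explicit.
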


\begin{proof}
      The proof is straightforward by inserting the definitions \cref{eq:deterministic forward solution} and \cref{eq:deterministic backward solution} for the solution at time $t + \tau$ into the equation and simplifying the terms:
      \begin{align}
            x_\tau
             & = (1 - \lambda_\tau) \underbrace{\left( x_t + \int_{t}^{\tau} f (x_s, s) ds \right)}_{=x_\tau}
            + \lambda_\tau \underbrace{\left( x_{t+\Delta t} - \int_{t+\Delta t}^{\tau} f (x_s, s) ds\right)}_{=x_\tau} \\
             & = (1 - \lambda_\tau) x_\tau + \lambda_\tau x_\tau                                                        \\
             & = x_\tau
      \end{align}
\end{proof}

\subsection{Learning the Dynamics}

Computing the forces acting on the atoms in a molecular system is a computationally expensive task, which is often the bottleneck in the simulation of molecular dynamics.
For this reason, we propose to learn the dynamics of the molecular system from the ground truth trajectories.
We choose a neural network $f_\theta(x_t, t)$ to learn the dynamics of the molecular system, which is trained to predict the time derivative of the phase space vector $\dot{x}_t \approx f_\theta(x_t, t)$.

The ground truth trajectories are sampled in regular steps of $\Delta \tau$ which can be subsampled to $\Delta t$.
This implies that $\Delta t= N \cdot \Delta \tau$ where $N$ is the subsampling factor.
The discretized solution of the ordinary differential equation is then given by the recursive application of the dynamics $f_\theta(x_t, t)$ to the phase space vector $x_t$,
\begin{align}
      \label{eq: discretized reconstruction}
       x_{t+\Delta t} = x_t + \sum_{\tau_i=t}^{\tau_N=t+\Delta t} f_\theta(x_{\tau_i}, \tau_i) \Delta \tau.
\end{align}

To infer the correct set of parameters $\theta$ of the integrator network we have to optimize the scalar loss function which reduces to the learning of the dynamics of the molecular system,
\begin{align}
      \label{eq: deterministic loss}
       & \ \quad \mathcal{L}(f_\theta(x_\tau, \tau), f(x_\tau, \tau), t, t+\Delta t) \\
       & =\frac{1}{\Delta t} \left\lVert x_0 + \int_{t}^{t+\Delta t} f_\theta(x_\tau, \tau) d\tau - \big(x_0 + \int_{t}^{t+\Delta t} f(x_\tau, \tau) d\tau \big) \right\rVert^2 \\
       & =\frac{1}{\Delta t} \left\lVert \int_{t}^{t+\Delta t} \bigg[ f_\theta(x_\tau, \tau) - f(x_\tau, \tau) \bigg] d\tau \right\rVert^2.
\end{align}

The loss function is evaluated on randomly sampled sections of the trajectories in mini batches.
This ensures that the integrator is able to predict the time derivative of the phase space vector at any point in time.
The minimum to the optimization problem \cref{eq: deterministic loss} is the model $f_\theta(x_\tau, \tau)$ which minimizes the distance to the true dynamics under the L2 norm.

The loss function is minimized with respect to the parameters $\theta$ of the integrator network using the backpropagation algorithm.
The solution to the boundary value problem is then obtained by integrating the learned dynamics forward and backward in time and interpolating the two solutions.

% =======================================================================
% =======================================================================
\subsubsection{Neural Network Integrators}
% =======================================================================
% =======================================================================

In order to assess which NN architecture is the most suitable to interpolate MD trajectories, we considered four NN integrators: neural ODEs, Hamiltonian neural networks, recurrent neural networks and long short-term memory (LSTM) \cite{hochreiter1997long}.
% A pictorial description of the considered architectures is presented in Fig.~\ref{fig:LSTM}.
%

\noindent\textbf{Differential architectures}:
The most accessible approach to modeling the approximate dynamics $f_\theta$ is by directly computing the differentials with a fully-connected neural network.
The Euler discretization is often sufficient for convergence but more sophisticated, adaptive solvers are applicable such as Runge-Kutta and Dormand-Prince solvers for differential equations \cite{dormand1980family}.
In order to be applicable to deep neural networks trained with the backpropagation algorithms, these solvers require the use of the adjoint sensitivity method, which backpropagates an adjoint quantity as a surrogate gradient through time \cite{chen2018neural}.

The adjoint sensitivity method backpropagates the error through adaptive solvers with a constant memory cost, which is highly suitable for adaptive solvers with potentially large number of evaluations.
Once the adjoint is backpropagated to all evaluations, the gradients of the parameters $\theta$ can be obtained and gradient descent training is eligible.\\

%\noindent\textbf{Hamiltonian neural networks}: 
Newton's equation for dynamical systems can be generalized into the Hamiltonian mechanics framework.
The canonical coordinate positions $r_t$ and momentum $p_t$ can be obtained through the partial derivatives of  the Hamiltonian $\mathcal{H}(r_t, p_t)$.
Hamiltonian neural networks~\cite{greydanus2019hamiltonian} predict an approximate Hamiltonian $\mathcal{H}_\theta \approx \mathcal{H}$ parameterized as a deep neural network and compute the time derivatives $\dot{r}_t$ and $\dot{p}_t$ during the forward pass.
In terms of functional analysis, one has to be careful in using an architecture which is differentiable at least twice, since the network is trained through backpropagation which requires a second differentiation of the model.
In practice, this amounts to using continuously differentiable activation functions such as tangent hyperbolic or sigmoid and refrain from using piece-wise linear activation functions such as rectified linear units.
By virtue of their construction, Hamiltonian networks exhibit energy conserving properties, such that the total energy of a dynamical system remains constant.
This is of interest for energy-based systems such as molecular dynamics, in which energy is shifted between potential and kinetic energy but never lost.\\

\noindent\textbf{Recurrent architectures}:
RNNs extend feed-forward neural networks through recurrent connections through time.
They offer the ability to explicitly model time dependent relationships by incorporating the neuron activations of the previous time step.
LSTM networks are recurrent architectures that resolve some important issues on general RNNs by using memory cells that can selectively read and write to them.
LSTM's are widely used in modelling time series and provide a remedy for the vanishing and exploding gradient problem of classical RNN's due to excessive or miniscule eigenvalues in the recurrent weight matrices \cite{pascanu2013difficulty}.

Whereas Ordinary Differential Networks and Hamiltonian Networks are considered Markovian in the sense that they only use the current state $x_t$ of the dynamical system to predict the time derivative $\dot{x}_t$, RNNs and LSTMs are capable of modelling long-distance dependencies through their hidden states in memory cells.
In the next section, we analyze in detail the performance of these two different approaches, as well as their bi-directional variants on the task of trajectory reconstruction.

The training setup is summarized in \cref{app:cha3training}.

\begin{figure*}[hbtp]
      %\centering
      \includegraphics[width=1.0\textwidth]{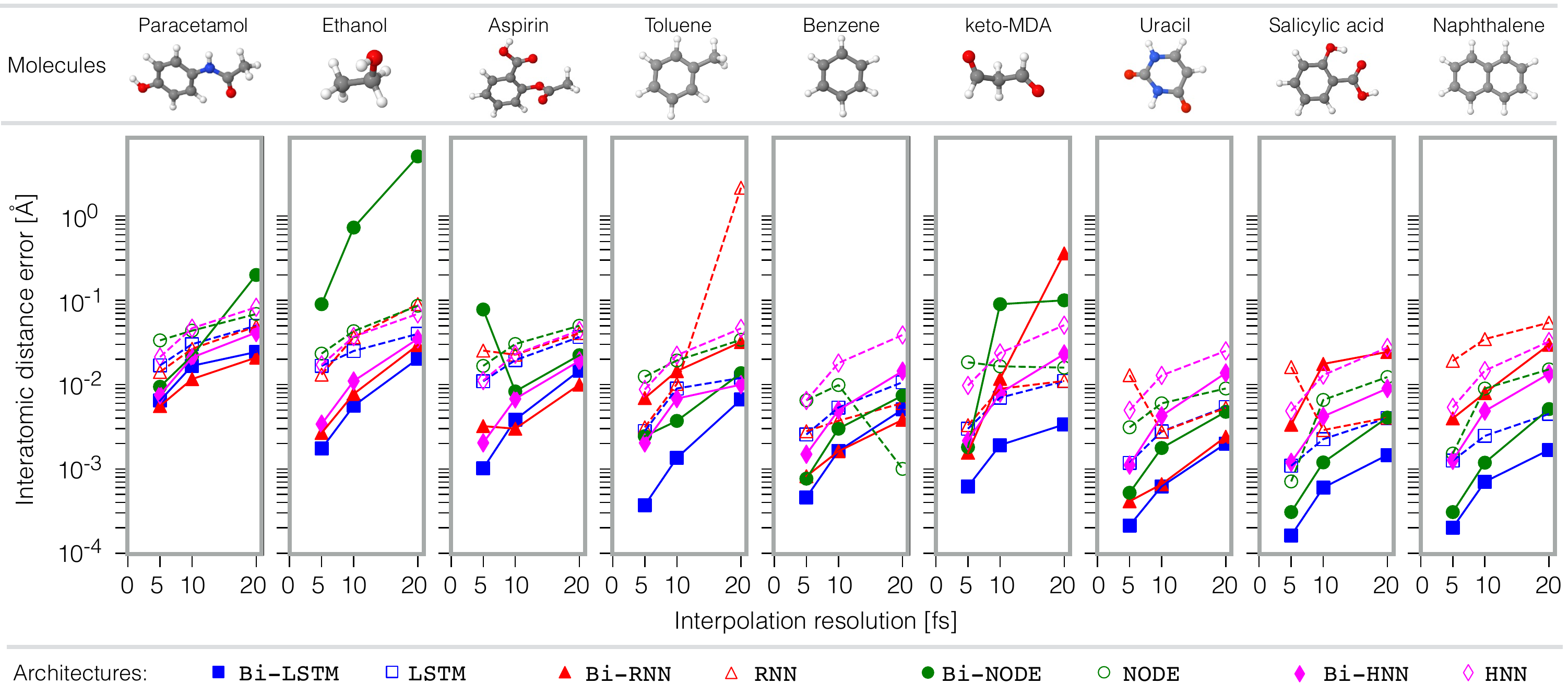}
      \caption{Performance comparison of uni- and bi-directional models over multiple integration lengths $\Delta t$ on the extended-MD17 dataset~\cite{chmiela2017machine, sGDML_Appl_jcp}. Uni-directional and bi-directional methods are represented by empty and full symbols, respectively. The considered interpolation resolutions are 5, 10, and 20 in femtoseconds.}
      \label{fig:performance}
\end{figure*}

\begin{figure*}[hbtp!]
      %\centering
      \includegraphics[width=\textwidth]{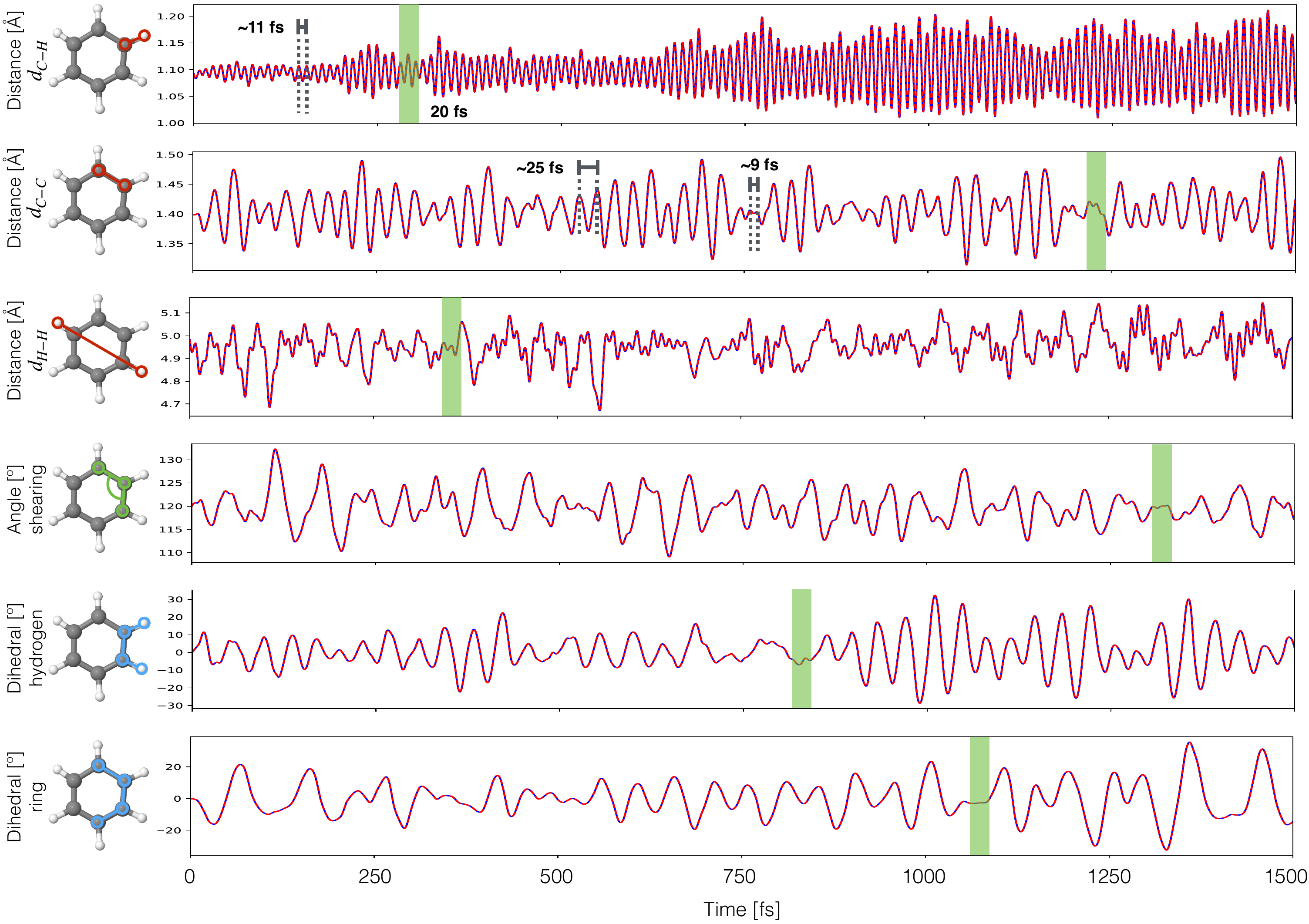}
      \caption{
            Analysis of the benzene dynamics and its interpolation for the 20 fs interpolation case.
            The six internal variables displayed are representative feature of the molecule, including interatomic distances (in \AA), internal angles (in degrees), and dihedral angles (in degrees) (represented in red, green and blue in the left-most molecular structure, respectively).
            The true dynamics is displayed by the blue curve and the predicted results are given by the dashed red lines. In grey dotted lines we show some relevant periods for some variables, and in a green window we represent the 20 fs interpolation interval.}
      \label{fig:trajBenzene}
\end{figure*}

% =================================================
\section{Results}
% =================================================

\subsection{Training and validation of the model}
The general goal of this study is to establish the applicability of the presented methodologies on realistic dynamics from physico-chemical simulations.
To this end, we have selected the well established extended-MD17 dataset~\cite{chmiela2017machine, sGDML_Appl_jcp} which contains middle-sized molecules with various dynamical complexities.
In this section, we analyze the performance of the bi-directional approaches (i.e. LSTM, RNN, HNN, and NODE) compared to their uni-directional counterparts while considering different resolution for the interpolation tasks.
These results are shown in Fig.~\ref{fig:performance}.

\subsubsection{Uni- vs Bi-directional NNs}
Common methods for sequence and trajectory learning are usually deployed using the natural arrow of time, nevertheless it has been demonstrated in other areas of machine learning, such as language modelling~\cite{vaswani2017attention, bahdanau2014neural, GoogleNMT}, that bi-directional learning substantially improve the results.
Here, we exploit the regularity of the physical trajectories and local time reversal symmetry by using bi-directional versions of the architectures.
In order to directly show the benefit of this approach, in Fig.~\ref{fig:performance} we compare the performance of the two approaches for all the molecules from the extended-MD17 dataset~\cite{chmiela2017machine}.
From this figure, we can quickly see that using bi-directional NNs can boost the accuracy of the model by up to two orders of magnitude.
This reveals that including both directions of the arrow of time goes beyond simply duplicating the amount of data.
Instead this strongly encodes the trajectory's regularity in the model which results in a more accurate description without increasing the number of parameters in the architecture.
Additionally, we can see that this phenomenon is not architecture specific, since all the tested NNs obtained a similar increase in accuracy and similar learning curves when considering bi-directionality.
Additionally, such behavior is also independent of the interpolation resolution.

\subsubsection{Bi-LSTM performance}
% LSTMs are better than the rest
Now, comparing the different bi-directional methods, we can see that recurrent neural networks (i.e. RNN and LSTM) always perform better than neural ODE based approaches.
This is due to the fact that recurrent based NNs are more robust against noise, as will be the case when dealing with thermostated molecular dynamics simulations.
Furthermore, Fig.~\ref{fig:performance} shows that the Bi-LSTM architecture gives more consistent results across molecular structures with different complexities compared to the rest of the architectures, reaching interpolation errors of up to one order of magnitude lower.
This advantage of Bi-LSTMs over other architectures could be related to the fact that LSTM cells propagate information more efficiently and keep track of the many subtleties in the molecular trajectories for longer periods of time.
It is worth to remark that the accuracies reached by Bi-LSTMs in real space (i.e. coordinates) range on the order of 10$^{-3}$ \AA, which in practical terms is indistinguishable from the reference trajectory.
Given these results, from now on, we will focus our analysis only on Bi-LSTM networks.

\subsubsection{Molecular complexity and degrees of freedom}
As an additional observation, we have noticed that molecules with higher complexity (e.g. larger number of rotors and/or higher fluxionality) generate more intricate trajectories that render the reconstruction process more challenging.
This is only evident for the cases of paracetamol, ethanol and aspirin, nevertheless, there are cases where the correlation between interpolation accuracy and molecular morphology is non-intuitive.

For example, even though the keto-MDA molecule (see Fig.~\ref{fig:performance}) presents a complex PES with two rotors as main degrees of freedom~\cite{sgdml_bookAppl}, the reconstruction precision of its trajectory is similar to benzene and uracil, molecules that have no rotors and are formed only by an aromatic ring.
Furthermore, the best accuracy reached by the Bi-LSTM model was for salicylic acid, a molecule that has two rotors coupled by a complex hydrogen bond where the proton is dynamically being transferred between the two functional groups.

The physical interpretation of these results is that the reconstruction process occurs on the free energy surface (FES) of the molecular system, which means that the molecule is moving on its FES instead of the PES.
Hence, moving on the molecular FES at a given temperature can result on non-trivial behavior originated from the entropic contributions.
Thereby, a molecular system containing complex interaction such as salicylic acid which has a H--bond can generate a smoother dynamics compared to uracil or naphthalene (see Fig.~\ref{fig:performance} for molecular structure reference).

The insight behind these results is that, for the considered molecules,  thermal fluctuations considerably reduce the dynamical complexity of molecular systems with high fluxionality. Thereby, simplifying the learning process.
%The chemical insight behind these results is that inferring chemical and physical molecular behavior from static structures (i.e. zero-temperature structures) is highly approximated, given that most of the intricate atomic interactions only appear when considering thermal fluctuations.

\subsection{Achieving super-resolution in MD trajectories}

In the previous section, we have demonstrated that Bi-LSTMs are a suitable architecture for trajectory learning of realistic molecular simulations.
In this section, we continue analyzing what it implies to reconstruct molecular trajectories, but now from the vibrational point of view, first from the normal frequencies framework and then from the real dynamics perspective.

%\subsubsection{Vibrational normal modes perspective}
% Error vs dt
One of the key aspects in trajectory interpolation is the time resolution that can be achieved by the model.
Given that the MD17 dataset has integration steps of 1 fs, here the task was to skip a number of frames $n$ for each molecule and then reconstruct the missing fragments on the trajectory.
Hence, for this dataset, $\Delta \tau=1$ fs and $\Delta t=n$ fs.
In this regard, Fig.~\ref{fig:performance} shows the performance of all the models for $\Delta t=$ 5, 10 and 20 fs.
Let's analyze the results for each value of $\Delta t$ in the context of its physical implications for the vibrational normal modes (i.e. harmonic analysis).
As a reference, the fastest atomic oscillation periods in paracetamol and in benzene are %$\approx$3600 cm$^{-1}$ which corresponds to an oscillation period of 
$\approx$9.3 fs and $\approx$11 fs, respectively~\cite{sGDMLsoftware2019}.
Such vibrations are mainly due to fast oscillations of the hydrogen atoms in the molecule.
Hence, sampling a trajectory every 5 fs in the context of molecular vibrations means that we are skipping half the period of the fastest oscillation in the molecule, which implies that, at most, the model has to interpolate half of the oscillation cycle.
This fact is reflected on the validations errors shown in Fig.~\ref{fig:performance}.
More challenging cases are sampling molecular trajectories every 10 and 20 fs, given that in these two cases the model has to reconstruct at least one full period (in the case of $\Delta t=$10 fs).

Furthermore, the most interesting resolution to analyze in more detail is $\Delta t=$20 fs, given that in such case the model has to reconstruct at least one full cycle of all the molecular oscillations with frequencies larger than $\approx$1600 cm$^{-1}$.
Which in the case of the benzene molecule, there are eight out of 30 normal modes with frequency values larger than such value (i.e. $\sim$27\% of the normal modes).
More interestingly, six of them have oscillation periods of $\approx$10 fs, meaning that the $\Delta t=$20 fs model has to reconstruct two full oscillations periods for the six vibrational modes using as inputs only the initial and final states (i.e. ($\mathbf{r}_t,\mathbf{p}_t$) and ($\mathbf{r}_{t+21},\mathbf{p}_{t+21}$)).
In the Supporting Information, we have animated such example for the toluene molecule case, where the true dynamics is represented by blue atoms and the interpolated dynamics appears in red.
As expected from Fig.~\ref{fig:performance}, the dynamics is practically indistinguishable.

In order to visualize this analysis, in Fig.~\ref{fig:trajBenzene} we present the reconstructed trajectory for the benzene molecule using the Bi-LSTM architecture for $\Delta t=$20 fs, and we show its dynamics in terms of its main internal degrees of freedom: interatomic distances, angles and dihedral angles.
In this figure, the blue curve is the ground truth and the red dashed line represents the predicted dynamics.
A quick glance over the plot shows that, even though the $\Delta t=$20 fs case in Fig.~\ref{fig:performance} is the one that gives the largest error, \textit{such accuracy still corresponds to an excellent agreement between the reference trajectory and the ML prediction.}

As mentioned before, the fastest oscillation in a molecule are due to hydrogen atoms oscillations, which in the benzene case it can be tracked by plotting the interatomic distance $d_{\text{C-H}}$ (Fig.~\ref{fig:trajBenzene} top panel).
Here, the measured oscillation period from the signal is $\approx$11 fs, slightly larger than the normal mode value.
The origin of this red shift is the fact that at finite temperatures, the system visits the anharmonic region of the PES, which then generate slower oscillations.
As a reference, a green rectangular window is used in Fig.~\ref{fig:trajBenzene} to show examples of interpolated intervals of trajectories.
Hence, in the case of $d_{\text{C-H}}$ we can clearly see that the method is successfully reconstructing two full cycles of the variable, rendering indistinguishable the comparison to the reference trajectory.
Another important internal variable is the first neighbor carbon-carbon distance $d_{\text{C-C}}$ (second panel in Fig.~\ref{fig:trajBenzene}).
$d_{\text{C-C}}$ oscillation period is in general considerably slower, but because such interatomic distance is part of many anharmonically-coupled vibrational modes,
%(which then couple through anharmonic interactions between them as well as superposition, 
it can create apparent high-frequency diatomic oscillations (as short as $\approx$9 fs).
Here, despite the complex dynamics of this internal variable, the model manages to faithfully recover its behavior.
Now, in order to incorporate a highly non-linear and weakly correlated interatomic distance, we considered two opposite hydrogen atoms, $d_{\text{H-H}}$ (third panel in Fig.~\ref{fig:trajBenzene}).
In principle, this variable should amplify small errors in the reconstructed trajectory, nevertheless the results are still in excellent agreement.

Another important aspect in molecular fluctuations is the analysis of internal shearing deformations as well as out of plane deformations, which represent a global mechanical property of the system.
These mechanical deformations can be analyzed by measuring shearing angles and dihedral angles as shown in the lower half of Fig.~\ref{fig:trajBenzene}.
The oscillations in these variables are considerably slower compared to interatomic distance fluctuations, but they can contain highly anharmonic contributions, making them a good measure for reconstruction accuracy.
Again, here the interpolation accuracy of the model is very high, which makes the predicted trajectories indistinguishable from the reference curve.
In fact, the MAE reconstruction accuracy of the Bi-LSTM model for the benzene molecule's trajectory is $\sim$10$^{-4}$~\AA, a value that is close to the accuracy of the actual numerical integrators such as the Verlet algorithm.

The results obtained in this section show that the methods presented here, and in particular Bi-LSTMs, give excellent trajectories' reconstruction accuracies, rendering their predictions indistinguishable from full \textit{ab-initio} MD results.
Hence, this allows to confidently use these techniques in a diverse range of applications such as data augmentation, super-resolution generation, or even storage capacity reduction.
In the next section, we use Bi-LSTMs to extract some physical insights from learning trajectories at different temperatures, where we demonstrate that we can faithfully recover its molecular free energy surface.

\begin{figure}[htbp]
      \centering
      \includegraphics[width=0.66\textwidth]{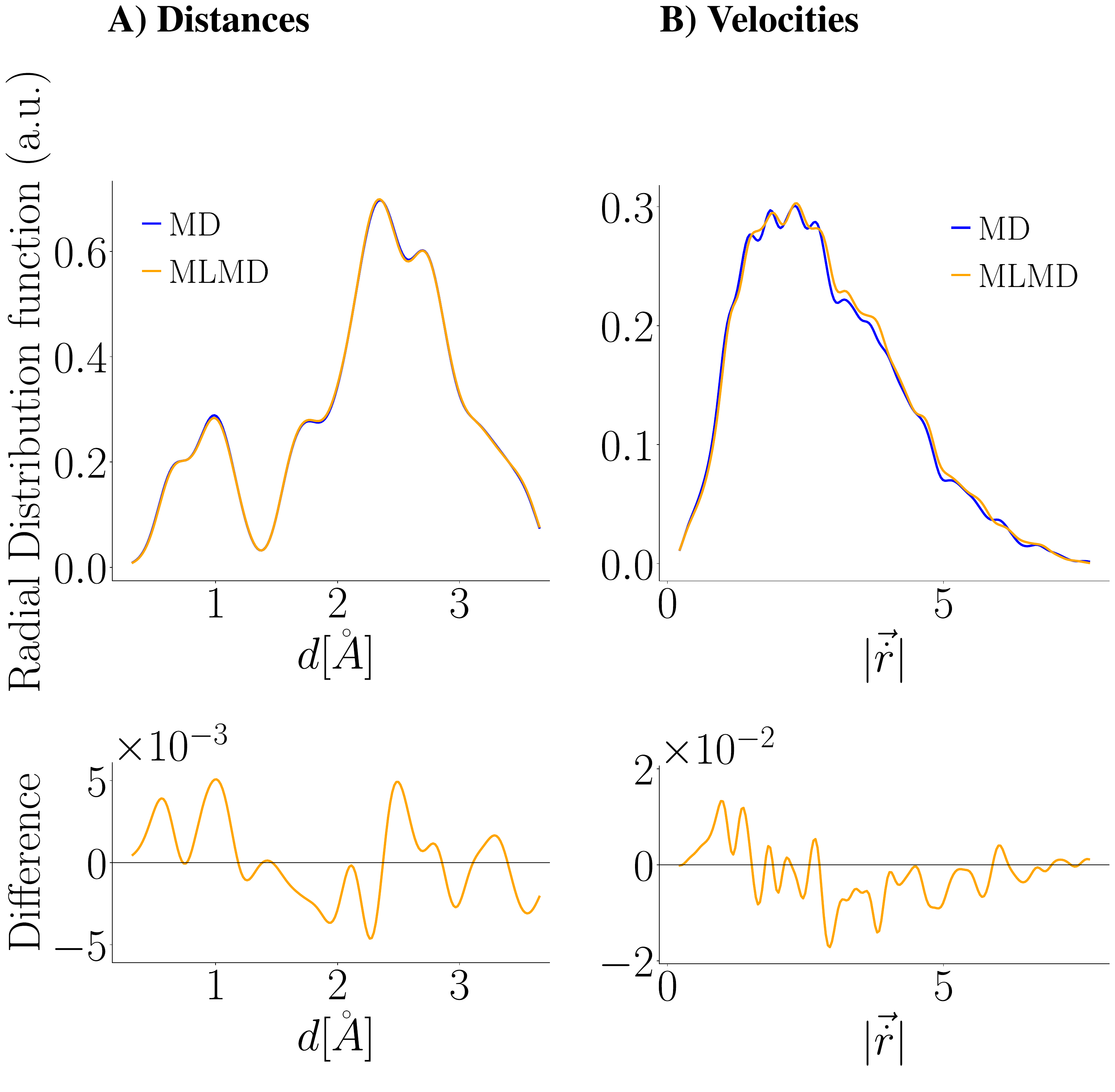}
      \caption{Comparison of the interpolated (yellow) and ground truth (blue) trajectories for keto-MDA molecule. A) Interatomic distance histogram and B) speed histogram.
            The interpolation was done with a Bi-LSTM integrator with $\Delta t = 10$. The bottom panels provide the difference between the two curves.}
      \label{fig:HistMDADistVel}
\end{figure}

% Interatomic distance histogram of the normalized trajectory for the keto-MDA molecule's simulation. In both cases, the interpolation with a Bi-LSTM integrator with $\Delta t = 10$, which is able to recover the reference data distribution with high fidelity. B) Speed histogram of the normalized trajectory for the keto-MDA molecule's simulation. The speed histogram follows a Maxwell-Boltzmann distribution. In both cases, the interpolation with a Bi-LSTM integrator with $\Delta t = 10$, which is able to recover the reference data distribution with high fidelity.

\subsection{Physical insights from the reconstruction process: Application to keto-MDA as case of study}

The keto-MDA molecule is a challenging molecule given that it presets a wide variety of interactions that generate a PES with strong electrostatic interactions but also regions with particle-in-a-box behavior~\cite{sauceda2020JCP,sgdml_bookAppl}.
Hence, such intricate energy landscape generates interesting dynamics from which, for example, force field reconstruction is not a straightforward task and requires the use of sophisticated ML methodologies~\cite{chmiela2017machine, SchNet2018,painn,unke2021spookynet}.
Interestingly, there is evidence that the force field learning process gets easier when the temperature of the generated training data gets lower~\cite{sgdml_bookAppl}, given that, in general, they explore smaller regions of the configurational space.

From the statistical point of view, the MD trajectories at a given temperature sample the configuration space in such a way that the molecular free energy surface can be estimated by $\sim \text{ln}~P(\theta_1,\theta_2)$ where $P(\theta_1,\theta_2)$ is the integrated configurational probability density, and $\theta_1$ and $\theta_2$ are the main degrees of freedom of the system (see Fig.~\ref{fig:HistkMDA_distTemps}).
In general, the FES is known to get smoother as we increase the temperature due to entropic effects.
%\newtext{From the thermodynamics point of view, the Helmholtz free energy is $F=U-TS$, where $U$ is the internal energy (i.e. the PES+quantum correction), $S$ is the entropy and $T$ is the temperature.}
This is represented in Fig.~\ref{fig:FESvsPES}, where we can see that as the temperature increases, the curved regions of the FES get smoother until they progressively flatten.
This also means that the molecular trajectory in phase space becomes more stable, and hence it should be easier to reconstruct.
In order to get some insights about this effect as well as to further validate the performance of our models, we have run three simulations of the keto-MDA molecule at different temperatures (100K, 300K and 500K) using a pretrained sGDML model~\cite{sGDMLsoftware2019} and analyzed the generated trajectories.
In Table~\ref{table:TempTrajLearn_MAEdist_normalized}, we summarize the results for the learning procedure for the three temperatures.

% ________________________________________
% _____________ TABLE ____________________|
\begin{table}[!ht]
    \centering
	%\begin{adjustbox}{width=\linewidth}{}
\begin{tabular}{c | *3c}
    \toprule
    \multicolumn{4}{c}{\textbf{A)} Interatomic distances [$10^{-3}$ \AA]}\\
    \cline{1-4}
    Temperature [K] & $T_{\text{tr}}=$5 & $T_{\text{tr}}=$10 & $T_{\text{tr}}=$20        \\
    \cline{1-4}
    100         & 0.259     & 	0.930  &    3.287  \\
    \cline{1-4}
    300         & 0.417    & 	1.500   &   5.321 \\
    \cline{1-4}
    500         & 0.501    & 	1.801   & 	6.399 \\
    \midrule
    \multicolumn{4}{c}{\vspace{0.01cm}} \\
    \toprule
    \multicolumn{4}{c}{\textbf{B)} Velocity field [$10^{-3}$ \AA/fs]}\\
    \cline{1-4}
     Temperature [K] & $T_{\text{tr}}=$5 & $T_{\text{tr}}=$10& $T_{\text{tr}}=$20        \\
    \cline{1-4}
    100         & 11.7100 	 & 38.390 	 & 132.700  	 \\
    \cline{1-4}
    300         & 11.320 	 & 36.820  	 & 127.210 	 \\
    \cline{1-4}
    500        & 10.380 	 & 33.070 	 & 113.60 	 \\
    \midrule
\end{tabular}
	%\end{adjustbox}
	\caption{Experimental results for evaluating the influence of varying training integration time $T_{tr}$ of the Bi-LSTM architecture on A) inter-atomic distances and B) velocity field. This was done for the keto-MDA molecule as a case of study for simulations run at three different temperatures. The interatomic distances and velocity field were normalized to follow the first two moments of a standard Normal distribution.}
	\label{table:TempTrajLearn_MAEdist_normalized}
\end{table}
% ________________________________________
% ________________________________________|

\subsubsection{Phase-space histograms}
The frameworks presented in this study work on phase space, $(r,p)$, meaning that during the reconstruction task the trajectory and the velocity field are recovered. The velocity field refers to the atomic velocities.
Hence, here, we analyze Bi-LSTM's predictive power in terms of physically meaningful distributions from molecular dynamics simulations by examining the speed and interatomic distances distributions.
In previous sections, we have focused on spacial accuracy for trajectory reconstruction (up to $\sim10^{-4}$ \AA), but another important aspect is to recover an accurate velocity field.
In Table~\ref{table:TempTrajLearn_MAEdist_normalized}, we report the interpolation accuracy for both of these variables, displaying an excellent agreement with the reference data.
In particular, in Fig.~\ref{fig:HistMDADistVel} we show the explicit comparison of the reconstructed probability distributions for the interatomic pair distance distribution function, $h(r)$, and the speed distribution function
%i.e. Maxwell-Boltzmann distribution, 
for the case of keto-MDA's trajectory at 300K.
From figures~\ref{fig:HistMDADistVel} and~\ref{fig:HistkMDA_distTemps} and from Table~\ref{table:TempTrajLearn_MAEdist_normalized}, we can see that the acquired velocity field precision is maintained for the three different temperatures.

\subsubsection{Temperature dependent learning}
It is worth noticing that the temperature has indeed an effect during the learning process.
From the dynamical point of view, we know that the free energy surface depends on the temperature, meaning that the trajectory generated will follow different patterns and statistically sample differently the phase-space.
A pictorial representation of this effect is shown in Fig.~\ref{fig:FESvsPES}.
Fig.~\ref{fig:HistkMDA_distTemps} shows this effect for the keto-MDA molecule, where the left panels show the sampling for the different temperatures and the right panels show the pair distance distribution.

From here, we can see how the pair distance distribution function evolves from a multimodal histogram at 100K to a less complex function as the temperature increases.
In terms of trajectory reconstruction, from Table~\ref{table:TempTrajLearn_MAEdist_normalized}, we can quickly see that when increasing the temperature of the reference data, the accuracy in the interatomic distances marginally decreases but the accuracy of the velocity field reconstruction considerably increases.
Furthermore, in the case of the Bi-LSTM model with $\Delta t=20$, the prediction accuracy increases roughly by a factor of 4 when training on data generated at 500K relative to the 100K case.

In a broader picture of the physical problem, these results tell us that the generated dynamics at higher temperatures are smoother even though the atomic speeds are higher.
This is actually because the molecular system spends more time in anharmonic regions, which then generates the well-known frequency red shift of most of the vibrational frequencies~\cite{sGDML_Appl_jcp}.
In other words, increasing the system's temperature reduces the oscillation periods and generates larger oscillation amplitudes, thereby reducing the complexity of the learning problem.

\begin{figure}[htbp]
      \centering
      \includegraphics[height=0.8\textheight]{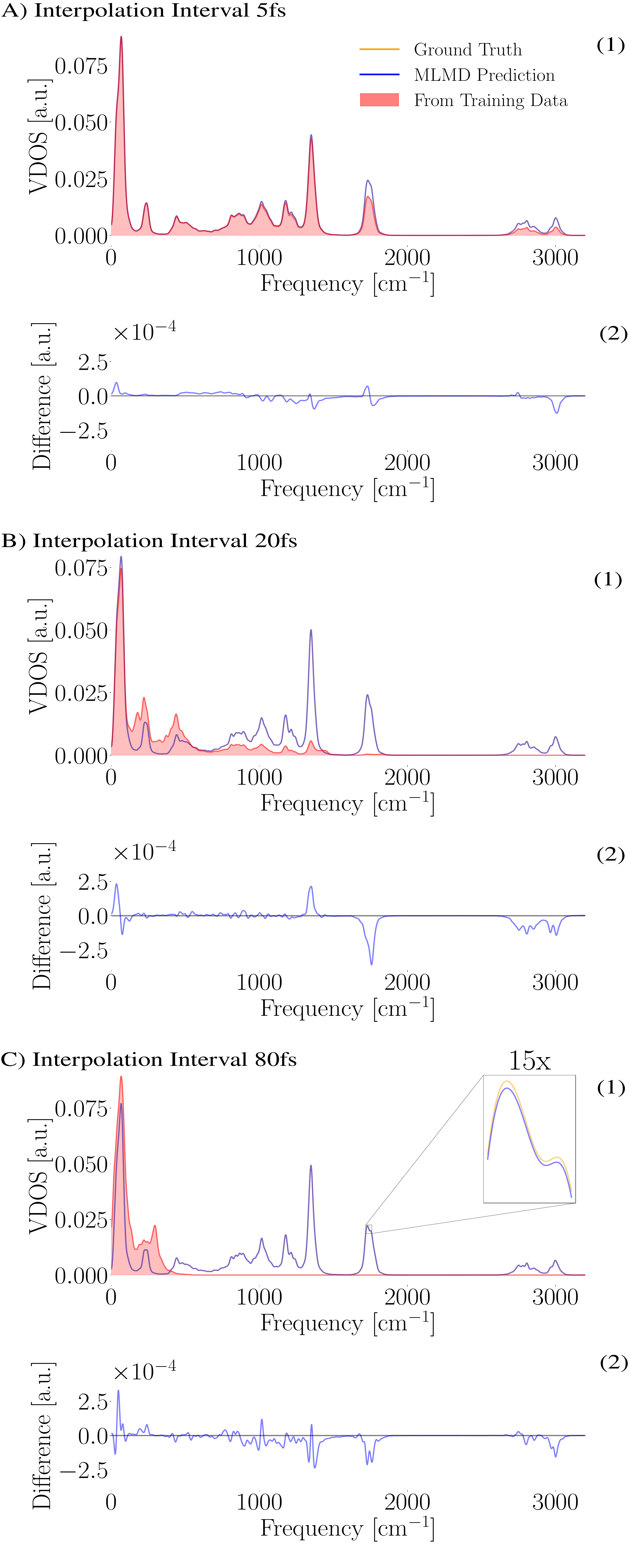}
      \caption{Reconstruction of the vibrational density of states (VDOS) using the interpolation intervals: A) 5fs, B) 20 fs, and C) 80 fs. In panel (1) are the ground truth VDOS (yellow), the predicted VDOS (blue), and the VDOS obtained from the training data (shaded red). In Panel (2) is the differences between the ground truth and predicted VDOSs.}
      \label{fig:VDOS_diffInter}
\end{figure}

\subsubsection{Free energy surface reconstruction}

To complement the results from the previous sections, here we correlate those results to the free energy surface (FES).
The FES can be estimated by integrating the probability distribution generated by the molecular trajectories, hence smoother trajectories in principle should lead to less complex FES as shown in Fig.~\ref{fig:HistkMDA_distTemps}.
Now, this actually has a beneficial connection to the learning problem, because given the previous analysis, we can accurately reconstruct the molecular FES from a reduced amount of data or sparse trajectories.
In order to corroborate such accuracy, in Fig.~\ref{fig:Malondialdehyde_DiffHist} we show a comparison between the reference and Bi-LSTM-predicted configurations' sampling for the keto-MDA molecule for its main degrees of freedom i.e. the two aldehyde dihedral angles.
As mentioned before, the accuracy in trajectory reconstruction is $\sim10^{-4}$ \AA, hence renders the differences between the two FES  barely noticeable.

\subsubsection{Testing the limits of the methodology: Larger interpolation intervals}

In order to assess the accuracy of the Bi-LSTM models as a function of the sampling frequency, in Fig.~\ref{fig:VDOS_diffInter} we show their different models with an interpolation capacity of 5, 20 and 80 fs, where the resolution of the training data is 1 fs.
This means that during training, the 80 fs model for example, was trained using time series sampled every 80 fs, given the initial and final conditions, the model will predict the intermediate 78 points in the series.
If we analyze the information encoded in a time series sampled every 80 fs, we can clearly see that the training data set only contain information regarding molecular oscillations with frequencies of $\sim$500 cm$^{-1}$ at most, and the rest of the spectrum is not explicitly embedded in the data.
The frequency spectrum shown to each model during the training process is highlighted in red in Fig.~\ref{fig:VDOS_diffInter}.
From this figure we obtain that in the three cases, the reconstructed vibrational spectrum is indistinguishable from the ground truth calculations (see bottom plot in each of the panels), despite the wide variability of the interpolation parameter.
Nevertheless, in the 80 fs case (inset in Fig.~\ref{fig:VDOS_diffInter}-C), we can appreciate that some slight deviations start to appear.

These results immediately bring up the question of how the model can still reconstruct very high frequencies when these are not explicitly given.
The answer lies on the memory cells of the LSTM architectures.
It doesn't matter that high frequencies are not shown to the model, since in each pass of the training process, the model is getting different configurations that sample the high frequency normal modes.
Hence, by keeping in memory all that information, the model can infer the existence of all the vibrations in the system.

From the results in this section and the previous one, we obtain that bi-LSTM models not only manage to accurately reconstruct the spacial components of the trajectories, but also the velocity field.
Thereby, ensuring their application to reconstruct the phase-space dynamics as well as the physical properties such as free energies and vibrational spectra.

\section{Discussion and Limitations}

In this work we have introduced the concept of learning the dynamics of a boundary value problem with deterministic boundary conditions with a series of bi-directional NN integrators leveraging different levels of prior physical knowledge of the underlying dynamics (i.e. NODE, HNN, RNN and LSTM) to increase the resolution of MD trajectories for a variety of molecular systems.
An extensive validation process of these models on the MD17 dataset and different interpolation resolutions were presented (see figure 3).
Interestingly, we found that the physically most agnostic model, the Bi-LSTM model, is the better performing method generating more stable results across all the extended-MD17 dataset and better accuracies, reaching errors as low as
$\sim 10^{-4}$ \AA.
These errors render the interpolated and ground truth trajectories indistinguishable.
This is a remarkable result, given that the Bi-LSTM model is able to learn the full spectrum of molecular vibrational frequencies (harmonic and anharmonic) even though this information is not explicitly shown during the training process.
The overall higher performance of Bi-LSTM networks is due to the robustness against noisy (thermostated) reference data, as well as their capacity to retain long time correlations’ during the learning process.
The higher flexibility of Bi-LSTMs leading to improved performance corroborates a wider trend in machine learning that more data and higher flexibility in the model architecture leads to better performance, even in contrast to prior knowledge.

Beyond the machine learning task of high dimensional trajectory interpolation, by varying the temperature of the reference data used for training the models, we obtained important physical insights regarding the dynamical behavior of molecular systems.
There is evidence in the literature that learning molecular force fields (i.e. the underlying PES) gets more complicated as the temperature of the training data increases.
Contrasting this behavior, here we have observed that trajectory interpolation becomes easier as the temperature increases.
The origin of such different temperature-dependent learning behavior is the fact that the trajectories generated by MD simulations effectively move on the Helmholtz FES due to entropic contributions. Additionally, going from low to high temperatures progressively pushes the FES’ landscape to deviate from the underlying PES, reducing its complexity as the temperature increases. From these results we can summarize that the temperature of MD databases have opposite effects on learning the potential and FESs.
Additionally, we found that Bi-LSTMs have the remarkable property of being able to learn the full spectrum of molecular vibrational frequencies (harmonic and anharmonic) even though this information is not explicitly shown during the training process.
This capability is due to their memory cells.

From the results obtained in this work, we have evinced the great learning capacity of Bi-LSTMs on the reconstruction of realistic high dimensional molecular behavior. This opens up a new set of applications for the family of RNNs on post-processing MD results. Furthermore, the results here presented can be used to set the stage for robust extrapolation techniques, either by using it as a data enhancement method for forecasting or as the basis for numerical propagator.

The aim of molecular dynamics is to obtain the macroscopic properties of a system from the microscopic interactions of its constituents.
With this goal in mind, this work is only a stepping stone towards accelerated, data-driven MD simulations.
The results presented here are a proof of concept that the dynamics of molecular systems can be learned and predicted with high accuracy.
Yet, by construction of the boundary value problem, these models are not able to predict the future states of the system but rely on the second boundary condition.
Empirically, we have seen that framing molecular dynamics as a boundary value problem instead of a initial value problem improved the performance of the models by two orders of magnitude.
Thus, the next step in this line of research is to develop a model that can predict the future states of the system \emph{with high accuracy}, which would be a significant step towards the development of a data-driven molecular dynamics simulator.
\chapter{Stochastic Half-Bridges with the Ehrenfest Process}
\label{cha:discretehalfbridge}
\graphicspath{{./img/chapterDiscDiff}}

\begin{tcolorbox}[colback=gray!10!white, colframe=black]
  Parts of this chapter are mainly based on:
  \begin{itemize}
    \item \fullcite{winkler2024ehrenfest}
  \end{itemize}
\end{tcolorbox}

In the previous chapter we considered the inference of the deterministic dynamics of a boundary value problem constrained by deterministic conditions with time-reversible ground truth solutions to the differential equations.
In this chapter, we will take (half) a step into the stochastic realm.
We will infer the dynamics of a given stochastic process in reverse time.
Equally important, we will consider stochastic boundary conditions, represented by an initial and a final distribution.
Compared to the previous chapter, the boundary conditions are stochastic and only one direction of the stochastic dynamics is known.

The time reversion of stochastic processes has garnered wide-spread interest in the machine learning community, where it has been popularized as \textit{generative diffusion models} \cite{ho2020denoising,sohl2015deep}.
In terms of boundary value problems, the generative diffusion models can be interpreted as learning the reverse-time dynamics of a forward stochastic process, where the initial distribution is the data distribution and the final distribution is a tractable and easy to evaluate distribution.
In generative diffusion models, the forward dynamics are usually provided by a tractable stochastic differential equation (SDE).
The forward dynamics transform the data over a sequence of time steps by gradually adding noise, transforming the data into the final equilibrium distribution that is typically Gaussian.
Given sufficient time, any sample from the initial data distribution is diffused into the same equilibrium distribution and can not be distinguished from a different, equally sufficient diffused sample.
The goal in terms of machine learning is to infer the reverse-time dynamics to iteratively transform a a sample of pure noise into a sample seemingly coming from the data distribution.

\begin{figure}[btp]
  \centering
  \includegraphics[width=0.75\textwidth]{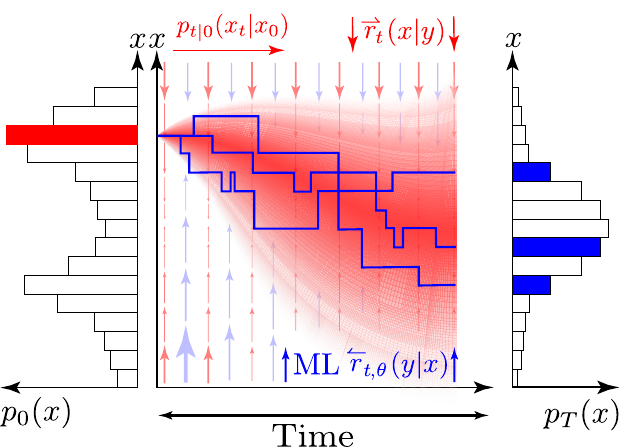}
  \caption{A figurative representation of the boundary value problem considered in this chapter.
    The conditional stochastic process $p_{t|0}(x|x_0)$ transforms samples from the initial distribution $p_{0}(x_0)$ into samples of the final distribution $p_{t}(x)$ with the stochastic jump dynamics $\smash{\stackrel{\rightharpoonup}{r}_t(x|y)}$.
    The aim is to learn the time-reversed dynamics $\smash{\stackrel{\leftharpoonup}{r}_{t, \theta}(y|x)}$ to revert the original dynamics and transforms samples from the final distribution $p_T(x)$ into samples of the initial distribution $p_0(x)$.}
  \label{fig:half bridge graphical abstract}
\end{figure}

Unlike traditional BVPs that are solved with respect to specific differential equations with predetermined boundary conditions, in generative diffusion models, the reverse-time dynamics are not known a priori and must be learned.
The mathematical formulation underlying diffusion models, particularly the continuous-time formulation, draws from the theory of stochastic calculus and involves solving reverse-time stochastic differential equations. This formulation aligns with the idea of solving a BVP in reverse time, where the aim is to reconstruct the original data from its diffused state by learning the appropriate reverse dynamics.

In the stochastic processes community, this settings is known as \emph{stochastic half-bridges} where the ends of the stochastic bridges are probability distributions and one direction of the dynamics is known.
Here, we will study stochastic half bridges in discrete state spaces.
We will consider the stochastic dynamics of a particular forward process, the Ehrenfest process, and study the learning of the time-reversed dynamics.
Importantly, the Ehrenfest process is a birth-death process, which is particularly suitable for generative modeling of discrete data with an ordinal structure.

Equally consequential, it can be shown that the Ehrenfest process converges to the Ornstein-Uhlenbeck process in the infinite state limit.
With the tools of stochastic analysis, the score function of the Ornstein-Uhlenbeck process can be linked to the time-reversed transition probability of the Ehrenfest process.

The main practical insight from this chapter is that we can perform \emph{state-space transfer learning}, stemming from our theoretical analysis of the Ehrenfest process.
In the infinite state space limit, we can learn the reverse-time dynamics of the Ehrenfest process with an Ornstein-Uhlenbeck process.
The OU process has been used ubiquitously in generative diffusion models and is the go-to forward process for a wide variety of generative diffusion models.

The advantages are twofold.
For one, we can employ experimentally beneficial optimization procedures from OU models to the Ehrenfest models.
Secondly, we can even use generative diffusion models trained in continuous state spaces to simulate a reverse time Ehrenfest process in discrete state spaces.
The focus of this chapter lies with rigorously elucidating the theoretical connections between the Ehrenfest process and the Ornstein-Uhlenbeck process and demonstrating the practical implications of these connections for generative modeling.

\section{Time-reversed Markov jump processes}
\label{sec: time-reversed jump processes}

We consider Markov jump processes $M(t)$ that runs on the time interval $[0, T] \subset \R$ and are allowed to take values in a discrete set $\Omega \subsetcong \mathbb{Z}^d$. Usually, we consider $\Omega \cong \left\{0, \dots, S \right\}^d$ such that the cardinality of our space is $|\Omega| = (S+1)^d$. Jumps between the discrete states appear randomly, where the rate of jumping from state $y$ to $x$ at time $t$ is specified by the function $r_t(x | y)$. The jump rates determine the jump probability in a time increment $\Delta t$ via the relation
\begin{equation}
  \label{eq: transition Markov jump}
  p_{t+\Delta t| t}(x | y) = \delta_{x, y} + r_t(x| y) \Delta t + o(\Delta t),
\end{equation}
i.e. the higher the rate and the longer the time increment, the more likely is a transition between two corresponding states.
In order to simulate the process backwards in time, we are interested in the rates of the time-reversed process $\cev{M}(t)$, which determine the backward transition probability via
\begin{equation}
  p_{t-\Delta t| t}(x | y) = \delta_{x, y} + \cev{r}_t(x| y) \Delta t + o(\Delta t).
\end{equation}

% It should be noted that the expectation appearing in \eqref{eq: backward rates} is a conditional expectation, conditioned on the value $M(t) = x$.
% This can be compared to the SDE setting, where the time-reversal via the score function can also be written as a conditional expectation, namely $\nabla_x \log p_t^\mathrm{SDE}(x) = \E_{x_0 \sim p_{0|t}^\mathrm{SDE}(x_0 | x)}\left[\nabla_x \log p_{t | 0}^\mathrm{SDE}(x |x_0) \right]$, see \Cref{lem: score as conditional expectation} in the appendix for more details. We will elaborate on this correspondence in \Cref{sec: Ehrenfest and score}.

While the forward transition probability $p_{t|0}$ can usually be approximated (e.g. by solving the corresponding master equation, see \cref{sec: master equation jump process}, the time-reversed transition function $p_{0|t}$ is typically not tractable, and we therefore must resort to the learning task of learning an approximate $p_{0|t}$.
% One idea is to approximate $p_{0|t} \approx p^\theta_{0|t}$ by a distribution parameterized in $\theta \in \R^p$ (e.g. via neural networks), see, e.g. \cite{campbell2022continuous} and \Cref{sec: Learning the reversed transition probability}. We suggest an alternative method in the following.

\section{The Ehrenfest process}
\label{sec: Ehrenfest process}

In principle, we are free to choose any forward process $M(t)$ for which we can compute the forward transition probabilities $p_{t|0}$ and which is close to its stationary distribution after a not too long run time $T$. In the sequel, we argue that the \textit{Ehrenfest process} is particularly suitable -- both from a theoretical and practical perspective.

The Ehrenfest process was introduced by the Russian-Dutch and German physicists Tatiana and Paul Ehrenfest to explain the second law of thermodynamics, see \cite{ehrenfest1907zwei}.
It is defined as the number of particles in the first of two containers, where the particles can move between the containers. The process is a birth-death process with values in $\{0, \dots, S\}$, where $S$ is the total number of particles. The transition rates are given by
\begin{equation}
  \label{eq: definition Ehrenfest}
  E_S(t) := \sum_{i=1}^{S} Z_i(t),
\end{equation}
where each of the independent $Z_i$ is a process on the state space $\Omega = \{0, 1\}$ with transition rates $r(0 | 1) = r(1 | 0) = \frac{1}{2}$ (sometimes called \textit{telegraph} or \textit{Kac process}). We note that the \textit{Ehrenfest process} is a birth-death process with values in $\{0, \dots, S\}$ and transition rates
\begin{equation}
  \label{eq: Ehrenfest rates}
  r(x+1|x) = \frac{1}{2}(S-x), \qquad r(x-1|x) = \frac{x}{2}.
\end{equation}

One compelling property of the Ehrenfest process is that we can sample without needing to simulate trajectories.

Assuming $E_S(0) = x_0$, the Ehrenfest process can be written as
\begin{equation}
  E_S(t) = E_{0,S}(t) + E_{1,S}(t),
\end{equation}
where $E_{0, S}(t) \sim B(S - x_0, 1 - f(t))$ and $E_{1, S}(t) \sim B(x_0, f(t))$ are independent binomial random variables and $f(t) := %\mathbb{P}(Z_j(t) = 1 | Z_j(0)=1) =
  \frac{1}{2}\left(1 + e^{-t}\right)$. Consequently, the solution of the master equation for the Ehrenfest process is given by the discrete convolution
\begin{equation}
  \label{eq: convolution of binomials}
  p_{t|0}(x | x_0) %= \P\left(E_S(t) = x \right) 
  = \sum_{z\in \Omega}  \P\left(E_{0,S}(t) = z \right) \P\left(E_{1,S}(t) = x-z \right).
\end{equation}

\subsection{The Ehrenfest Process in the infinite state space limit}
\label{sec: forward scaled Ehrenfest process}

It is known that certain (appropriately scaled) Markov jump processes converge to state-continuous diffusion processes when the state space size $S + 1$ tends to infinity (see, e.g., \cite{kurtz1972relationship,gardiner1985handbook}).
For the Ehrenfest process, this convergence can be studied quite rigorously.
To this end, let us introduce the scaled Ehrenfest process

\begin{align}
  \label{eq: scaled Ehrenfest}
  \widetilde{E}_S(t) := \frac{2}{\sqrt{S}}\left(E_{S}(t) -  \frac{S}{2}\right)
\end{align}
with transition rates
\begin{align}
  r\left(x\pm \frac{2}{\sqrt{S}}\bigg|x\right) & = \frac{\sqrt{S}}{4}(\sqrt{S} \mp x), %\\
  % r\left(x- \frac{2}{\sqrt{S}}\bigg|x\right) &= \frac{\sqrt{S}}{4}(\sqrt{S} + x),
  \label{eq: scaled Ehrenfest rates}
\end{align}
%\end{subequations}
now having values in $\Omega = \left\{-\sqrt{S},-\sqrt{S}+\frac{2}{\sqrt{S}}, \dots, \sqrt{S} \right\}$. We are interested in the large state space limit $S \to \infty$, noting that this implies $\frac{2}{\sqrt{S}} \to 0$ for the transition steps, thus leading to a refinement of the state space.
We can augment the rates \cref{eq: scaled Ehrenfest rates} to be time-dependent by introducing the time transformation $\lambda_t$ as noted in \cref{eq: discrete rates time transformation one} and \cref{eq: discrete rates time transformation two} such that the time-dependent rates become
\begin{align}
  \label{eq: scaled Ehrenfest rates time dependent}
  r_t\left(x\pm \frac{2}{\sqrt{S}}\bigg|x\right) & = \lambda_t \ \frac{\sqrt{S}}{4}(\sqrt{S} \mp x).
\end{align}

The convergence of Markov jump processes to SDEs in the limit of large state spaces (with appropriately scaled jump sizes) has formally been studied via the Kramers-Moyal expansion \cite{gardiner1985handbook,van1992stochastic}. For more rigorous results, we refer, e.g., to \cite{kurtz1972relationship,kurtz1981approximation}.

The following convergence result is shown in \cite[Theorem 4.1]{sumita2004numerical}.
\begin{proposition}
  \label{prop: convergence of forward Ehrenfest}
  In the limit $S \to \infty$, the scaled Ehrenfest process $\widetilde{E}_S(t)$ converges in law to the Ornstein-Uhlenbeck process $X_t$ for any $t \in [0, T]$, where $X_t$ is defined via the SDE
  \begin{equation}
    \label{eq: OU process}
    \mathrm d X_t = -X_t \, \mathrm d t + \sqrt{2} \, \mathrm d W_t,
  \end{equation}
  with $W_t$ being standard Brownian motion.
\end{proposition}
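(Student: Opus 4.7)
The plan is to establish convergence of generators and then invoke a standard Trotter/Kurtz-type theorem to upgrade this to convergence in law. The natural approach is via the Kramers--Moyal expansion, which in this setting is tight enough to be made rigorous because only jumps of size $\pm 2/\sqrt{S}$ can occur.

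First I would write down the infinitesimal generator of the scaled Ehrenfest process acting on a smooth compactly supported test function $f \in C_c^\infty(\mathbb{R})$:
\begin{align}
(\mathcal{L}_S f)(x)
= r\!\left(x+\tfrac{2}{\sqrt{S}}\,\Big|\,x\right)\!\left[f\!\left(x+\tfrac{2}{\sqrt{S}}\right)-f(x)\right]
+ r\!\left(x-\tfrac{2}{\sqrt{S}}\,\Big|\,x\right)\!\left[f\!\left(x-\tfrac{2}{\sqrt{S}}\right)-f(x)\right].
\end{align}
Next I would Taylor expand $f(x \pm 2/\sqrt{S})$ to third order, which gives a remainder of order $S^{-3/2}$ multiplied by the rates (which grow like $S$), so the error is uniformly $O(S^{-1/2})$ on compact sets. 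Substituting the explicit rates from \eqref{eq: scaled Ehrenfest rates}, the first-order combination $r_+ - r_-$ collapses to $-\sqrt{S}\,x/2$ and the zeroth-order combination $r_+ + r_-$ collapses to $S/2$. Multiplying through yields
\begin{align}
(\mathcal{L}_S f)(x) = -x f'(x) + f''(x) + O(S^{-1/2}),
\end{align}
and the leading term is precisely the generator $\mathcal{L}_{\mathrm{OU}}$ of the Ornstein--Uhlenbeck process \eqref{eq: OU process}, since its diffusion coefficient $\sqrt{2}$ contributes $\tfrac{1}{2}(\sqrt{2})^2 f'' = f''$.

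To turn pointwise generator convergence into convergence in law of the processes, I would appeal to the standard result that convergence of generators on a core, together with tightness of the family $\{\widetilde{E}_S(t)\}_{S}$ in the Skorokhod space $D([0,T],\mathbb{R})$, implies weak convergence of the processes (this is the Trotter--Kurtz theorem; see e.g.\ Ethier--Kurtz, Theorem~1.6.5 and Chapter~4). The space $C_c^\infty(\mathbb{R})$ is a core for $\mathcal{L}_{\mathrm{OU}}$, and the uniform-on-compacts convergence derived above is exactly what the theorem requires. Tightness follows from a standard moment/compact-containment argument exploiting the fact that the rates are linear in $x$, so $\mathbb{E}[\widetilde{E}_S(t)^2]$ stays uniformly bounded on $[0,T]$. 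Marginal convergence $\widetilde{E}_S(t) \Rightarrow X_t$ for each fixed $t$, which is what the statement asks for, is then an immediate consequence.

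The main obstacle I anticipate is not the Taylor expansion itself but making the convergence of generators uniform enough on the discrete state lattice to feed into the Trotter--Kurtz machinery; the natural workaround is to restrict to test functions in $C_c^\infty$, extend them to the lattice by restriction, and control the remainder using the polynomial growth of the rates together with the compact support of $f$. A more direct, self-contained alternative---and the one I would also sketch---is to cite \cite{sumita2004numerical} as the authors do, since Theorem~4.1 there already packages exactly this convergence statement; the generator computation above then serves as a transparent verification of why the limit is the particular OU SDE with drift $-x$ and diffusion $\sqrt{2}$, rather than some rescaled variant.
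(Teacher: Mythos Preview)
Your proposal is correct and follows essentially the same route as the paper: the paper's own argument (in the appendix it points to) computes the first two jump moments $b(x)=-x$ and $D(x)=2$ of the scaled Ehrenfest process---which is the same calculation as your Taylor expansion of the generator---and then defers to \cite[Theorem~4.1]{sumita2004numerical} for the rigorous convergence-in-law statement. Your write-up is in fact a bit more thorough than the paper's, since you explicitly name the Trotter--Kurtz machinery and the core/tightness ingredients that would be needed to make the heuristic rigorous, whereas the paper simply cites Sumita and presents the jump-moment computation as intuition.
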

\begin{proof}
  The proof is given in \cite[Theorem 4.1]{sumita2004numerical} and a simplified derivation can be found in \cref{app:cha4proofreverseehrenfest}.
\end{proof}

\subsection{The reverse Ehrenfest Process in the infinite state space limit}
\label{sec: scaled Ehrenfest process}

We first recall the scaled Ehrenfest process from \eqref{eq: scaled Ehrenfest},
\begin{equation}
  \widetilde{E}_S(t) := \frac{2}{\sqrt{S}}\left(E_{S}(t) -  \frac{S}{2}\right),
\end{equation}
and note that $\widetilde{E}_S \in \left\{-\sqrt{S},-\sqrt{S}+\frac{2}{\sqrt{S}},\dots,\sqrt{S} \right\}$, where the birth-death transitions transform from $\pm 1$ in $E_S$ to $\pm \frac{2}{\sqrt{S}}$ in its scaled version $\widetilde{E}_S$. Accordingly, the reverse rates from \cref{sec: reverse time jump processes} translate to
\begin{equation}
  \label{eq: scaled Ehrenfest backward rates}
  \cev{r}_t\left(x \pm \nicefrac{2}{\sqrt{S}} \middle| x\right)
  = \E_{x_0 \sim p_{0|t}(x_0 | x)}\left[ \frac{p_{t|0}\left(x \pm \nicefrac{2}{\sqrt{S}} \middle| x_0\right)}{p_{t|0}(x | x_0)} \right] r\left(x \middle| x \pm \nicefrac{2}{\sqrt{S}}\right).
\end{equation}

Inspecting \Cref{eq: scaled Ehrenfest backward rates}, which specifies the rate function of a backward Markov jump process, we realize that the time-reversal essentially depends on two things, namely the forward rate function with switched arguments as well as the conditional expectation of the ratio between two forward transition probabilities.
To gain some intuition, let us first assume that the state space size $S + 1$ is large enough and that the transition density $p_{t | 0}$ can be extended to $\R$ (which we call $\overline{p}_{t | 0}$) such that it can be approximated via a Taylor expansion.

For very large state spaces, the differences in the forward rates between two neighboring states can be assumed to vanish such that we have
\begin{equation}
  r\left(x \pm \nicefrac{2}{\sqrt{S}}\middle| x\right) \approx  r\left(x \middle| x \mp \nicefrac{2}{\sqrt{S}} \right)
\end{equation}

Furthermore, we can expand the ratio with the Taylor expansion of the first order to obtain
\begin{subequations}
  \begin{align}
    \label{eq: Taylor fraction continous extension}
    \frac{p_{t|0}\left(x \pm \nicefrac{2}{\sqrt{S}}\middle| x_0\right)}{p_{t|0}(x|x_0)}
     & \approx \frac{\overline{p}_{t|0}(x|x_0) \pm \nicefrac{2}{\sqrt{S}} \nabla \overline{p}_{t|0}(x|x_0)}{\overline{p}_{t|0}(x|x_0)} \\
     & = 1 \pm \nicefrac{2}{\sqrt{S}} \nabla \log \overline{p}_{t| 0}(x | x_0),
  \end{align}
\end{subequations}
where the log-derivative trick was employed.
The conditional expectation of $\nabla \log \overline{p}_{t| 0}(x | x_0)$ is reminiscent of the score function in SDE-based diffusion models, albeit be it the score function of a conditional distribution.
This already hints at a close connection between the time-reversal of Markov jump processes and score-based generative modeling.

In fact, by taking  the appropriate conditional expectation of the conditional score function, we can recover the unconditional score function, as we will see in the following.
% First, we will derive the following conditional form of the score function of a stochastic process.
Noting the identity $p^\mathrm{SDE}_t(x) = \int_{\R^d} p^\mathrm{SDE}_{t | 0}(x|x_0) p_\mathrm{data}(x_0)\mathrm d x_0$, we can compute
\begin{subequations}
  \begin{align}
    \nabla \log p^\mathrm{SDE}_t(x) & = \frac{\nabla_x p^\mathrm{SDE}_t(x)}{p^\mathrm{SDE}_t(x)}                                                                                                                                                    \\
                                    & = \frac{\int_{\R^d} \nabla_x \log p^\mathrm{SDE}_{t | 0}(x|x_0) p^\mathrm{SDE}_{t | 0}(x|x_0) p_\mathrm{data}(x_0)\mathrm d x_0}{\int_{\R^d} p^\mathrm{SDE}_{t | 0}(x|x_0) p_\mathrm{data}(x_0)\mathrm d x_0} \\
                                    & = \E_{x_0 \sim p^\mathrm{SDE}_{0|t}(x_0 | x)}\left[\nabla_x \log p^\mathrm{SDE}_{t | 0}(x |x_0) \right],
  \end{align}
\end{subequations}
where it holds $p^\mathrm{SDE}_{0|t}(x_0 | x) = \frac{p^\mathrm{SDE}_{t | 0}(x|x_0) p_\mathrm{data}(x_0)}{\int_{\R^d} p^\mathrm{SDE}_{t | 0}(x|x_0) p_\mathrm{data}(x_0)\mathrm d x_0}$.
Thus we can see that the ratio at the heart of the reverse rates in fact corresponds to the score function for Ehrenfest processes in the limit of infinite states,
\begin{align}
  \label{eq: reverse Ehrenfest with score function}
  \cev{r}_t\left(x \pm \nicefrac{2}{\sqrt{S}} | x\right)
   & = \E_{x_0 \sim p_{0|t}(x_0 | x)}\left[ \frac{p_{t|0}\left(x \pm \nicefrac{2}{\sqrt{S}} | x_0\right)}{p_{t|0}(x | x_0)} \right] r\left(x | x \pm \nicefrac{2}{\sqrt{S}}\right) \\
   & = \E_{x_0 \sim p_{0|t}(x_0 | x)}\left[1 \pm \nicefrac{2}{\sqrt{S}} \nabla \log p_{t|0}(x | x_0) \right] r\left(x \middle| x \pm \nicefrac{2}{\sqrt{S}}\right).
\end{align}

Earlier, it was shown that the scaled Ehrenfest process converges to the OU process for sufficiently large states spaces.
Given the rates of the reverse Ehrenfest process, we can recover the reverse OU process as well, as we will see in the following proposition

\begin{proposition}
  \label{prop: jump moment reversed Ehrenfest}
  Let $b$ and $D$ be the first and second jump moments of the scaled Ehrenfest process $\widetilde{E}_S$. The first and second jump moments of the time-reversed scaled Ehrenfest $\cev{\widetilde{E}}_S$ are then given by
  \begin{align}
    \label{eq: reversed first jump moment}
    \cev{b}(x, t)                                     & = -b(x) + D(x) \, \E_{x_0 \sim p_{0|t}(x_0 | x)}\left[ \frac{\Delta_S p_{t | 0}(x|x_0)}{p_{t | 0}(x|x_0)} \right] + o(S^{-1/2}), \\
    \label{eq: reversed second jump moment}\cev{D}(x) & = D(x) + o(S^{-1/2}),
  \end{align}
  where we can express the finite difference in terms of the actual number of states $S$ instead of the increment $\delta$, such that
  \begin{equation}
    \Delta_S p_{t | 0}(x|x_0) := \frac{p_{t | 0}(x + \frac{2}{\sqrt{{S}}}|x_0) - p_{t | 0}(x|x_0)}{\frac{2}{\sqrt{S}}}
  \end{equation}
  is a one step difference and $p_{t|0}$ and $p_{0|t}$ are the forward and reverse transition probabilities of the scaled Ehrenfest process.
\end{proposition}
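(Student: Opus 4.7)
The plan is to compute both jump moments directly from their defining sums. Since the scaled Ehrenfest only jumps by $h := 2/\sqrt{S}$, the moments reduce to just two summands each:
\begin{align*}
\cev{b}(x,t) = h\left[\cev{r}_t(x+h\,|\,x) - \cev{r}_t(x-h\,|\,x)\right], \qquad \cev{D}(x) = h^2\left[\cev{r}_t(x+h\,|\,x) + \cev{r}_t(x-h\,|\,x)\right],
\end{align*}
and analogously for the forward $b(x)$ and $D(x)$. The first step is then to substitute the reverse-rate identity \eqref{eq: scaled Ehrenfest backward rates}, which factorizes $\cev{r}_t(x \pm h\,|\,x) = \gamma_\pm(x,t) \cdot r_t(x\,|\,x \pm h)$, with $\gamma_\pm := \E_{x_0 \sim p_{0|t}(x_0|x)}[p_{t|0}(x \pm h\,|\,x_0)/p_{t|0}(x|x_0)]$.

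Two short expansions simplify each factor. For the score-like factor, adding and subtracting $p_{t|0}(x|x_0)$ in the numerator yields the exact identity $\gamma_+ = 1 + h \cdot \E_{x_0}[\Delta_S p_{t|0}(x|x_0)/p_{t|0}(x|x_0)]$, and an analogous expression for $\gamma_-$ with $x$ replaced by $x-h$ in the argument of $\Delta_S p_{t|0}$; by the smoothness of $p_{t|0}$, which via \eqref{eq: convolution of binomials} is a convolution of binomials with a Gaussian-like continuous extension, these two forms coincide to leading order. For the rate factor, the explicit linearity \eqref{eq: scaled Ehrenfest rates time dependent} of the Ehrenfest rates gives at once $r_t(x\,|\,x \pm h) = r_t(x \mp h\,|\,x) + \lambda_t/2$, an additive $O(1)$ shift that is negligible compared with $r_t(x \mp h\,|\,x) = O(S)$.

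Inserting both expansions into the jump-moment formulas and grouping by powers of $h$ delivers the claim. For $\cev{b}$, the zeroth-order-in-$h$ piece of $\gamma_\pm$ multiplied by the \emph{difference} of the forward rates reproduces exactly $h[r_t(x-h|x) - r_t(x+h|x)] = -b(x)$, the expected reverse-time drift contribution. The first-order-in-$h$ piece of $\gamma_\pm$ multiplied by the \emph{sum} $r_t(x+h|x) + r_t(x-h|x) = S\lambda_t/2$ produces $h \cdot h \cdot \E[\Delta_S p_{t|0}/p_{t|0}] \cdot S\lambda_t/2$, which equals $D(x)\, \E[\Delta_S p_{t|0}/p_{t|0}]$ since $D(x) = 2\lambda_t$. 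All remaining cross terms carry at least two extra factors of $h$ and collapse into the $o(S^{-1/2})$ remainder. For $\cev{D}$, the $\gamma_\pm \approx 1$ part immediately recovers $h^2[r_t(x+h|x) + r_t(x-h|x)] = D(x)$, while the linear-in-$h$ corrections from $\gamma_\pm$ combine with the antisymmetric part $r_t(x+h|x) - r_t(x-h|x) = O(\sqrt{S})$ to yield only $O(1/S)$ terms.

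The main technical obstacle will be honest bookkeeping of the error terms: in particular, aligning the argument of $\Delta_S p_{t|0}$ between the expressions for $\gamma_+$ and $\gamma_-$ introduces a shift by $h$, and one must confirm that the resulting residual cross terms vanish faster than $1/\sqrt{S}$ once multiplied by the appropriate rate factors. Because $p_{t|0}$ in \eqref{eq: convolution of binomials} admits uniform de Moivre--Laplace-type bounds on its finite differences, this bookkeeping reduces to a routine application of those smoothness estimates together with dominated convergence for the conditional expectation against $p_{0|t}$.
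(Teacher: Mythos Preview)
Your proposal is correct and follows essentially the same route as the paper: both substitute the reverse-rate identity \eqref{eq: scaled Ehrenfest backward rates}, expand the ratio via the exact one-step Newton identity $p_{t|0}(x\pm h|x_0)/p_{t|0}(x|x_0)=1\pm h\,\Delta_S p_{t|0}/p_{t|0}$, use the linearity of the Ehrenfest rates to flip $r_t(x\,|\,x\pm h)$ into $r_t(x\mp h\,|\,x)+O(1)$, and then collect the resulting terms into $-b(x)$ and $D(x)\,\E[\Delta_S p_{t|0}/p_{t|0}]$. The technical obstacle you flag---reconciling the forward finite difference in $\gamma_+$ with the backward one in $\gamma_-$---is exactly the step the paper handles by writing $\Delta_{\bar\delta}p_{t|0}=\Delta_\delta p_{t|0}+o(\delta)$, so your bookkeeping plan matches theirs.
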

\begin{proof}
  The proof is outlined in the appendix \ref{app:cha4proofreverseehrenfest}.
\end{proof}

The reverse jump moments imply that the time-reversed Ehrenfest process is expected to converge in law to the time-reversed Ornstein-Uhlenbeck process.
For $S \to \infty$, we know that the forward Ehrenfest process converges to the Ornstein-Uhlenbeck process, i.e. $p_{t|0}$ converges to $p_{t|0}^\mathrm{OU}$, where $p_{t|0}^\mathrm{OU}(x | x_0)$ is the transition density of the Ornstein-Uhlenbeck process starting at $X_0 = x_0$.

Together with the fact that the finite difference approximation operator $\Delta_S$ converges to the first derivative, this implies that $\E_{x_0 \sim p_{0|t}(x_0 | x)}\left[ \frac{\Delta_S p_{t | 0}(x|x_0)}{p_{t | 0}(x|x_0)} \right]$ is expected to converge to $\E_{x_0 \sim p_{0|t}^\mathrm{OU}(x_0 | x)}\left[ \nabla \log p_{t|0}^\mathrm{OU}(x | x_0)  \right]$.
An intuition can be build by applying the log-derivative trick of a conditional distribution $p(x|y)$ by observing that $\nabla \log p(x|y) = \frac{\nabla p(x|y)}{p(x|y)}$.

This shows that this conditional expectation is the score function of the Ornstein-Uhlenbeck process, i.e. $\nabla \log p_t^\mathrm{OU}(x) = E_{x_0 \sim p_{0|t}^\mathrm{OU}(x_0 | x)}\left[ \nabla \log p_{t|0}^\mathrm{OU}(x | x_0)  \right]$. Finally, we note that the first and second jump moments converge to the drift and the square of the diffusion coefficient of the limiting SDE, respectively \cite{gardiner1985handbook}.

Therefore, the scaled time-reversed Ehrenfest process $\cev{\widetilde{E}}_S(t)$ is expected to converge in law to the process $Y_t$ given by
\begin{align}
  \label{eq: time-reversed OU}
  \mathrm d Y_t = \left(Y_t + 2 \nabla \log p_{T-t}^\mathrm{OU}(Y_t) \right) \mathrm d t + \sqrt{2} \, \mathrm d W_t,
\end{align}
which is the time-reversal of the Ornstein-Uhlenbeck process stated in \eqref{eq: OU process}. Note that we write \eqref{eq: time-reversed OU} as a forward process from $t = 0$ to $t=T$, where $W_t$ is a forward Brownian motion, which induces the time-transformation $t \mapsto T-t$ in the score function.

\subsection{Reverse Rate Estimators}

It should be noted that the convergence of the scaled Ehrenfest process to the Ornstein-Uhlenbeck process implies
\begin{equation}
  \label{eq: approx of forward transition by Gaussian}
  p_{t|0}(x|x_0) \approx p_{t|0}^{\mathrm{OU}}(x|x_0) := \mathcal{N}(x; \mu_t(x_0), \sigma_t^2)
\end{equation}
with $\mu_t(x_0) = x_0 e^{-t}$ and $\sigma_t^2 = (1 - e^{-2t})$.

Substituting $p_{t|0}$ in \eqref{eq: reverse Ehrenfest with score function} with the approximation $p_{t|0}^{\mathrm{OU}}$ allows us to define the ratio in terms of two Gaussian distributions which simplifies the ratio.
With this approximation \eqref{eq: approx of forward transition by Gaussian}, we can define the loss
\begin{align}
  \label{eq: Gauss loss}
  \begin{split}
    \mathcal{L}_\mathrm{Gauss}(\varphi):=
    \quad\E\left[\left(\varphi(x,t) - \exp\left(\frac{\mp 2(x - \mu_t(x_0))\delta - \delta^2}{2 \sigma^2_t}\right)\right)^2\right].
  \end{split}
\end{align}

Unfortunately, \cref{eq: Gauss loss} includes the non-linear exponential function in the target function, which makes it suboptimal to train against.
The approximation \eqref{eq: approx of forward transition by Gaussian} therefore allows us to approximately model the marginal distributions $p_{t|0}(x|x_0)$ as Gaussian distributions.
For the quantity in the conditional expectation \eqref{eq: scaled Ehrenfest backward rates} and the shorthand notation $\delta := \frac{2}{\sqrt{S}}$ for the state space differences we can thus compute
\begin{subequations}
  \begin{align}
    \label{eq: Gaussian approximation of ratio}
    \frac{p_{t|0}\left(x \pm \delta \middle| x_0\right)}{p_{t|0}(x|x_0)}
    \approx & \exp\left(\frac{\mp 2(x - \mu_t(x_0))\delta - \delta^2}{2 \sigma^2_t}\right)                                                                                                                                                     \\
    \approx & \exp\hspace{-0.1cm}\left( \hspace{-0.1cm}-\frac{\delta^2}{2 \sigma_t^2}\right)\hspace{-0.15cm}\left(\hspace{-0.05cm}1 \mp \frac{(x-\mu_t(x_0))\delta}{\sigma^2} + \frac{\left((x-\mu_t(x_0))\delta\right)^2}{2\sigma^4} \right).
    \label{eq: Taylor of fraction}
  \end{align}
\end{subequations}
\Cref{eq: Taylor of fraction} is a Taylor expansion of the exponential function in \eqref{eq: Gaussian approximation of ratio} and is valid for small $\delta$.

The full reverse jump rates are computed by taking the expectation of the fraction, respectively its approximations, over the distribution $p_{0|t}(x_0 | x)$.
The approximation \eqref{eq: Taylor of fraction} has allowed us to linearize the previously non-linear fraction such that the application of the conditional expectation is now feasible.
Furthermore the first moment of an OU process $\mu_t(x_0) = w(t) x_0$ is a linear function in $x_0$ which further simplifies the expectation of \cref{eq: scaled Ehrenfest backward rates} to
\begin{subequations}
  \begin{align}
            & \E_{x_0 \sim p_{0|t}(x_0 | x)}\left[ \frac{p_{t|0}\left(x \pm \delta | x_0\right)}{p_{t|0}(x | x_0)} \right]                                                                                     \\
    \approx & \exp\hspace{-0.1cm}\left( \hspace{-0.1cm}-\frac{\delta^2}{2 \sigma_t^2}\right)\hspace{-0.15cm}\Bigg( \hspace{-0.05cm}1 \mp \frac{(x-\E_{x_0 \sim p_{0|t}(x_0 | x)}[\mu_t(x_0)])\delta}{\sigma^2} \\
            & \hspace{2.6cm} + \frac{\E_{x_0 \sim p_{0|t}(x_0 | x)}\left[\left((x-\mu_t(x_0))\delta\right)^2 \right]}{2\sigma^4} \Bigg)
  \end{align}
\end{subequations}

Under the assumption that the forward transition probabilities $p_{t|0}$ can be approximated by Gaussian distributions and its corresponding Taylor expansion, we can now approximate the reverse jump rates of the scaled Ehrenfest process by the conditional expectation of $\E_{x_0 \sim p_{0|t}(x_0 | x)}[x_0]$ for the first order approximation.
The expansion of the fraction in \eqref{eq: Taylor of fraction} to the second order adds an additional term to both the birth as well as the death rate equally.

We distinguish our proposed reverse rate estimators into two broad classes: conditional expectations and continuous processes.
The conditional expectation based losses are derived with the Taylor expansion of the approximation of the ratio of transition probabilities and predict the conditional expectations of the first and second moments of the Taylor expansion.
In consequence this allows us to consider the following loss functions,
\begin{align}
  \label{eq: first Taylor ratio loss}
  \mathcal{L}_\mathrm{Taylor}(\varphi_1) :=   & \E\left[\left(\varphi(x, t) - \mu_t(x_0) \right)^2 \right]                          \\
  \mathcal{L}_\mathrm{Taylor,2}(\varphi_2) := & \E\left[\left(\varphi(x, t) - \left((x-\mu_t(x_0))\delta\right)^2 \right)^2 \right]
\end{align}
where the expectations are over $x_0 \sim p_\mathrm{data}, t\sim\mathcal{U}(0,T),x\sim p_{t|0}(x|x_0)$ and $\varphi$ are function approximators such as specific deep neural network architectures.

We saw earlier in \cref{eq: reverse Ehrenfest with score function} that we can leverage the continuous state limit to approximate the reverse Ehrenfest process by the Stein score of the Ornstein-Uhlenbeck process.
The first order Taylor approximation fortuitously connects to score based generative modeling, where the score function is the gradient of the log-likelihood of the data distribution.
In the case of the Gaussian marginal distribution of the OU process, the score equates to
\begin{align}
  \label{eq: OU score}
  \nabla \log p^\mathrm{OU}_{t|0}(x|x_0) = -\frac{x - \mu_t(x_0)}{\sigma^2_t}.
\end{align}
and we can see that the scaled Ehrenfest process employs a scaled score function in the reverse jump rates.
This motivates us to train a state-discrete scaled Ehrenfest model with the loss defined by
\begin{subequations}
  \label{eq: forward OU loss}
  \begin{align}
    \mathcal{L}_\mathrm{OU}(\varphi)
     & := \E\left[\left(\varphi(x, t) - \nabla \log p_{t|0}^\mathrm{OU}(x|x_0)\right)^2\right]         \\
     & =  \E\left[\left(\varphi(x, t) +\frac{\left(x-\mu_t(x_0) \right)}{\sigma_t^2} \right)^2\right],
  \end{align}
\end{subequations}
where the expectation is over $x_0 \sim p_\mathrm{data}, t\sim\mathcal{U}(0,T),x\sim p_{t|0}(x|x_0)$ and where $\mu_t(x_0) = x_0 e^{-t}$ and $\sigma_t^2 = (1 - e^{-2t})$, as before.
In fact, this loss is completely analog to the denoising score matching loss in the state-continuous setting.
We later set $\varphi = 1 \pm \frac{2}{\sqrt{S}}\varphi^*$, where $\varphi^*$ is the minimizer of \eqref{eq: forward OU loss}, to get the approximated conditional expectation.

\subsection{Connection to DDPM}

As we have seen in \cref{eq: Taylor fraction continous extension}, we can directly link the scaled Ehrenfest process to score-based generative modeling in continuous time and space.
In particular, we can employ any model which reverses a diffusive forward Ornstein-Uhlenbeck process by learning the corresponding score.
For instance, we can rely on DDPM models \cite{ho2020denoising}, which typically consider the forward SDE
\begin{align}
  \label{eq: DDPM forward process}
  \mathrm d X_t = -\frac{1}{2} \beta(t) X_t + \sqrt{\beta(t)} \mathrm dW_t
\end{align}
on the time interval $[0, 1]$, where $\beta:[0, 1] \to \R$ is a function that scales time.
For the process \cref{eq: DDPM forward process} conditioned at the initial value $X_0 = x_0$ it holds
\begin{equation}
  X_t | x_0 \sim \mathcal{N}\left(\exp\left( -\frac{1}{2} \int_0^t \beta(s) \mathrm ds \right) x_0, \ 1 - \exp\left(-\int_0^t \beta(s) \mathrm ds \right) \right).
  \label{eq: ddpm forward}
\end{equation}
Whereas DDPM was initially proposed with discretized time steps, we consider the continuous time limit here and choose $\beta(t) := \beta_\mathrm{min} + t (\beta_\mathrm{max} - \beta_\mathrm{min})$ with $\beta_\mathrm{min} = 0.1, \beta_\mathrm{max} = 20$, as suggested in \cite{song2020score}. Note that this typically guarantees that $X_1$ is approximately distributed according to $\mathcal{N}(0, 1)$, independent of $x_0$.

These models usually transport a standard Gaussian to the target density that is supported on $[-1, 1]^d$.
In order to cope with the fact that the scaled Ehrenfest process terminates (approximately) at a standard Gaussian irrespective of the size $S + 1$, we typically choose $S = 255^2$ such that the interval $[-1, 1]$ contains $256$ states that correspond to the RGB color values of images, recalling that the increments between the states are $\frac{2}{\sqrt{S}}$.

We can draw samples from the solution of the forward process by sampling from the corresponding Gaussian distribution conditioned on the initial condition $x_0$,
\begin{align}
  X_t & = \underbrace{\exp\left( -\frac{1}{2} \int_0^t \beta(s) \mathrm ds \right) x_0}_{\mu_t(x_0)} + \epsilon \underbrace{\sqrt{1 - \exp\left(-\int_0^t \beta(s) \mathrm ds \right)}}_{\sigma_t}, \quad \epsilon \sim \mathcal{N}(0, 1).
\end{align}

The DDPM framework implicitly trains a model $varphi(x, t)$ on the score of the forward Ornstein-Uhlenbeck process \cref{eq: OU score}.
As outlined by the authors of DDPM, the loss function can be simplified by noting that $\mu_t(x_0) = X_t - \epsilon_t \sigma_t$ which simplifies the score, and therefore the loss, to $\nabla \log p_{t|0}^\mathrm{OU}(x|x_0) = \nicefrac{\varepsilon_t}{\sigma_t}$,
\begin{align}
  \mathcal{L}_\mathrm{OU}(\varphi)
   & = \E\left[\left(\varphi(x, t) - \nabla \log p_{t|0}^\mathrm{OU}(x|x_0)\right)^2\right]          \\
   & = \E\left[\left(\varphi(x, t) +\frac{\left(X_t-\mu_t(x_0) \right)}{\sigma_t^2} \right)^2\right] \\
   & = \E\left[\left(\varphi(x, t) +\frac{\varepsilon_t}{\sigma_t} \right)^2\right]
\end{align}
With appropriate scaling we can train the model against the unscaled noise $\varepsilon_t$,
\begin{align}
  \mathcal{L}_\mathrm{DDPM}(\varphi)
   & = \E\left[\left(\varphi(x, t) - \varepsilon_t \right)^2\right].
\end{align}

Subsequently, we can recover the reverse-time jump rates with
\begin{align}
  \label{eq: DDPM to reverse Ehrenfest}
  \cev{r}_t\left(x \pm \nicefrac{2}{\sqrt{S}} | x\right) = \left( 1 \pm \frac{2}{\sqrt{S}} \frac{\varphi(x, t)}{\sigma_t} \right) r_t(x | x \pm \nicefrac{2}{\sqrt{S}}),
\end{align}

Further, noting the actual Ornstein-Uhlenbeck process that DDPM is trained on has a time-dependent behavior due to $\beta(t)$, we employ the time scaling $\lambda_t = \frac{1}{2}\beta(t)$, resulting in the (time-dependent) rates following \cref{eq: scaled Ehrenfest rates time dependent},
\begin{equation}
  r_t\left(x\pm \frac{2}{\sqrt{S}}\bigg|x\right) = \frac{\beta(t)}{2} \frac{\sqrt{S}}{4}(\sqrt{S} \mp x).
\end{equation}

% ==============================================================================

\section{Computational aspects}
\label{sec: computational aspects}

In this section, we comment on computational aspects that are necessary for the training and simulation of the time-reversal of our (scaled) Ehrenfest process.

\subsection{Modeling of dimensions}
\label{sec: modeling of dimensions}

In order to make computations feasible in high-dimensional spaces $\Omega^d$, we typically factorize the forward process, such that each dimension propagates independently, cf. \cite{campbell2022continuous}. Note that this is analog to the Ornstein-Uhlenbeck process in score-based generative modeling, in which the dimensions also do not interact, see, e.g., \eqref{eq: OU process}.

We thus consider
\begin{equation}
  p_{t|0}(x|y) = \prod_{i=1}^d p_{t|0}^{(i)}(x^{(i)}|y^{(i)}),
\end{equation}
where $p^{(i)}_{t|0}$ is the transition probability for dimension $i \in \{1, \dots, d \}$ and $x^{(i)}$ is the $i$-th component of $x \in \Omega^d$.

In \cite{campbell2022continuous} it is shown that the forward and backward rates then translate to
\begin{equation}
  \label{eq: high dim forward rate}
  r_t(x | y) = \sum_{i=1}^d r_t^{(i)}(x^{(i)}|y^{(i)}) \Gamma_{x^{\neg i},y^{\neg i}},
\end{equation}
where $\Gamma_{x^{\neg i},y^{\neg i}}$ is one if all dimensions except the $i$-th dimension agree, and
\begin{equation}
  \label{eq: high dim backward rate}
  \cev{r}_t(x | y) = \sum_{i=1}^d \E\left[ \frac{p_{t|0}(y^{(i)}|x_0^{(i)})}{p_{t|0}(x^{(i)}|x_0^{(i)})}\right] r_t^{(i)}(x^{(i)}|y^{(i)}) \Gamma_{x^{\neg i},y^{\neg i}},
\end{equation}
where the expectation is over $x_0^{(i)} \sim p_{0|t}(x^{(i)}_0 | x)$. Equation \eqref{eq: high dim backward rate} illustrates that the time-reversed process does not factorize in the dimensions even though the forward process does.

Note with \eqref{eq: high dim forward rate} that for a birth-death process a jump appears only in one dimension at a time, which implies that
\begin{equation}
  r_t(x \pm \delta_i | x) = r_t^{(i)}(x^{(i)} \pm \delta^{(i)}_i|x^{(i)}),
\end{equation}
where now $\delta_i = (0, \dots, 0, \delta^{(i)}_i, 0, \dots, 0)^\top$ with $\delta^{(i)}_i$ being the jump step size in the $i$-th dimension. Likewise, \eqref{eq: high dim backward rate} becomes
\begin{equation}
  \label{eq: high dim backward rate birth-death}
  \cev{r}_t(x\pm \delta_i | x) =  \E\left[ \frac{p_{t|0}(y^{(i)}|x_0^{(i)})}{p_{t|0}(x^{(i)}|x_0^{(i)})}\right] r_t^{(i)}(x^{(i)}|x^{(i)} + \delta_i^{(i)}),
\end{equation}
where the expectation is over $x_0^{(i)} \sim p_{0|t}(x^{(i)}_0 | x)$, which still depends on all dimensions.

\subsection{Accelerated Sampling with Tau-Leaping}
\label{sec: tau leaping}

The fact that jumps only happen in one dimension at a time implies that the naive implementation of changing component by component (e.g. by using the Gillespie’s algorithm, see \cite{gillespie1976general}) would require a very long sampling time. As suggested in \cite{campbell2022continuous}, we can therefore rely on $\tau$-leaping for an approximate simulation methods \cite{gillespie2001approximate}. The general idea is to not simulate jump by jump, but wait for a time interval of length $\tau$ and apply all jumps at once. One can show that the number of jumps is Poisson distributed with a mean of $\tau\, \cev{r}_t \,(x | y)$.
As a side benefit, the highlighted factorization above is implicitly applied in the $\tau$-leaping algorithm, as the jumps are computed dimension by dimension.

\section{Numerical experiments}
\label{sec: experiments}

In this section, we demonstrate our theoretical insights in numerical experiments. If not stated otherwise, we always consider the scaled Ehrenfest process defined in \eqref{eq: scaled Ehrenfest}. We will compare the different variants of the loss, namely $\mathcal{L}_\mathrm{Taylor}$ defined in \eqref{eq: first Taylor ratio loss} and $\mathcal{L}_\mathrm{OU}$ defined in \eqref{eq: forward OU loss}.
%the Gaussian approximation of the Ehrenfest transition probabilities \eqref{eq: Gaussian approximation of ratio} (which we call ``Gau\ss''), the Taylor expansion of the ratio up to first order as defined in \eqref{eq: first Taylor ratio loss} (which we call ``Taylor'') and \eqref{eq: forward OU loss}, which we call ``forward OU''. 
%Recall \Cref{rem: Ehrenfest as discrete DDPM} for the connections of the Ehrenfest process to DDPM.

\subsection{Illustrative example}

Let us first consider an illustrative example, for which the data distribution is tractable. We consider a process in $d=2$ with $S=32$, where the $(S+1)^d = 33^2$ different state combinations in $p_\mathrm{data}$ are defined to be proportional to the pixels of an image of the letter ``E''. Since the dimensionality is $d=2$, we can visually inspect the entire distribution at any time $t\in [0, T]$ by plotting 2-dimensional histograms of the simulated processes. With this experiment we can in particular check that modeling the dimensions of the forward process independently from one another (as explained in \Cref{sec: modeling of dimensions}) is no restriction for the backward process.
%The intricacy of the experiment lies with modeling independent single pixels, which only reveal their statistical correlation as a an ensemble.
Indeed \Cref{fig: toy example E} shows that the time-reversed process can transport the prior distribution (which is approximately binomial, or, loosely speaking, a binned Gaussian) to the specified target. Again, note that this plot does not display single realizations, but entire distributions, which, in this case, are approximated with $500.000$ samples. We realize that in this simple problem $\mathcal{L}_\mathrm{Gauss}$ performs slightly better than $\mathcal{L}_\mathrm{OU}$ and $\mathcal{L}_\mathrm{Taylor}$. As expected, the approximations work sufficiently well even for a moderate state space size $S + 1$. As argued in \Cref{sec: scaled Ehrenfest process}, this should get even better with growing $S$.
For further details, we refer to \Cref{app: illustrative example}.

%\begin{figure}
%\centering
%\includegraphics[width=\linewidth]{figures/BinomialDiffusion.pdf}
%\caption{binomial Ehrenfest scaled forward}
%\label{fig: CIFAR noising}
%\end{figure}

\subsection{MNIST}

For a basic image modeling task, we consider the MNIST dataset, which consists of gray scale pixels and was resized to $32 \times 32$ to match the required input size of a U-Net neural network architecture\footnote{Taken from the repository \url{https://github.com/w86763777/pytorch-ddpm}.}, such that $d=32\times32=1024$ and $S=255$.
%We again use the scaled Ehrenfest process introduced in \Cref{sec: scaled Ehrenfest process}, which corresponds to the variance preserving SDE schedule in the continuous-state case, cf. \citet{song2020score} and \Cref{sec: scaled Ehrenfest process}. 
%once employing the Gaussian approximation \eqref{eq: Gaussian approximation of ratio} and once resorting to the first order Taylor approximation \eqref{eq: first Taylor ratio loss}. 
In \Cref{fig: mnist} we display generated samples from a model trained with $\mathcal{L}_\mathrm{OU}$. The models with the other losses look equally good, so we omit them.
For further details, we refer to \Cref{app: MNIST}.

\subsection{Image modeling with CIFAR-10}

As a more challenging task, we consider the CIFAR-10 data set, with dimension $d = 3 \times 32 \times 32 = 3072$, each taking $256$ different values \cite{krizhevsky2009learning}.
%In order to cope with the fact that the scaled Ehrenfest process terminates (approximately) at a standard Gaussian irrespective of the size $S$ and since we aim to make use of already trained score models from the state-continuous settings, we choose $S = 256^2$ such that the interval $[-1, 1]$, to which trained score models usually transport distribution support, contains $256$ bins that correspond to the RGB color values.
In the experiments we again compare our three different losses, however, realize that $\mathcal{L}_\mathrm{Gauss}$ did not produce satisfying results and had convergence issues, which might follow from numerical issues due to the exponential term appearing in \eqref{eq: Gauss loss}.
Further, we consider three different scenarios: we train a model from scratch, we take the U-Net model that was pretrained in the state-continuous setting, and we take the same model and further train it with our state-discrete training algorithm.

We display the Frechet-Inception-Distance (FID) and the Inception-Score (IS) metrics \cite{salimans2016improved, heusel2017gans} in \Cref{tab: CIFAR metrics}.
The IS score measures how well the generated images can be classified by a pretrained Inception network.
Since this is based on a discriminative evaluation approach, the FID was proposed as an alternative.
It measures the distance between the generated images and the real data distribution in the feature space of the penultimate layer of the Inception network.
Therefore, more attention should be given to the FID score, as it is more informative about the diversity and quality of the generated images.

When using only transfer learning, the different losses indicate different ways of incorporating the pretrained model.
We realize that both losses produce comparable results, with small advantages for $\mathcal{L}_\mathrm{OU}$.
Even without having invested much time in finetuning hyperparameters and sampling strategies, we reach competitive performance with respect to the alternative methods LDR \cite{campbell2022continuous} and D3PM \cite{austin2021structured}.
Remarkably, even the attempt with transfer learning returns good results, without having applied any further training.
For further details, we refer to \Cref{app: CIFAR}, where we also display more samples in Figures \ref{fig:cifar 10 taylor big 1}-\ref{fig:cifar 10 score big 2}.

\begin{figure}[ht]
  \centering
  \begin{minipage}{0.49\textwidth}
    \centering
    \includegraphics[width=\linewidth]{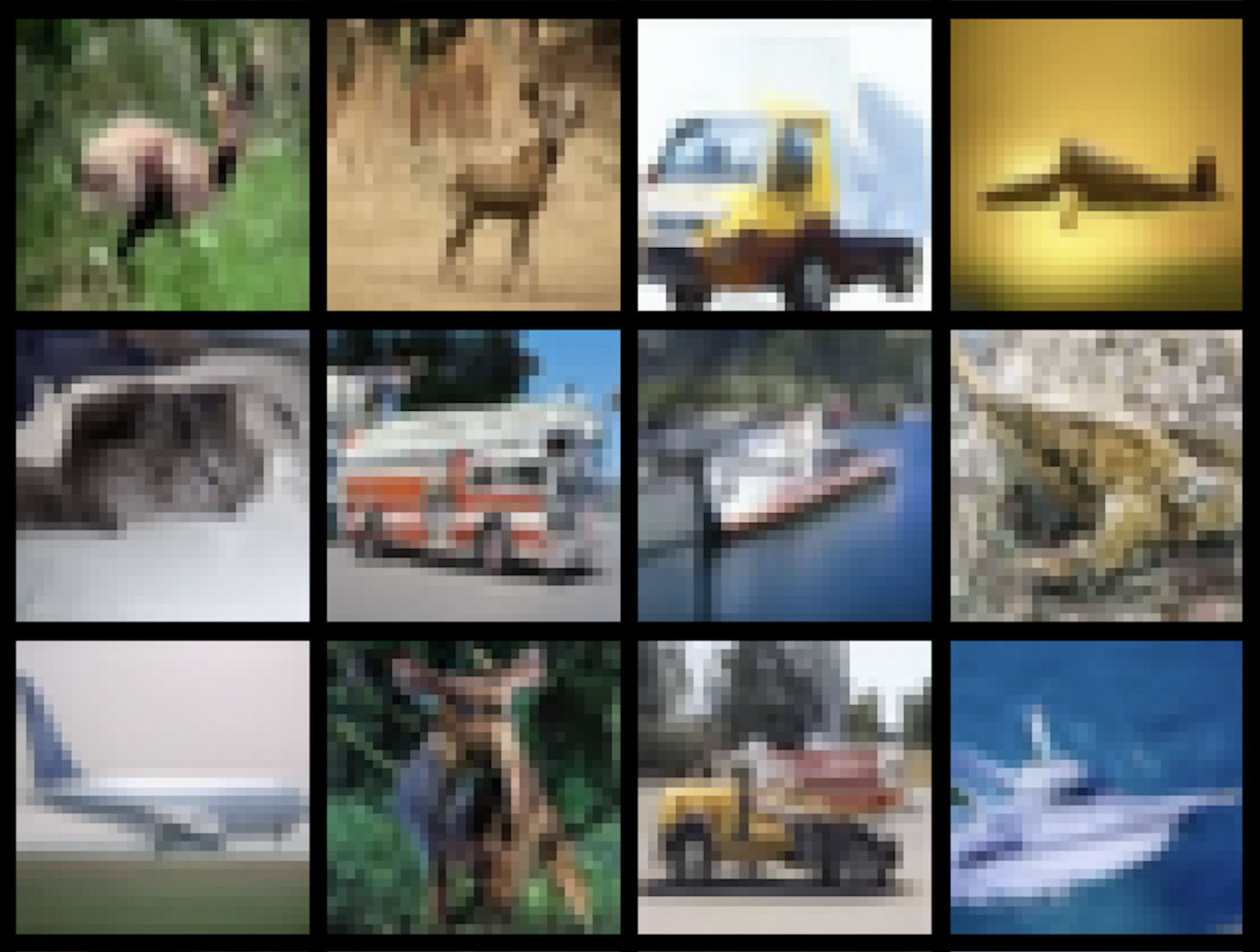} % Example image
    \caption{CIFAR-10 samples from the Ehrenfest process with a pretrained model, further finetuned with $\mathcal{L}_\mathrm{OU}$.}
    \label{fig:figure1}
  \end{minipage}\hfill
  \begin{minipage}{0.49\textwidth}
    \centering
    \includegraphics[width=\linewidth]{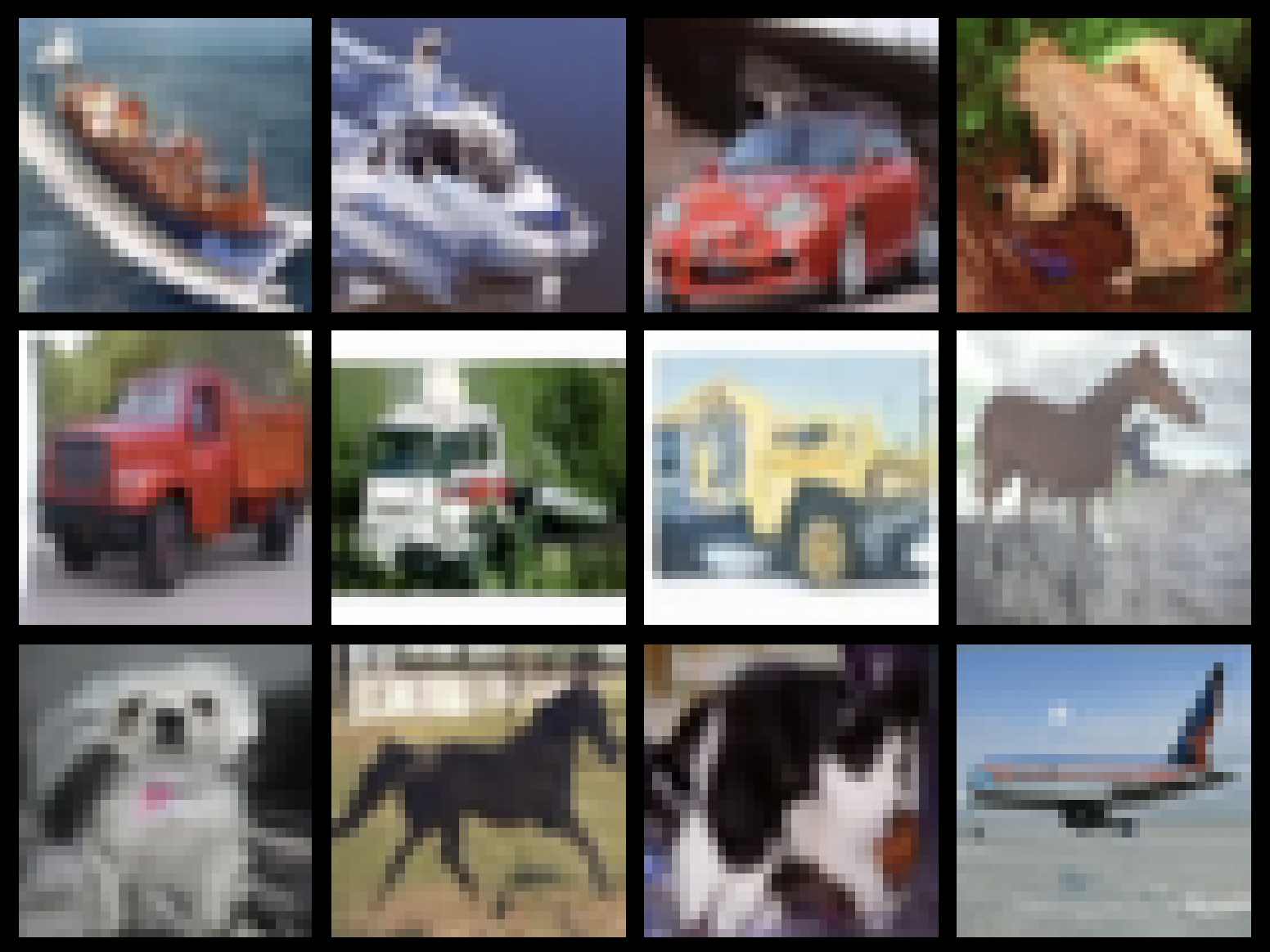} % Example image
    \caption{CIFAR-10 samples from the Ehrenfest process with a pretrained model, further finetuned with $\mathcal{L}_\mathrm{Taylor}$.}
    \label{fig:figure2}
  \end{minipage}
\end{figure}

\begin{table}[H]
  \centering
  \begin{tabular}{clcc}
    %\hline
                        &                                & IS ($\uparrow$) & FID ($\downarrow$) \\
    \hline
    Ehrenfest           & $\mathcal{L}_\mathrm{OU}$      & $8.75$          & $11.57$            \\
    (transfer learning) & $\mathcal{L}_\mathrm{Taylor}$  & $8.68$          & $11.72$            \\
    \hline
    Ehrenfest           & $\mathcal{L}_\mathrm{OU}$      & $9.35$          & $5.59$             \\
    (from scratch)      & $\mathcal{L}_\mathrm{Taylor}$  & $9.33$          & $5.53$             \\
                        & $\mathcal{L}_\mathrm{Taylor2}$ & $9.40$          & $5.44$             \\
    % \hline
    % Ehrenfest           & $\mathcal{L}_\mathrm{OU}$      & $9.14$          & $6.63$             \\
    % (pretrained)        & $\mathcal{L}_\mathrm{Taylor}$  & $9.06$          & $6.91$             \\
    % & Ratio & 7.905 & 63.625 \\
    \hline
                        & $\tau$-LDR (0)                 & $8.74$          & $8.10$             \\
    Alternative         & $\tau$-LDR (10)                & $9.49$          & $3.74$             \\
    methods             & D3PM Gauss                     & $8.56$          & $7.34$             \\
                        & D3PM Absorbing                 & $6.78$          & $30.97$            \\
    \hline
  \end{tabular}
  \caption{Performance in terms of Inception Score (IS) \cite{salimans2016improved} and Frechet Inception Distance (FID) \cite{heusel2017gans} on CIFAR-10 over $50.000$ samples. We compare two losses and consider three different scenarios: we train a model from scratch and we take the U-Net model that was pretrained in the state-continuous setting (called ``transfer learning'').}
  \label{tab: CIFAR metrics}
\end{table}

\nopagebreak
\section{Discussion and Limitations}

In this work, we have shown that the time-reversal of the scaled Ehrenfest process can be approximated by a score-based generative model.
This is a remarkable result, as it allows us to directly link the state-discrete Ehrenfest process to state-continuous score-based generative modeling.
We have shown that the reverse jump rates of the scaled Ehrenfest process can be approximated by the conditional expectation of the ratio of two forward transition probabilities.
This conditional expectation can be directly linked to the score function in the SDE setting, thus for the first time connecting a reverse-time discrete state-space jump process to a reverse-time continuous state-space diffusion process.

We have also shown that the time-reversed Ehrenfest process is expected to converge in law to the time-reversed Ornstein-Uhlenbeck process, which is the process that is typically used in score-based generative modeling.
This allows for transfer learning between the two cases.
Prominently, we can employ the approximated conditional expectation as the score function in a continuous model, or vice versa, we can train a continuous model and approximate the conditional expectation by the score.
In particular we can both use the score-based diffusion models which directly approximate the scaled score of the forward stochastic process or use the DDPM framework to translate the noise based formulation into the respective score of the forward process.

Naturally, the choice of a birth-death process is suboptimal compared to categorical diffusion models the movement of which is not restricted to immediate neighboring states.
The exact simulation of both the forward and backward processes changes a single dimension at a time by a single increment or decrement, which makes exact simulation of high-dimensional processes computationally infeasible.
For that reason, the methodology of $\tau$-leaping is employed to approximate the simulation of the time-reversed process which is a limiting factor in the precision of the simulation.
Secondly, the approximation of the conditional expectation by a function $\varphi_y$ is a challenging task, as we may need to approximate different functions $\varphi_y$ for different $y \in \Omega$ which adds a second approximating factor to the mix.
While being the most challenging task as it is a high dimensional optimization problem it is the single-most important factor in the precision of the simulation.
Thirdly, the choice of a discrete modelling of data might be the correct approach in terms of the definition of the data space, yet empirical results of continuous models for ordinal data with subsequent discretization have yielded consistently better results.
The ability of the approach to model data even between the discrete states seems to be a crucial factor in the success of continuous state diffusion models.
\chapter{Stochastic Bridges with Score Estimation}
\label{cha:stochasticbridge}

\begin{tcolorbox}[colback=gray!10!white, colframe=black]
      Parts of this chapter are mainly based on:
      \begin{itemize}
            \item \fullcite{winkler_2023}
      \end{itemize}
\end{tcolorbox}

Initially, we studied deterministic boundary values, in which the solution of the underlying dynamics was known and the boundary conditions were deterministic.
Subsequently, we extended our study to the stochastic setting, in which the boundary conditions were stochastic and the underlying dynamics were known only in the forward direction.
In this concluding chapter, we will study the most general case, in which the boundary conditions are stochastic and the underlying dynamics are unknown in both the forward and reverse directions.

In the stochastic process community, this problem setting is known as the Schr\"odinger bridge problem, which is a special case of a stochastic control problem that seeks to find the most likely evolution of a stochastic process that starts from a given probability distribution at an initial time and arrives at another given distribution at a final time.
This is done under the constraint that the evolution of the process follows the dynamics of a stochastic differential equation (SDE).
Essentially, it's about finding a process that connects two probability distributions over time in the most likely or "optimal" way, according to some criterion, usually involving minimizing an entropy-related functional.

In the context of boundary value problems, Schr\"odinger Bridges are boundary value problems with unknown dynamics in any direction and probabilistic boundary conditions.
The task is to learn \emph{two separate stochastic processes in both directions}, such that the probability densities of the corresponding state variables at initial and final times coincide with given densities serving as boundary conditions.
Upon solution, the most likely path or distribution evolution of the system from the initial to the final state is found, adhering to the dynamics governed by the differential equation of the problem.

\begin{figure}[bt]
      \centering
      \includegraphics[width=0.8\textwidth]{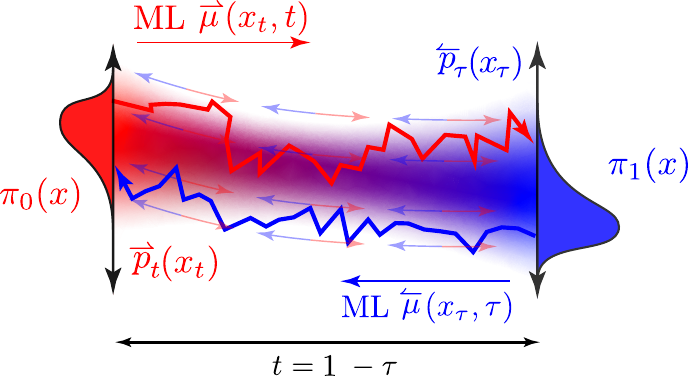}
      \caption{A figurative representation of the boundary value problem considered in this chapter.
            The task is to learn a time-reversible stochastic process in both directions, such that the marginal probability densities $\smash{\stackrel{\rightharpoonup}{p}_t(x_t)}$ and $\smash{\stackrel{\leftharpoonup}{p}_\tau(x_\tau)}$ of the corresponding state variables induced bei their respective drifts $\smash{\stackrel{\rightharpoonup}{\mu}(x_t, t)}$ and $\smash{\stackrel{\leftharpoonup}{\mu}(x_\tau, \tau)}$ of the respective SDEs at initial and final times coincide with given densities $\pi_0$ and $\pi_1$ serving as boundary conditions.}
      \label{fig:full bridge graphical abstract}
\end{figure}

Stochastic half-bridges can be seen as a special case of Schr\"odinger bridges as they provide the dynamics of the forward stochastic process while requiring the inference of the dynamics of the reverse process.
In contrast to earlier chapters, we were required to learn either time-reversible dynamics or the reverse dynamics of a given forward process.
In these cases either the solution of the dynamics or the dynamics in one direction were known.
This does not apply to Schr\"odinger bridges, where the dynamics in both directions are unknown and have to be learned from data.

Schr\"odinger bridges were introduced by E. Schr\"odinger in 1931 \cite{schrodinger1931umkehrung} \cite{schrodinger1932theorie} as the most likely temporal evolution of the probability density for diffusing particles between given initial and end distributions.
The boundary values thus become stochastic and the task is to learn a stochastic process in both directions.
Upon convergence, the forward process, parameterized path-wise by a stochastic differential equation, should transform a sample from the initial probability distribution into a sample from the final probability distribution.
Similarly, the backward process should achieve the same but in reverse.
Importantly, the two processes should converge onto the same unifying stochastic processes representing the stochastic bridge between the initial and final probability distribution.

\section{Iterative Proportional Fitting in Schr\"odinger Bridges}
\label{subsec:ipf}

The tractability of generative diffusion models, as introduced in the last chapter, rests upon the access to the solution of the diffusive forward process.
For any initial condition, we can evaluate analytically the probability and its corresponding score for any point in time $t$ and value $x_t$ from the readily available analytical probability distribution $p_{t|0}(x_t | x_0)$ of the forward process.
The solution of Schr\"odinger bridge problems is computationally more demanding.
Feasible methods are usually based on the so--called Iterative Proportional Fitting (IPF) algorithm \cite{chen2021optimal} \cite{sinkhorn1964relationship} \cite{peyre2019computational} (also known as Sinkhorn algorithm)  which can be understood from the formulation of the bridges as entropically regularized optimal transport
problems.

IPF solves the bridge problem by creating a convergent sequence of simpler forward and backward processes known as
half--bridges. For those sub--problems only \textit{one} of the two distributions at the boundaries of the time interval is kept fixed (alternating between initial and end points).
Recent algorithms differ in the way these half--bridge problems are solved.
\cite{bernton2019schr} presents a method that is based on sequential Monte Carlo techniques for efficiently sampling processes in both directions.
\cite{de2021diffusion} \cite{vargas2021solving} are,  to our knowledge,  the first papers to discuss Schr\"odinger bridges as generative data models from a machine learning perspective.
The construction of the half--bridges of the IPF algorithm is formulated in terms of an estimation for the drift functions of the corresponding SDE.
A drift function is learned from samples created by the half--bridge of the previous iteration using either a Gaussian process regression approach or by training a deep neural network.

In each iteration step $i$ two so--called {\em half--bridge} problems have to be solved.
Each half bridge consists of a reference process onto which we fit a second stochastic process with reverse reference time.
The provision of a reference process, thus fixing one half of the bridge, corresponds to the setup of diffusion models.
These half--bridges are defined by the recurrent optimization problems
\begin{align}
      \label{half_one}
      \mathbb{P}_i^* & = \arg\inf_{\mathbb{P} \in \mathbb{D}(\cdot, \pi_1) }  \KLfunc{\mathbb{P}}{\mathbb{Q}_{i-1}^{*}} \\
      \label{half_two}
      \mathbb{Q}_i^* & = \arg\inf_{\mathbb{Q} \in \mathbb{D}(\pi_0, \cdot) }  \KLfunc{\mathbb{Q}}{\mathbb{P}_{i}^{*}}
\end{align}

for $i=1,2,\ldots$ with an initial measure defined by the reference process, i.e. $\mathbb{Q}_{0}^{*} = \mathbb{Q}_{0}$ for $i=1$.
In the first half--bridge, one minimizes the KL--divergence with only the end condition $\pi_1$ fixed, whereas for the second half--bridge, only the initial condition $\pi_0$ is fixed.
As $i\to\infty$, the sequences $\mathbb{P}_{i}^{*}$ and $\mathbb{Q}_{i}^{*}$ converge to the solution of the
Schr\"odinger Bridge Problem. For a proof, see \cite{ruschendorf1995convergence}.
In Figure \ref{fig:sbp} we provide a visual intuition of IPF applied to the Schr\"odinger Bridge Problem.

\begin{figure}
      % \begin{adjustwidth}{-\extralength}{0cm}
      \centering
      \includegraphics[width=0.9\textwidth]{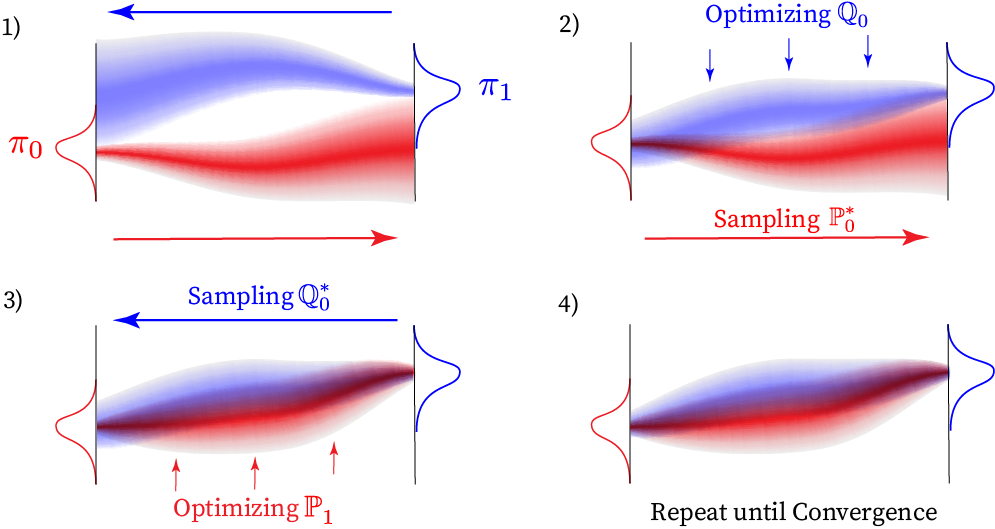}
      % \end{adjustwidth}
      \caption{Visualization of the convergence of a one-dimensional Schr\"odinger bridge problem via an Iterative Proportional Fitting style optimization.
            Subplot 1) shows the initial forward $\mathbb{P}^*_0$ and backward $\mathbb{Q}^*_0$ in red and blue.
            In the first half-bridge in subplot 2), $\mathbb{P}^*_0$ is held fixed and $\mathbb{Q}^*_0$ is obtained by optimizing equation (\ref{half_two}).
            Consequently, corresponding to equation (\ref{half_one}), $\mathbb{P}_0$ is fitted on a constant $\mathbb{Q}^*_0$ in subplot 3).
            This procedure is repeated until both $\mathbb{Q}^*_i$ and $\mathbb{P}^*_i$ converge according to some predetermined criterion as indicated by subplot 4).}
      \label{fig:sbp}
\end{figure}

To solve a half--bridge problem, one can use the fact that a given SDE with drift function $\overrightharpoon{\mu}(x, t)$ can also be solved
backward in time, where the resulting backward process is also represented by an SDE.  We define the reversed time as
$\tau \doteq 1 - t$ and the backward SDE as
\begin{align}
      dZ_{\tau} =  \backward{\mu}(Z_{\tau}, \tau) d\tau + \sigma dW_{\tau}
\end{align}
with a {\em backward drift} function which is given by the conditional expectation
\begin{align}
      \label{BW_drift:def}
      % \mu_{-}(z, 1- t) = \lim_{h \to 0} \frac{1}{h} \mathbb{E}\left[X_{t- h} -  X_{t} | X_{t} = z\right]
      \backward{\mu}(z_{1-t}, 1-t) = \lim_{h \to 0} \frac{1}{h} \mathbb{E}\left[X_{t- h} -  X_{t} | X_{t} = z\right]
      % \backward{\mu}(z, 1-t ) = \lim_{h \to 0} \frac{1}{h} \mathbb{E}\left[z_{t - h} -  z_{\tau} | z_{\tau} = z\right]
\end{align}
in terms of the forward process $x_t$.
It can be shown that the statistics of the ensemble of paths $\{Z_{1-t}: 0\leq t \leq 1\}$ coincides with that
of the forward process $\{X_t: 0\leq t \leq 1\}$ when the initialization $Z_{\tau = 0}$ is drawn at random from
the density state variable $X_{t=1}$.
The KL--divergence between path measures can also be expressed in terms of the backward processes and drifts as
\begin{align}
      \label{KL_BW}
      \KLfunc{\mathbb{P}}{\mathbb{Q}} = \KLfunc{\pi_1^P}{\pi_1^Q} + \frac{1}{2\sigma^2} \int_0^1
      \mathbb{E}_\mathbb{P}\left[\left(\backward{\mu}^P(z_\tau, \tau) - \backward{\mu}^Q(z_\tau, \tau)\right)^2\right] d\tau
\end{align}

Equations \eqref{KL_FW} and \eqref{KL_BW} show that for given initial or final densities respectively, the KL divergences are minimized by matching the drift functions of the processes (the KL divergences between initial/end marginal densities equal zero). 
The loss function in \eqref{KL_BW} can be interpreted as a optimal stochastic control problem where the drift functions are used as a control of the respective stochastic processes to minimize the KL divergence between the two processes.
This notion of optimal control can also be applied to the optimization of stochastic neural networks as shown in \cite{winkler2022stochastic}.
Hence, if we assume that the mapping
\begin{align}
      % \mu_{+}(x, t) \leftrightarrow \mu_{-}(x, \tau)
      \forward{\mu}(\cdot, t) \leftrightarrow \backward{\mu}(\cdot, \tau)
\end{align}

is known explicitly, the solution of the half-bridges becomes simple. The minimizer $ \mathbb{P}_i^*$ of the KL--divergence in equation \eqref{half_one} corresponds to an SDE which has the backward drift corresponding to the forward SDE given by
the process $\mathbb{Q}_{i-1}^{*}$ but is started with the density $Z_{\tau = 0} \sim \pi_1$ in backward time.
The same construction holds for  $\mathbb{Q}_i^*$ in equation \eqref{half_two}. This is given by a new SDE with a forward drift
which corresponds to the backward drift of $\mathbb{P}_i^*$, and is started from $\pi_0(x)$ in forward time.
Hence, the IPF algorithm reduces the Schr\"odinger bridge problem to the computation of backward and forward drift
functions from the corresponding forward and backward processes.

The {\em explicit relation} between forward and backward drifts was published first \cite{nelson1966derivation}, and discussed in \cite{anderson1982reverse} and \cite{nelson1988stochastic} and is given by
\begin{align}
      \label{eq:BW_driftNelson1}
      \backward{\mu}(x, \tau) & = - \forward{\mu}(x, 1-\tau) + \sigma^2 \nabla_x \ln \forward{p}_{1-\tau}(x) \\
      \label{eq:BW_driftNelson2}
      \forward{\mu}(x, t)     & = - \backward{\mu}(x, 1-t) + \sigma^2 \nabla_x \ln \backward{p}_{1- t}(x).
\end{align}

Noting the relationship between the forward time index $t$ and reverse time index $\tau = 1 - t$, $\forward{p}_{1-\tau}(x)$ is the marginal density of the state variable $x_t$.
Likewise the density $\backward{p}_{1- t}(x)$ corresponds to the marginal density of the backward state variable $z_\tau$ evaluated for $x$.
The superimposed harpoons indicate the flow of time with $\forward{\mu}$ being the drift of the forward process and $\backward{\mu}$ being the corresponding drift of the backward process.
For the interested reader, we provide a derivation of the reverse-time drift resulting in the relationship above in the Appendix \ref{app:cha2reversetimederivation}.

\begin{figure}
      \centering
      \includegraphics[width=0.9\textwidth]{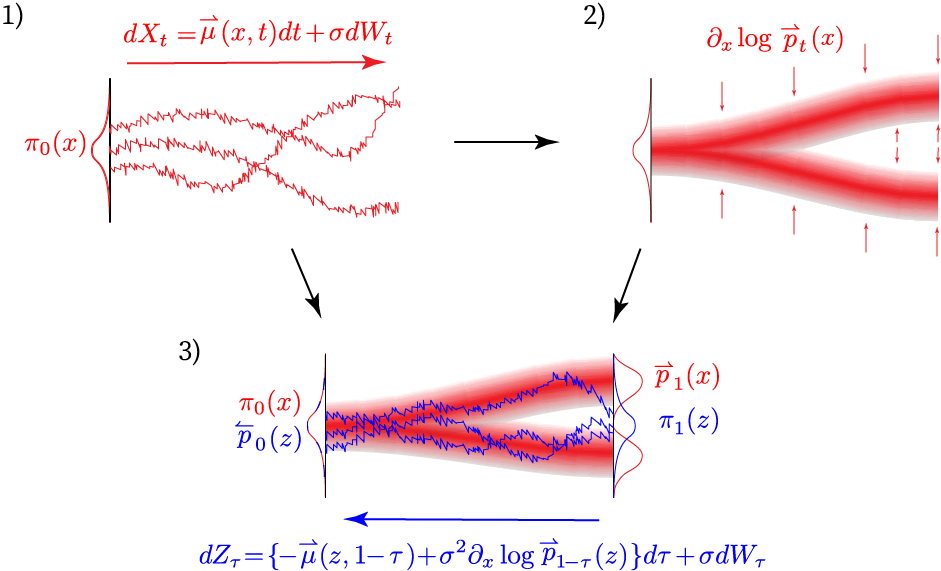}
      \caption{
            Visualization of the construction of the reverse process.
            The subplot 1) exemplifies three trajectories generated by solving the forward stochastic differential equation $dx_t$.
            The path measure $\mathbb{P}$ induces a probability distribution $\forward{p}_t(x)$ from which the score is estimated in subplot 2).
            Finally subplot 3) shows three possible trajectories of the reverse process starting from $\pi_1$ and finishing in $\pi_0$.
            A simpler visualization of the score can be found in subplot A), which serves as a figurative illustration of the behavior of the score $- \partial_x \log p(x)$ on a simple one-dimensional distribution.
      }
      \label{fig:reversesde}
\end{figure}

Figure \ref{fig:reversesde} exemplifies visually how the reverse drift can be obtained from the respective forward drift and the score of the probability distribution over paths induced by the forward SDE.
It is only possible in rare cases to compute this density analytically by solving the Fokker-Planck equation.
% For the even more interested reader we also provide a derivation of the Kolmogorov forward and backward equation \cite{kolmogorov1938analytic}, \cite{risken1996fokker}, on which the derivation relies in \ref{appendix:kramersmoyal}.
Luckily there is a numerical method {\em score matching} \cite{hyvarinen2005estimation} which allows for a direct estimation of the gradient of log-densities in \eqref{eq:BW_driftNelson1} from an ensemble of simulated data.
This technique is well established in the field of machine learning.

This approach has been previously suggested in the literature but deemed to be impractical \cite{de2021diffusion} for a solution of
the Schr\"odinger problem.
The direct implementation of score matching to \eqref{eq:BW_driftNelson2} in the IPF iterations would create considerable algorithmic problems.
Every full iteration of the IPF algorithm would add a score term to the previous reverse drift, which in later iterations would itself be a sum of
previous drifts and the score over the previous probability of paths which scales with each IPF iteration $i$.

If one represents both the score estimator and the current (e.g. the forward) drift by a nonlinear function approximation such as a neural network, the updated (backward) drift \eqref{eq:BW_driftNelson1} becomes the sum of two neural networks which is not easily represented as a single one.
Hence, during the iterations, one would have to store the entire sequence of past drift functions in order to compute the present one.
This would make the algorithm extremely complicated and slow as we would have to keep in memory $2i$ score matching neural networks of the $i$'th IPF iteration.
This would also mean that for a single drift evaluation in the $i$'th IPF iteration, we would have to evaluate $2i-1$ and $2i$ neural networks for a single drift evaluation at the $i$'th IPF iteration. For this reason, the score-matching approach had not been applied to the Schr\"odinger bridge problem.

Alternative approaches to computing the drift functions are based on the \textit{Euler--Maruyama} \cite{oksendal2013stochastic} temporal discretization of
forward and backward SDE. From its
conditional Gaussian transition densities, one can obtain a likelihood function for the drift function evaluated at the discrete time points.
Using samples obtained from a forward process, one can estimate the corresponding backward drift using a maximum likelihood
or Bayesian approach. This method was applied to the generation of half--bridges by \cite{vargas2021solving} where
Gaussian processes were used as a prior distribution over functions.  \cite{de2021diffusion} developed a different method
that used the conditional Gaussian transition densities of the EM discretization to approximate the score function.
This expression can then be converted into an approximation (which becomes exact in the limit when the time interval used
for discretization goes to zero) for the drift function. Both approaches could be viewed as methods for approximating the backward drift
\eqref{BW_drift:def} using a small time interval $h$ and by computing the conditional expectations within a regression framework.
This approach needs strong regularization as denoted in \cite{de2021diffusion} which required running averages of the entire function approximators to guard against fatal training divergences as the drifts were trained on local estimates dependent on the interval $h$.

In summary, previous approaches in \cite{vargas2021solving} and \cite{de2021diffusion} approximated the drifts with the expected infinitesimal change in order to yield a locally tractable reverse drift thus omitting the influence of the score necessary for  the analytical reverse process.
In this chapter, we propose to include a surrogate form of score term in the reverse drift such that the respective reverse drifts are trained to approximate the complete reverse drift and not just its localized estimates.
We will show in the following, that the representation of the drifts \eqref{eq:BW_driftNelson1} and \eqref{eq:BW_driftNelson2} can be directly estimated in a straightforward way by a modification of the score matching approach.

\section{Score matching with Reference Functions}

To simplify the notation, we will denote by $\mu(x_t, t): \mathbb{R}^{D} \times [0,1] \rightarrow \mathbb{R}^D$ one of the two drift functions and a corresponding marginal density $p_t(x)$ induced by the SDE with drift $\mu(x_t, t)$ and a scaled Wiener process with constant diffusion $\sigma$.
Following \cite{maoutsa2020interacting}, we define the following cost functional of the smooth vectorial function  $\phi(x,t): \mathbb{R}^{D} \times [0,1] \rightarrow \mathbb{R}^D$
\begin{align}
      \mathcal{L}[\phi, \mu] & = \int_0^1 dt \int dx p_t(x) \left\{\phi(x, t)^T \phi(x, t) + 2 \mu(x, t)^T \phi(x_t, t) + 2 \sigma^2 \Trfunc{ \text{J}_\phi(x, t)}\right\}
      \label{eq:costfu_ideal}
      %    &= \Efuncc{p(x_t, t)}{\left( \phi(x_t, t) - \{ - \mu(x_t, t) + \sigma^2 \nabla_x \log p(x_t, t) \} \right)^2 } \\ 
      %   & \quad - \Efuncc{p(x_t, t)}{(- \mu(x_t, t) + \sigma^2 \nabla_x \log p(x_t, t))^2}
\end{align}

where $\text{Tr}[J_f(x)]$ is the trace of the Jacobian of a function $f(x): \mathbb{R}^D \rightarrow \mathbb{R}^D$.
With respect to the Schr\"odinger Bridge Problem, $\mu(x, t)$ would be the drift of the reference process and $\phi(x, t)$ would represent its reverse-time process.
We purposefully withheld the harpoons denoting the flow of time earlier as each half-bridge in equations (\ref{half_one}) and (\ref{half_two}) alternates its reference and reverse drift.
This score matching with a reference function does not require access to the true score but instead uses a surrogate function which is constructed from readily available numerical quantities.
\begin{proposition}
      For an Ito drift-diffusion process $dX_t = \mu(X_t, t) dt + \sigma_t dW_t$ with drift $\mu: \mathbb{R}^D \times [0,1] \rightarrow \mathbb{R}^D$ and the corresponding induced marginal density $p_t(x): \mathbb{R}^D \rightarrow [0,1]$, the function $\phi: \mathbb{R}^D \times [0,1] \rightarrow \mathbb{R}^D$, the cost functional
      \begin{align}
            \mathcal{L}[\phi, \mu] & = \int_0^1 \mathbb{E}_{p_t(x)} \left[ \phi(x, t)^T \phi(x, t) + 2 \mu(x, t)^T \phi(x_t, t) + 2 \sigma^2  \text{\emph{Tr}} \left[ \text{J}_\phi(x, t) \right] \right] dt
            \label{eq:proposition cost functional}
      \end{align}
      is minimized when
      \begin{align}
            \phi(x,t) = - \mu(x, t) + \sigma_t^2 \nabla_x \log p_t(x).
      \end{align}
\end{proposition}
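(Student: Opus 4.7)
The plan is to eliminate the Jacobian-trace term in favor of the score $\nabla_x \log p_t(x)$ via integration by parts, and then complete the square to read off the minimizer. First I would rewrite
\begin{align}
\mathbb{E}_{p_t(x)}\!\left[\text{Tr}[\mathrm{J}_\phi(x,t)]\right] = \int p_t(x) \, \nabla_x \cdot \phi(x,t)\, dx,
\end{align}
and apply integration by parts under the standard assumption that $p_t(x)\phi(x,t) \to 0$ as $\|x\| \to \infty$ (so that boundary terms vanish). Using $\nabla_x p_t(x) = p_t(x)\nabla_x \log p_t(x)$, this yields
\begin{align}
\mathbb{E}_{p_t(x)}\!\left[\text{Tr}[\mathrm{J}_\phi(x,t)]\right] = -\mathbb{E}_{p_t(x)}\!\left[ \nabla_x \log p_t(x)^T \phi(x,t)\right].
\end{align}

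Substituting back into \eqref{eq:proposition cost functional}, the cost functional becomes
\begin{align}
\mathcal{L}[\phi,\mu] = \int_0^1 \mathbb{E}_{p_t(x)}\!\left[\phi^T\phi + 2\bigl(\mu(x,t) - \sigma_t^2 \nabla_x \log p_t(x)\bigr)^T \phi \right] dt.
\end{align}
This is now a pointwise-in-$(x,t)$ quadratic form in $\phi$ weighted by the nonnegative measure $p_t(x)\,dx\,dt$, so the minimization decouples across $(x,t)$. Completing the square gives
\begin{align}
\phi^T\phi + 2(\mu - \sigma_t^2 \nabla_x\log p_t)^T\phi = \bigl\|\phi + \mu - \sigma_t^2 \nabla_x\log p_t\bigr\|^2 - \bigl\|\mu - \sigma_t^2\nabla_x\log p_t\bigr\|^2,
\end{align}
from which it is immediate that the unique minimizer (wherever $p_t(x) > 0$) is $\phi^*(x,t) = -\mu(x,t) + \sigma_t^2 \nabla_x \log p_t(x)$, as claimed.

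The main obstacle, and the step requiring most care, is the integration-by-parts argument: one must specify a class of admissible functions $\phi$ for which the boundary term vanishes. In practice this is ensured by requiring $\phi$ to be sufficiently regular (e.g. continuously differentiable with at most polynomial growth) and assuming that $p_t$ decays fast enough at infinity, which is standard for the SDEs under consideration (Ornstein--Uhlenbeck-type dynamics with Gaussian-tailed marginals). A secondary point worth noting, though not strictly needed for the minimization, is that the subtracted term $\|\mu - \sigma_t^2 \nabla_x\log p_t\|^2$ is independent of $\phi$ and therefore does not influence the argmin; this also makes clear that $\mathcal{L}[\phi,\mu]$ differs from the standard score-matching objective $\mathbb{E}_{p_t}\|\phi - (-\mu + \sigma_t^2 \nabla_x\log p_t)\|^2$ only by a $\phi$-independent constant, which explains why the surrogate loss can be evaluated from samples of the forward SDE without ever accessing $\nabla_x \log p_t$ directly.
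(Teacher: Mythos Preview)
Your proof is correct and follows essentially the same route as the paper's: integration by parts on the Jacobian-trace term (with vanishing boundary terms), the log-derivative identity $\nabla_x p_t = p_t \nabla_x \log p_t$, and then completing the square to identify the minimizer. The paper carries this out in one dimension for notational simplicity while you work directly in $\mathbb{R}^D$ via the divergence, but the argument is otherwise identical.
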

\begin{proof}
      We refer to \cref{app:cha5scorematchingproof} for the proof.
\end{proof}

We want to note that the expectation in \eqref{eq:proposition cost functional} can be computed with respect to any distribution $p_t(x)$ and is not necessarily the marginal distribution of the process.
In the context of the Schr\"odinger Bridge Problem, the marginal distribution $p_t(x)$ is induced by the forward process with drift $\mu(x, t)$.

Hence, a comparison with eqs. \eqref{eq:BW_driftNelson1} and \eqref{eq:BW_driftNelson2} shows that the minimizer of the functional, for a given forward or backward drift, provides the corresponding reverse drift.
For a practical computation of the cost function, the integrals over time and over the unknown density in \eqref{eq:costfu_ideal} are approximated by numerically generating $N_X$ independent trajectories of the process sampled at $N_t$ regular time points $t_j$.

Hence, we approximate the cost function by its sample-based estimator
\begin{align}
      \mathcal{\hat{L}}[\phi, \mu] & =  \sum_{j=1}^{N_t}\sum_{i=1}^{N_x}\left\{\phi(x^{(i)}_{t_{j}}, t_j)^T \phi(x^{(i)}_{t_{j}}, t_{j}) +
      2 \mu(x^{(i)}_{t_{j}}, t_j)^T \phi(x^{(i)}_{t_{j}}, t_{j}) + 2 \sigma^2 \Trfunc{ \text{J}_\phi(x^{(i)}_{t_{j}}, t_{j})}\right\}
      \label{eq:costfu_est}
\end{align}

For finite sample size, the empirical cost function must be regularized by controlling the complexity of
the functions $\phi(\cdot, \cdot)$.
In contrast to \cite{maoutsa2020interacting}, we model $\phi(x,t): \mathbb{R}^{D\times[0,1]} \rightarrow \mathbb{R}^D$ by a {\em single nonlinear parametric} function which is given by a multilayer neural network (rather working with time slices a using a distinctive function of $x$ for each).
In such a way, we implicitly incorporate the smoothness of the drift in both space $x$ and time $t$.

\begin{figure}
      % \begin{adjustwidth}{-\extralength}{0cm}
      % \centering
      \includegraphics[width=\textwidth]{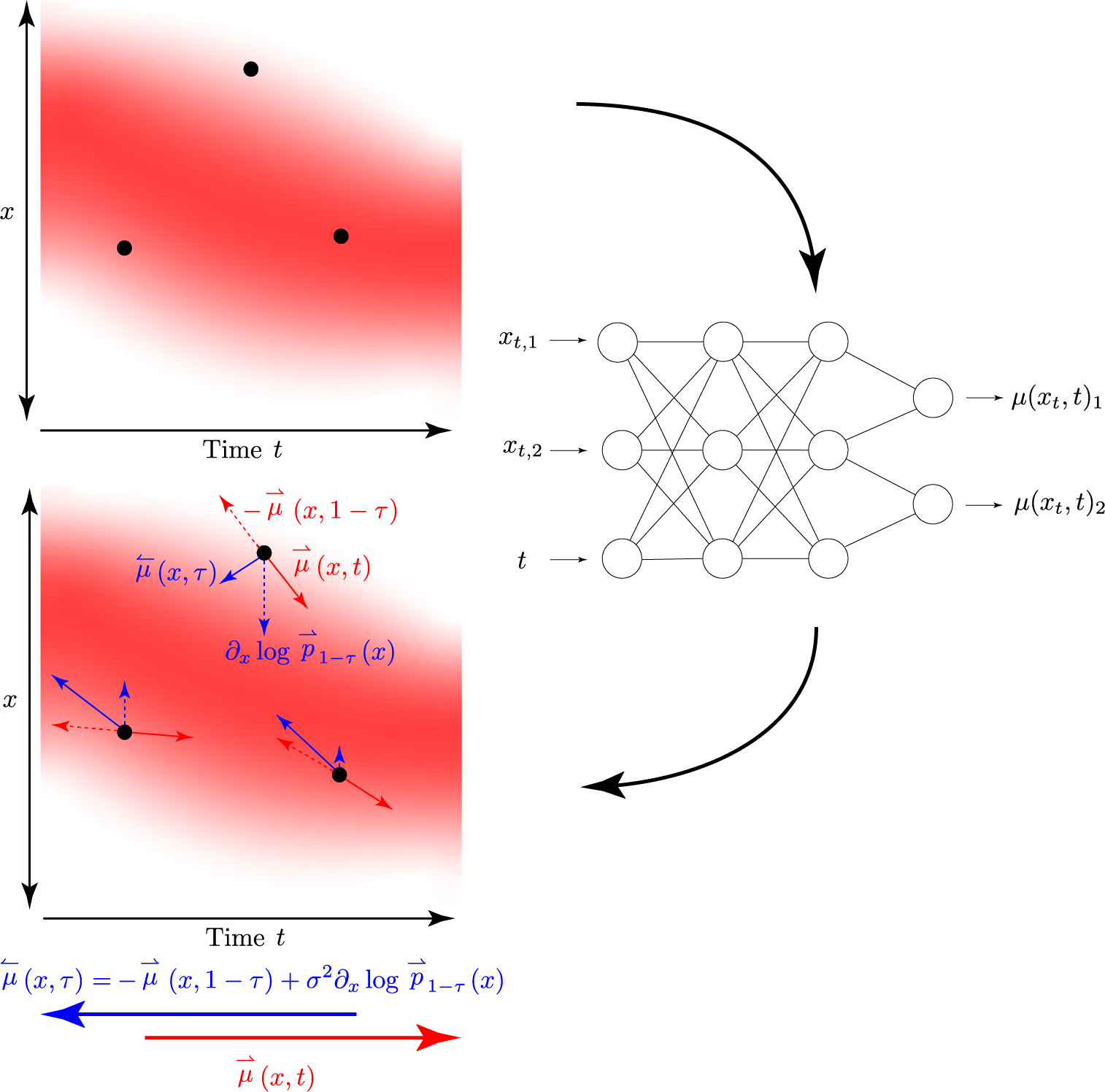}
      % \end{adjustwidth}
      \caption{Visualization of the construction of the backward stochastic process parameterized by the backward SDE with drift $\backward{\mu}(x, \tau)$.
            The red gradient represents the marginal distribution $\forward{p}_{t}(x)$ induced by the forward SDE with drift $\forward{\mu}(x, t)$ and marked in the color red.
            We employ neural networks to learn both the forward and backward drift as it allows for a single function approximator per process for the entire input domain as neural networks are inherently able to model vector-valued data.
            \label{fig:neuralneworkdrift}}
\end{figure}

\subsection{Trace Estimators}

The trace of the Jacobian requires the evaluation of the derivative of a single output with respect to the single input in the same dimension $d$, independent of all other outputs and inputs.
Since we evaluate a single drift jointly for all dimensions $D$, this makes the computation of the analytical Jacobian expensive for higher dimensions as we have to do $D$ independent backward passes to compute each entry in the Jacobian matrix.
The number of gradient computations required for the Jacobian in a vector-valued function thus scales quadratically with the number of dimensions.

For data with few dimensions, computing the diagonal terms of the Jacobian can be done via batched backpropagation of one-hot output gradients.
This approach falters computationally and memory-wise when we consider data in higher dimensions.
For higher dimensional data we opt for the trace estimation trick of Hutchinson with samples from an i.i.d. Rademacher distribution in $\mathbb{R}^D$, namely
\begin{align}
      \tr{J_\phi(x, t)} = \Efuncc{z \sim p(z)}{z^T \nabla_x \left[ \phi(x, t)^T z \right]}.
\end{align}

An elaboration on the trace estimation trick can be found in appendix \ref{appendix:traceestimationtrick}.
The main idea of the stochastic approximation of the trace is that we are only interested in the diagonal elements of the Jacobian.
The Hutchinson trace estimation trick proceeds by computing the derivative with respect to the input of the inner product of the prediction $\phi(x, t)$ and a random variable $z$.
The same random variable is then applied a second time in an inner product to obtain an approximation of the scalar quantity of the trace of the Jacobian.
The advantage of the trace estimation trick is that only a single additional derivative evaluation on the inner product $\phi(x, t)^T z$ is required which scales linearly with the number of samples of $z$.

Taking the gradients of the loss with respect to the parameters requires a second derivative such that a function approximator trained on the reverse drift has to be twice differentiable.
Enforcing this property in neural networks requires us to use at least twice differentiable evaluations of the prediction with respect to the spatial input $x$ which necessitates twice differentiable activation functions such as the hyperbolic tangent or Gelu activation functions \cite{hendrycks2016gaussian}.

If the function approximator is only once differentiable as with the use of Rectified Linear Unit activation functions \cite{agarap2018relu}, we can employ Stein's lemma to estimate the trace of the Jacobian.
For this estimator, following \cite{boffi2022probability} we define an isotropic Gaussian perturbation distribution $z \sim \mathcal{N}(x, \sigma_z^2 I)$ with $x, z \in \mathbb{R}^D$ around each data point $x_t \in \mathbb{R}^D$ and average the gradients in the $z$-neighborhood of the data point
\begin{align}
      \text{Tr}[ J_\phi(x, t)] = \lim_{\sigma_z \downarrow 0} \Efuncc{p(z)}{\phi(x + z, t)^T \frac{z}{\sigma_z^2}}
\end{align}
The derivation of Stein's lemma can be followed up in appendix \ref{appendix:steinslemma} and it's application to the trace estimation therefrom is detailed in appendix \ref{appendix:steingradients}.
For a practical implementation,  we approximate the Stein--estimator using a sufficiently small $\sigma_z$ by drawing only a {\textit single}
random vector $z^{(i)}_j$ for each trajectory $i$ and each time point $j$.
The perturbed $\phi$ function values can be computed along side the unperturbed values in a single forward pass through the neural network.

% %%%%%%%%%%%%%%%%%%%%%%%%%%%%%%%%%%%%%%%%%%
\section{Results}

We evaluate our proposed method on artificially generated data sets with varying dimensions and with and without dependencies between the
dimensions at the two target marginals.
The first set of experiments were done on the construction of the Schr\"odinger Bridge between Gaussian Mixture Models with which we could explore the behavior with a changing set of dimensions.
The second collection of experiments focused on manifold learning in which implicit distributions were learned to be generated from a standard normal distribution.
Finally, we employed our proposed framework on the generation of intermediary distributions of embryoid single-cell RNA as a real-world application.
The general experimental setup for all experiments can be followed up in \cref{app:cha5experimentalsetup}.

\begin{figure}[H]
      \centering
      \includegraphics[width=\textwidth]{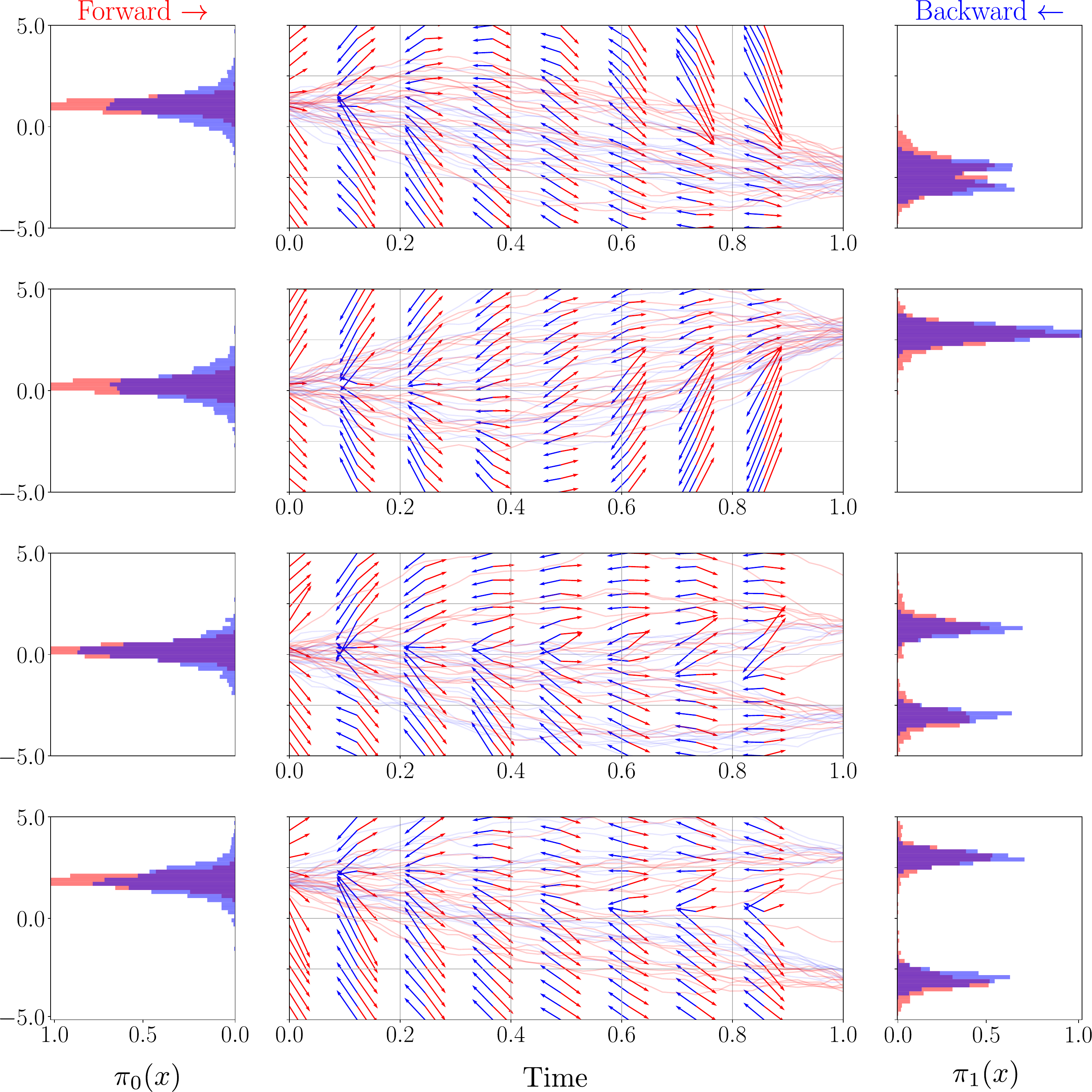}
      \caption{Visualization of a solved Schr\"odinger Bridge in $\mathbb{R}^4$ in which each dimension is plotted independently.
            The sampled trajectories and drift of the forward process and its initial condition $\pi_0$ is shown in red, whereas the backward process is shown in blue.
            % The Euler-Maruyama integrator is used to integrate the drift, visualized as the red vector field, for a fixed time interval denoted as the bottom ticks in the center column.
            % Upon successful constructing of the Schr\"odinger Bridge, the trajectories will form the same terminal distribution $p(x,1)$ as marginal distribution $\pi_1(x)$.
            % Analogously, the initial conditions for the reverse process are sampled from the marginal distribution $\pi_1(x)$ and integrated backwards in time to form the backward terminal probability distribution $p(x,0)$ matching $\pi_0(x)$.
            % \note{added $\pi_0(x), t, \pi_1(x)$}
            \label{fig:ndviz}}
\end{figure}

The question remains how the reference process $\mathbb{Q}^*_0$ should be chosen for our applications.
The authors in \cite{de2021diffusion} used an Ornstein-Uhlenbeck (OU) process \cite{uhlenbeck1930theory} for $\mathbb{Q}^*_0$.
Its marginal distributions can be computed analytically and do not require solving an SDE numerically which saves time and computational resources during the very first half bridge.
This choice of a Gaussian reference process could be also motivated from the fact that for larger times $t$ the marginal of the process converges to a stationary Gaussian density that could approximately match Gaussian targets used for a denoising style data generating application of Schr\"odinger bridges.

In our implementations, we did not want to make any specific assumptions on end marginals.
Hence, a choice of a zero initial drift $\mu(x, t)\equiv 0$ (reducing $\mathbb{Q}^*_0$ to a simple Wiener process) seemed more natural.
However, practical considerations have suggested a slightly different approach, in which $\mathbb{Q}^*_0$ corresponds to a drift function represented by a neural network which has (untrained) small random weights which serve as useful initial conditions for the subsequent training \cite{glorot2010understanding} \cite{he2015delving}.
We observed experimentally that for the first half-bridge $\mathbb{Q}^*_0$,  the Wiener process dominates the characteristics of the sampled trajectories.

For our applications the marginal densities at initial and end times $\forward{p}_0$ and $\forward{p}_1$ which are generated by the bridge models as well as the desired targets $\pi_{0}$ and $\pi_1$ are represented by random samples rather than by analytical expressions. To evaluate the quality of the converged bridge we
computed the Wasserstein-1 distances $W_1(\forward{p}_1, \pi_1)$ and $W_1(\pi_0, \backward{p}_0)$.
The Wasserstein-1 distance \cite{villani2009optimal} between two probability measures $\mu$ and $\nu$ is defined as
\begin{align}
      W_1(\mu, \nu) = \inf_{\gamma \in \mathbb{D}(\mu, \nu)} \Efunc{||x-y||}
\end{align}
where $\mathbb{D}(\mu, \nu)$ is the set of all couplings of $\mu$ and $\nu$.
These can be straightforwardly evaluated on empirical distributions which are given by samples.
For the underlying optimal transport problem and its efficient solution via linear programming in its dual representation see e.g. \cite{peyre2019computational}.
The Wasserstein-1 distance is also known as the Earth Movers Distance (EMD) in computer science.

\subsection{Multimodal Parametric Distributions}

We model the marginal distribution $\pi_0(x)$ as a Gaussian distribution with a diagonal covariance matrix.
The opposite marginal distribution $\pi_1(x)$ was a Gaussian Mixture Model with two modes with a uniform prior over the GMM component centers.
The mean values of all Gaussian distributions, uni-modal in $\pi_0(x)$ as well as bi-modal in $\pi_1(x)$ were sampled uniformly from $\mathcal{U}(-2.5, 2.5)$ and a standard deviation of 1.0 was used throughout.
The visualization of the inferred Schrödinger Bridge highlights the learning of the time-dependent drift and the ability to model bifurcations in the case of bi-modal GMMs as seen in Figure \ref{fig:ndviz}.

\begin{figure}[tbp]
      \centering
      \includegraphics[width=0.6\textwidth]{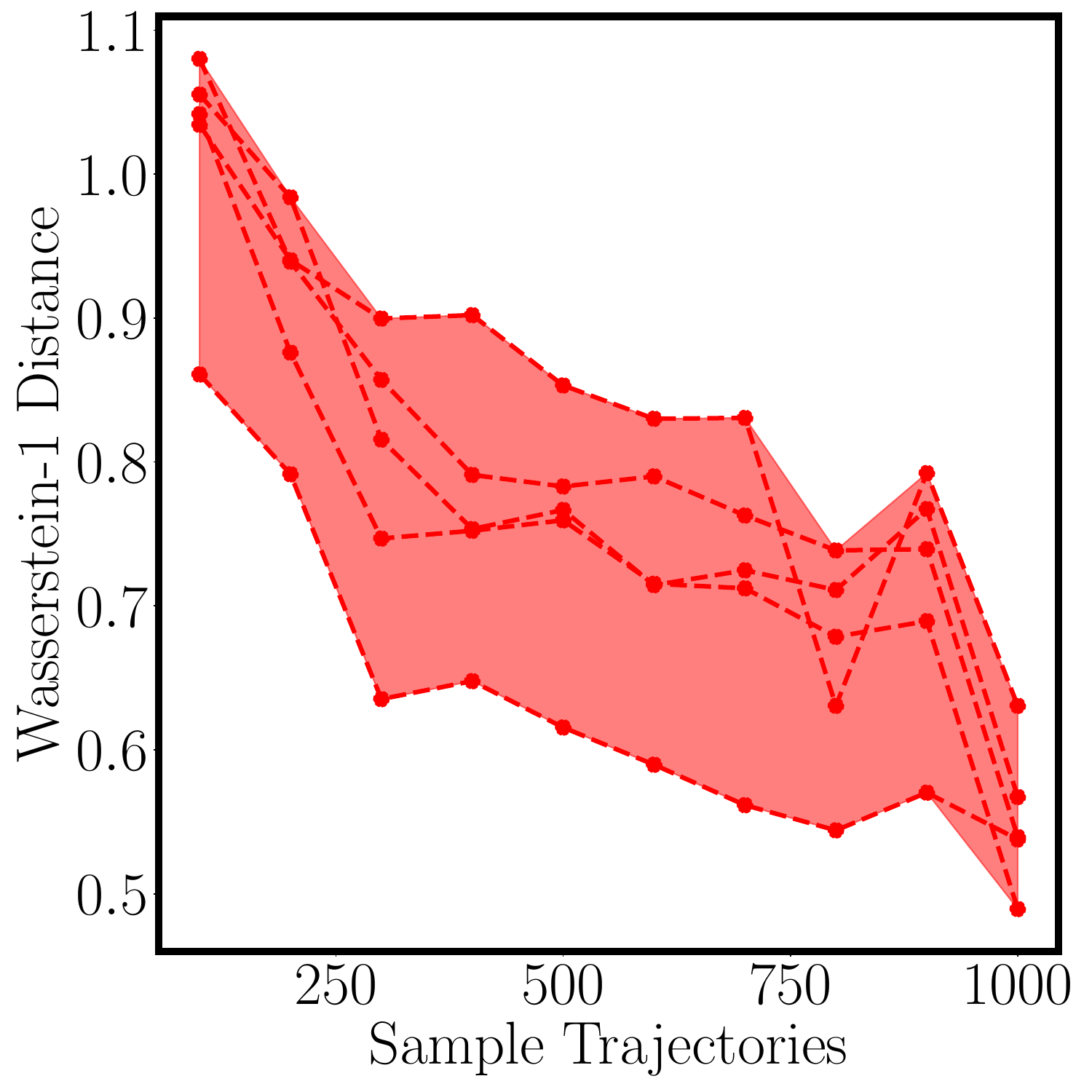}
      \caption{A comparison of the Wasserstein-1 distance on the number of sample trajectories sampled per IPF iteration for the Multi-Modal distribution problem in section 3.2.
            The Wasserstein-1 distances $W_1(\forward{p}_1, \pi_1)$ and $W_1(\pi_0, \backward{p}_0)$ were averaged and show an overall decrease relative to the number of trajectories sampled from the stochastic processes.
            \label{fig:num_trajs}}
\end{figure}

The data set was created as the more tractable experiment in comparison to subsequent data sets.
The use of GMMs allows for the analytical evaluation of the probability of the generated data at the marginal distributions.
Furthermore, the modes of the GMM could be handcrafted which turned out important to validate numerous design choices of the drift approximators.
% The authors deemed it equally important to visually verify generated trajectories ensuring that the drift approximators were able to model for example bifurcations and how they dealt with changing hyperparameters such as increased diffusion.

Estimating the score in regions with little probability mass is a difficult problem as an insufficient amount of samples may be drawn from that region to accurately model the score.
This was explored in detail in \cite{song2020score} and \cite{song2019generative} and was one of the inspirations for the perturbation protocol in diffusion models.
Diffusion models have the advantage of computing an analytical score at any point in space and time through their analytical perturbation kernels defined in the forward process.

The Schrödinger Bridge Problem offers none of these luxuries as no reference process is available which yields analytical scores.
Thus we require sufficient data even in low-probability regions to enable us to estimate the necessary score.
We found in our experiments that the hyperparameter with the single largest influence was the number of trajectories that were sampled from the path measures.
We can thus see in Figure \ref{fig:num_trajs} that increasing the number of trajectories decreases the Wasserstein-1 distance with respect to the marginal distributions $\pi_0(x)$ and $\pi_1(x)$ as the score estimation becomes more precise as even low probability regions of the stochastic processes, on which the score is estimated, are sampled adequately.

We chose the multi-modal data set as a test data set to compare Hutchinsons trace estimator with the trace estimation via Stein's lemma.
The result can be seen in Figure \ref{fig:steinvshutchinson} in which we evaluated the Wasserstein-1 distance over a fixed set of dimensions.
As baselines, the Wasserstein-1 distances between the respective marginals are shown.
One can see that the gradient estimators perform as well the Wasserstein-1 distances between samples drawn from the marginals denoted as $W_1(\pi_0(x), \pi_0(x))$ and $W_1(\pi_1(x), \pi_1(x))$.

The Hutchinson trace estimator performs best while requiring a second derivative.
Interestingly, smaller sampling variances for the Gaussian distribution in the Stein gradient estimator yield better overall performance on matching the marginal distributions.
We observe that the Wasserstein-1 distance increases linearly with the number of dimensions for a fixed number of samples and it was interesting to observe that the Schrödinger Bridges with different trace estimators behaved very similar in terms of performance.

% We compared Stein's trace estimator as detailed above and in Appendix \ref{appendix:steingradients} with varying perturbation scales with Hutchinsons trace estimation and the ground truth Wasserstein distance between the marginal distributions $\pi_0(x)$ and $\pi_1(x)$.
% Out of the three trace estimation methods, Stein's trace estimator with a perturbation scale of $\sigma_\epsilon = 1.0$ performed worst.
% But the performance difference became only discernible with increasing dimensions.
In theory, Hutchinsons trace estimation can also be applied with a Gaussian distribution sampling the random projection vectors, yet, Rademacher's distribution has the lowest estimator variance.
Stein's trace estimation can only be done with the Gaussian distribution, as it relies on integration by parts and the special derivative of the Gaussian distribution.
This leads us to hypothesize that the inherently higher variance of the Gaussian distribution in Stein's trace estimator leads to worse performance.

\begin{figure}[tbp]
      \centering
      \includegraphics[width=0.7\textwidth]{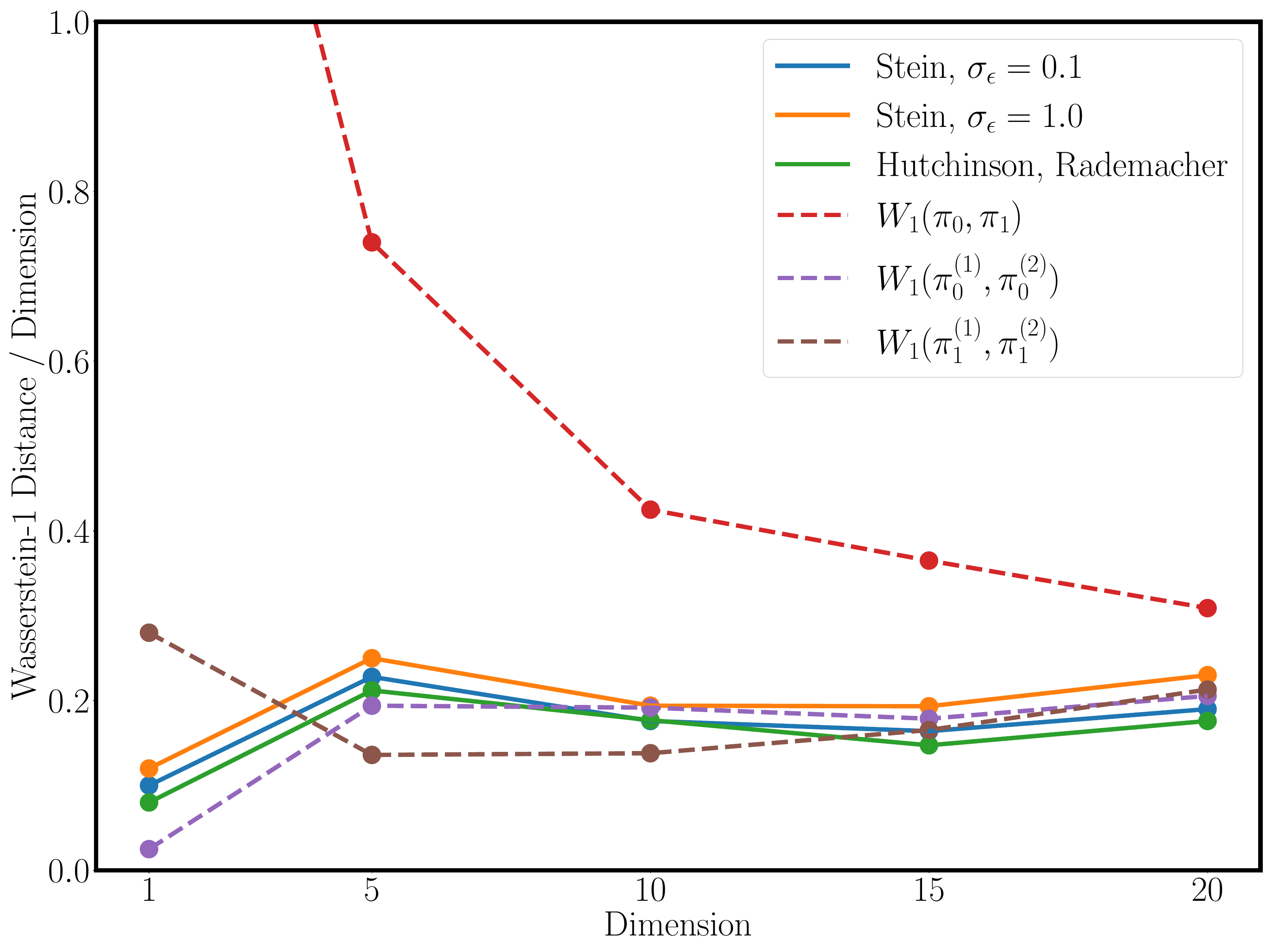}
      \caption{
            A comparison of the gradient estimators on an increasing number of dimensions with the average Wasserstein-1 distance per dimension between the true marginal and the predicted marginal distributions of the forward and backward process.
            \label{fig:steinvshutchinson}}
\end{figure}

\subsection{Manifold Data Sets}

For the second data set we trained the forward and backward drifts on generated manifold data via the Sklearn machine learning package \cite{buitinck2013api}.
The manifolds used were 'make\_swiss\_roll', 'make\_s\_curve' and 'make\_moons' from the sklearn.dataset code base which were concatenated to create a higher dimensional manifold.
This increased the complexity of the manifold setting it apart form earlier manifold modeling approaches as in \cite{de2021diffusion}.

We trained the stochastic processes to predict multiple manifolds at once by modeling them jointly with a single fully connected neural network.
For $\pi_0(x)$ we chose a standard normal Multivariate distribution $\pi_0(x) = \mathcal{N}(0, I)$ while $\pi_1(x)$ was the implicit distribution generated by samples on the manifold.
Whereas the previous Gaussian Mixture Models were statistically independent in each dimension, the manifolds explicitly model statistical correlation between different dimensions.
A visualization of the Schrödinger Bridge between the two distributions can be seen in Figure \ref{fig:noisesklearn}.

The use of a tractable probability distribution as one marginal distribution is inspired by the purely generative task of diffusion models.
The data set is commonly used as a visual benchmark of new generative models and allows to evaluate the drift approximators ability to model non-linear manifolds.
A lack of this data set is the absence of a tractable data likelihood under the marginal distribution, as the manifold generation function are modeled as an implicit distribution.

\begin{figure}[tbp]
      \centering
      \includegraphics[width=\textwidth]{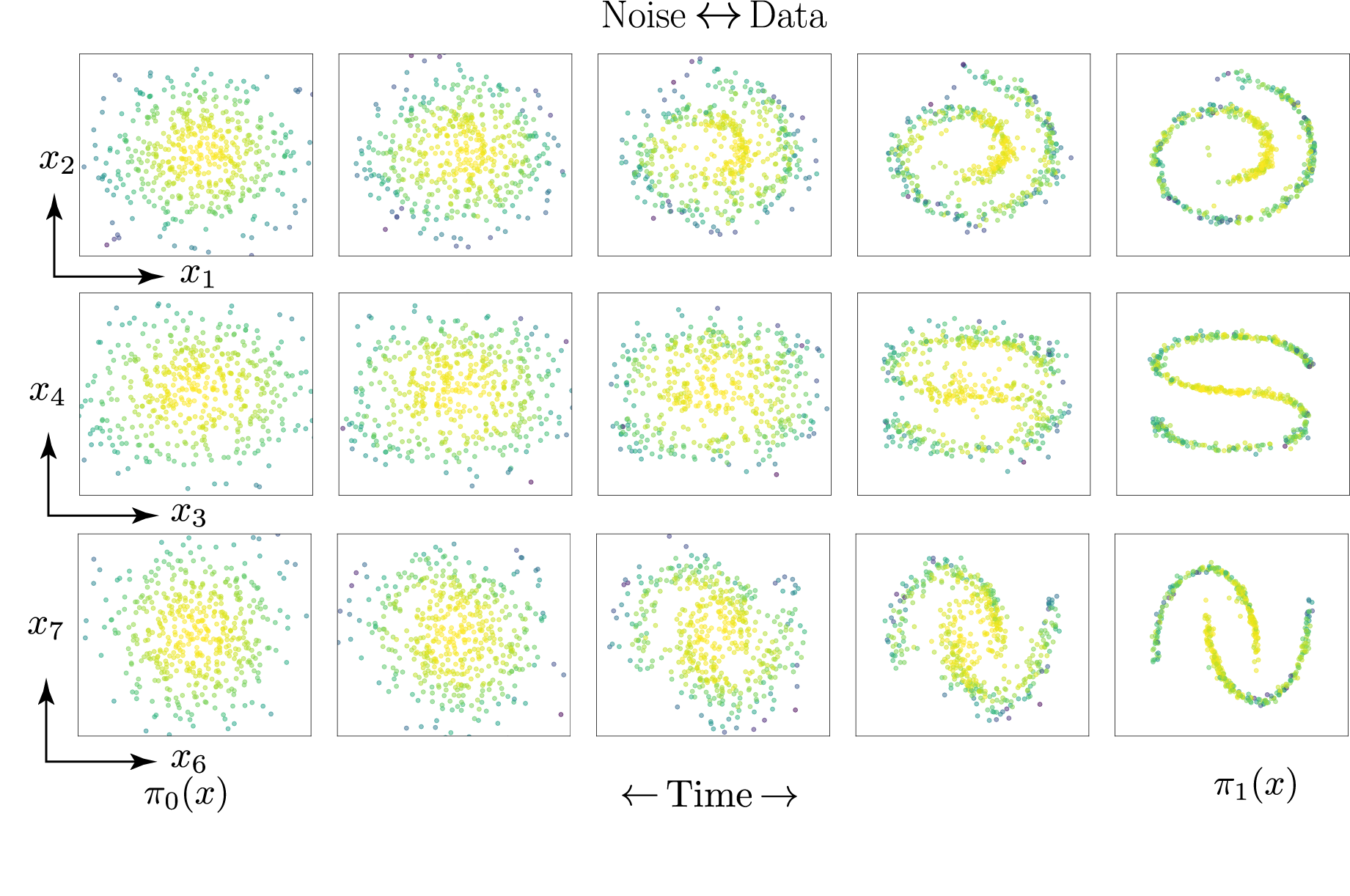}
      \caption{Visualization of a constructed Schrödinger Bridge for the manifold $\mathbb{R}^{6}$ which lies in $\mathbb{R}^{10}$.
            The left-most column represents samples from the marginal distribution $\pi_0(x)$ which are transformed into samples on the manifolds in the right-most column.
            % The manifolds where concatenated such that two dimensions formed a manifold creating a ten dimensional implicit probability distribution out of three nominally independent manifolds.
            The color coding of each particle is according to its proximity to the mean of the prior distribution $\pi_0(x)$ such that we can distinguish which particles from $\pi_0(x)$ correspond to the particles on the manifold $p(x_1, 1)$
            \label{fig:noisesklearn}}
\end{figure}

\subsection{Embryoid Data Set}

Single-cell RNA sequencing analyzes the RNA of individual cells, destroying it unfortunately and making it inaccessible to further analysis \cite{tong2020trajectorynet, moon2019visualizing, hendriks2019nasc, la2018rna}.
In a population of cells, we can remove individual cells and analyze their RNA.
As each cell is eliminated from the population, we have to turn to a probabilistic method to simulate the development of the RNA at the \emph{population} level.

Therefore, it is of interest to develop a methodology which can simulate full trajectories of RNA sequences over time
which would allow for predicting outcomes of such measurements on a single cell without actually having to perform them.
As suggested in \cite{moon2019visualizing}, an interesting solution to this problem would be the construction of a stochastic
generative model for the possible measurements with the marginal distributions (for a population of cells) of the actual measurements at initial and end time as boundary conditions.
A visualization of the application of the Schr\"odinger Bridge to the RNA measurement task is provided in Figure \ref{fig:embryoid}.
If we represent this generative model by a diffusion process, we naturally end up with the idea of applying Schr\"odinger Bridges to this problem.

\begin{figure}[htbp]
      \centering
      \includegraphics[width=0.9\textwidth]{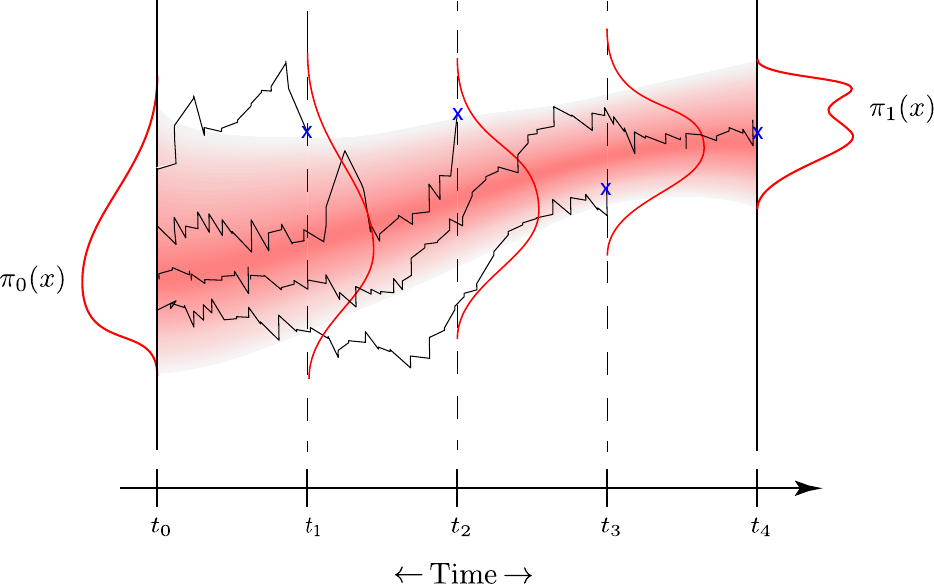}
      \caption{
            A visualization of how the Schr\"odinger Bridge is applied to the RNA measurements.
            The RNA of different cells are measured at the discrete time steps $t_1$, $t_2$ and $t_3$ and eliminated from the population marked by the blue crosses and the end of each of the four trajectories.
            Given these snapshots of the RNA distribution at discrete time steps, the task is now to construct a Schr\"odinger Bridge between the discrete time steps as indicated by the stochastic process with a fading color.
            The generative model can then be queried at any time step between $t_0$ and $t_4$.
            % \note{added $\pi_0(x)$ etc}
            \label{fig:embryoid}}
\end{figure}

We apply this approach to the embryoid data set \cite{tong2020trajectorynet} for which
single cell RNA measurements are taken at five different times $t \in \{0,1,2,3,4\}$.
% Each measurement is a $D=5$--dimensional vector.
%spread over 27 days.
%On each end day 0 and day 27 represent one marginal distribution with three further measurements dispersed over the intermediate days.
The ensemble of measurements at each time index $t$, consists of varying numbers (2380, 4162, 3277, 3664, 3331) of RNA measurement samples.
We consider the problem of constructing a Schr\"odinger Bridge using only the first day and the very last day measurements.
We then try to infer the hold--out intermediate marginal distributions of the RNA measurements with the help of the constructed bridge.
The samples drawn from the bridge are thus evaluated at the intermediate marginal distributions at $t \in \{1,2,3\}$
using the Wasserstein-1 distance.
In addition, we also evaluate the Schr\"odinger bridge at the two end--points by computing
$W_1(\backward{p}_0, \pi_0)$ and $W_1(\forward{p}_1,\pi_1)$.
Finally, we want to highlight that we define the above-described problem purely as a benchmark with real-world data and did not interpret our results in a biological context.

We compare our method with two other generative model approaches:
\textit{TrajectoryNet} \cite{tong2020trajectorynet} implements constrained normalizing flows using neural networks. The paper
was also the first to propose this benchmark. IPML \cite{vargas2021solving} presents an alternative method to constructing a Schr\"odinger
bridge in which the required drift functions are estimated from the trajectories using Gaussian process regression
without computing score functions as remarked upon in section \ref{subsec:ipf}.
Finally, we have also included a simpler method which is based on optimal transport (OT). It  computes a linear transport map
between samples of the initial and end distributions.
However, the comparison of this method with the others is slightly unfair. While the OT approach allows for making predictions
of the marginals at intermediate time steps, it is not formulated as a generative model and thus could not be used to make predictions for measurements on single cells.

The RNA-sequencing data was generated from FACS sorted embryoid bodies via surface marker indication.
The data set was preprocessed with PHATE \cite{moon2019visualizing} to a dimensionality of $D=5$ as done in the reference methods.
An important feature of the PHATE preprocessing is the reversibility of the dimensionality reduction.
The preprocessing pipeline can be accessed and replicated with the public \href{https://github.com/KrishnaswamyLab/PHATE}{PHATE code base} provided by the authors of \cite{moon2019visualizing}.
Thus it is possible to apply trajectory reconstructing algorithms in the low dimensional representation of the genetic data and project the resulting trajectory back into the high-dimensional space.
This is in contrast to commonly used dimensionality reduction algorithms which do not offer these advantages.
We refer the reader to \cite{moon2019visualizing} for an in-depth treatment of the algorithm.
The experimental setup was kept identical for \textit{IPML} and \text{TrajectoryNet}.

Table \ref{tab:embryoidperformance} compares the performance of our Schr\"odinger Bridge method with
the other methods.
We outperform both Trajectory-Net and the alternative Schr\"odinger bridge method IPML and are also better than
the OT approach in two out of three instances.
This corroborates the result from \cite{vargas2021solving} that OT's linear transport map is not a well fitting intermediate distribution for $t=4$.
Especially the Wasserstein distance at the marginal distributions of the backward process at $t=0$ and the forward process $t=4$ are well modeled.
      {
            \begin{table}[H]
                  % \newcolumntype{C}{>{\centering\arraybackslash}X}
                  \renewcommand{\bf}[1]{\normalfont\bfseries{#1}}
                  \centering
                  \setlength{\tabcolsep}{2pt}
                  \begin{tabularx}{0.7\textwidth}{m{30mm}ccccccc}
                        \toprule
                        \textbf{Method $\downarrow$} & \textbf{t=1} & \textbf{t=2} & \textbf{t=3} & \textbf{t=4} & \textbf{t=5} & \textbf{Path} & \textbf{Full} \\
                        \midrule
                        TrajectoryNet                & 0.62         & 1.15         & 1.49         & 1.26         & 0.99         & 1.30          & 1.18          \\
                        IPML EQ                      & 0.38         & 1.19         & 1.44         & 1.04         & 0.48         & 1.22          & 1.02          \\
                        IPML EXP                     & 0.34         & 1.13         & 1.35         & 1.01         & 0.49         & 1.16          & 0.97          \\
                        OT                           & N/A          & 1.13         & 1.10         & 1.11         & N/A          & 1.16          & N/A           \\
                        Reverse-SDE                  & \bf{0.23}    & 1.16         & \bf{1.00}    & \bf{0.64}    & \bf{0.28}    & \bf{0.93}     & \bf{0.66}     \\
                        \bottomrule
                  \end{tabularx}
                  \caption{Comparison of comparable methodologies on the Embryoid data set with the Wasserstein-1 distance
                        The 'Path' column denotes the average EMD of the intermediate time steps $T \in \{2,3,4\}$ whereas the 'Full' column averages all time steps.
                        The optimal transport linear transport map is only defined for the intermediate time steps as it requires the two marginal distributions at $T\in \{1,5\}$.}
                  \label{tab:embryoidperformance}
            \end{table}
      }

\section{Discussion and Limitations}

We have presented a method for solving the Schr\"odinger Bridge problem
based on the Iterative Proportional Fitting algorithm. In contrast to the simpler diffusion models for data generation,
Schr\"odinger Bridges can provide interpolations between \textit{two} arbitrary data distributions. Unfortunately, this
advantage comes with the drawback that score functions which can be used to compute drift functions for the time reversed process
are not analytically available.
The novel aspect of our approach is a formulation of reverse time drift functions as solutions of a minimization problem (an extension of the score matching method) involving expectations over the forward process.
This allows for the efficient training of neural networks
representing the drift functions on simulated trajectories of the corresponding stochastic differential equations.

The core advantage and disadvantage of our method is the lack of the analytical score function in the computation of the time-reversible dynamics in both directions of time.
Naturally, we are not bound by choosing particular stochastic dynamics for the forward and backward process.
Yet, this also forfeits the analytical score function which is available in the case of carefully chosen dynamics.
Therefore, the optimization of the respective neural networks approximating the drift functions is more challenging.
It takes considerable computational resources to train the neural networks to accurately model the score function.
In fact, we have seen that the scaling of these computational resources is not linear with the dimensionality of the problem.
This is a well-known problem in the field of generative models and is not unique to our approach.
The authors in \cite{song2019generative} have also observed that the number of samples drawn from the stochastic process is the most important hyperparameter to ensure convergence of the Schr\"odinger Bridge.
Without an analytical score function, the number of samples drawn from the stochastic process is the most important hyperparameter to ensure convergence of the Schr\"odinger Bridge which incurs high computational costs.

% To evaluate our approach we have conducted experiments on two synthetic data sets which allowed us to analyze its performance within a controlled environment. Finally, we have applied our method to a single cell mRNA data set in which a Schr\"odinger Bridge
% is constructed between intermediate measurements where initial and ado end distributions are both non--Gaussian. On this data set we outperform similar methodologies, also beating a linear transportation map in two out of three instances.

Our variational approach of estimating drift functions in the IPF algorithm for solving Schr\"odinger bridges
could be extended in various ways.
The variational formulation of our drift estimators is independent of the temporal discretization
used for creating sample trajectories. It only depends on the \textit{marginal distributions of the state variables}. This fact opens
up alternative possibilities for generating appropriate samples by forward and backward simulations which
could allow for larger step--sizes. One might e.g. consider \textit{weak} approximation schemes \cite{kloeden2002numerical}
for numerically simulating SDE.  A different alternative is the application of \textit{deterministic}, particle based simulations
\cite{maoutsa2020interacting} where the reduced variance of estimators might allow to keep the number of trajectories small.
Finally, \textit{exact sampling} methods (see e.g. \cite{beskos2006exact}) which entirely avoid temporal discretization,
would be interesting candidates.

Reliably estimating the score-based drift functions in regions with small marginal probability densities
remains a challenging problem, especially for higher dimensions.
This is evident in our synthetic experiments, as the number of trajectory samples remained the most important hyper parameter to ensure convergence of the Schr\"odinger Bridge (see also \cite{song2019generative} for similar observations). It would be interesting to see if prior knowledge
expressed by exact analytical results for asymptotic scaling of densities could be implemented in
the function approximators to improve on that problem.

It is relatively straightforward to adapt the variational approach (see e.g. \cite{maoutsa2020interacting}),
together with a corresponding change in the relations \eqref{eq:BW_driftNelson1} and \eqref{eq:BW_driftNelson2} to solve
Schr\"odinger bridge problems for more general types of stochastic differential equations. Interesting cases
could include SDE with (fixed) state and time-dependent diffusion matrices but also processes based on  \textit{Langevin dynamics}
(i.e. systems of second order SDE) well known for modeling physical systems which are also used for Hamilton Monte--Carlo simulations.
\chapter{Conclusion}
\label{cha:conclusion}

Throughout this thesis, we have presented novel approaches to infer time-reversible dynamics with boundary conditions imposing constraints on the solution of the underlying differential equation.
We showed that neural networks can be successfully employed to learn deterministic, time-reversible dynamics in boundary value problems, discrete stochastic dynamics in half bridges, and full stochastic bridges in continuous state spaces.
Importantly, the dynamics were \emph{learned} by neural networks in a data-driven way, which is in contrast to the traditional approach of solving apriori known differential equations by numerical integration.

In particular, the contributions of this thesis can be summarized in the following three parts:
\begin{itemize}
      \item In Chapter \ref{cha:deterministicbvps}, a novel approach to learn deterministic dynamics in boundary value problems was proposed by training neural networks as time-reversible integrators.
            This approach enables the learning of the dynamics of molecular systems in a data-driven manner and reconstructs the vibrational spectra of complex molecular systems with high fidelity.
      \item In Chapter \ref{cha:discretehalfbridge}, a diffusion model for time-reversible, discrete-space jump processes based on the Ehrenfest process was introduced.
            A rigorous analysis of the convergence of the Ehrenfest process to the Ornstein-Uhlenbeck process was provided and utilized to show that the dynamics of the Ehrenfest process can be learned by training a neural network with a regression loss to learn the OU denoising objective of DDPM in a novel way.
      \item In Chapter \ref{cha:stochasticbridge} the Schrödinger Bridge Problem was approached with machine learning by parameterizing the drifts of two separate Ito drift-diffusion processes with neural networks.
            The intractable score term was reformulated with a surrogate loss function which allowed learning a bridge between two probability distributions by reverting otherwise intractable stochastic processes.
\end{itemize}
\section{Summary}

\vspace{0.25cm}
\textbf{Learning Deterministic Dynamics in Boundary Value Problems}
\vspace{0.25cm}

In \Cref{cha:deterministicbvps} we formulated an approach to learning time-reversible deterministic molecular dynamics trajectories.
Using the modeling capacity of neural networks as powerful universal function approximators, we demonstrated the capability to capture intricate dynamical behaviors inherent in molecular systems, thereby paving the way for accurate and efficient trajectory modeling.
Remarkably, we were able to reconstruct the dynamics of complex molecular systems with high fidelity, showcasing the potential of machine learning in computational chemistry.

By using time-reversible recurrent neural networks, we provided the blueprint to learn the dynamics of molecular systems in a data-driven manner by using the neural networks as time-reversible integrators with significantly lower computational overhead.
Importantly, the relevant trajectories in both the forward and the backward direction were available and could be learned simultaneously from ground truth data.
We saw that the choice of the neural network architecture and the training strategy had a significant impact on the performance of the model.
Experimentally, the convincing reconstructive capabilities of the vibrational spectra with a machine learning-based integrator provided ample evidence for the potential of the approach.
As the training data was subject to thermal noise, the model had to be robust to noise and integrate trajectory information over time.
We found that Long Short-Term Memory networks with memory cells were more suitable for learning the dynamics of molecular systems than other network architectures.

\vspace{0.25cm}
\noindent\textbf{Inferring Stochastic Jump Processes in Discrete Half-Bridges}
\vspace{0.25cm}

In \cref{cha:discretehalfbridge} we considered the learning of discrete half bridges by learning the dynamics of a stochastic jump process.
We presented a novel diffusion model for time-reversible, discrete-space, continuous time stochastic jump processes.
In particular, we rediscovered the Ehrenfest process which is a birth death process.
We successfully trained a neural network to learn the dynamics of the Ehrenfest process and achieved comparable performance in the image generation benchmarks of MNIST and CIFAR-10.

A particular property of the Ehrenfest process is that it converges to the Ornstein-Uhlenbeck process in the limit of infinite states.
We provide a rigorous analysis of the convergence of the Ehrenfest process to the Ornstein-Uhlenbeck process.
This convergence is a key insight into the interplay between discrete and continuous diffusion processes and creates a direct link between the conditional score of the Ornstein-Uhlenbeck process and the birth and death rates of the Ehrenfest process.
Building on this connection, we provided a novel approach to learning the dynamics of the Ehrenfest process by training a neural network with a regression loss to learn the OU denoising objective of DDPM.
This stands in contrast to previous work which learned the predictive distribution over the initial conditions.
Subsequently, we could model a continuous time Markov chain with the corresponding score of a continuous space model for the first time.

\vspace{0.25cm}
\noindent\textbf{Tackling the Schrödinger Bridge Problem with Machine Learning}
\vspace{0.25cm}

In \cref{cha:stochasticbridge} we considered learning stochastic dynamics to connect two continuous probability distributions.
Previous chapters either had access to deterministic ground truth solutions connecting their boundaries or had a reference process available connecting the boundary probability distributions.
The so-called Schrödinger Bridge Problem is posed without any information on how to connect these two probability distributions.
Consequently, the stochastic dynamics inducing a connection between the two provided boundary probability distributions have to be inferred.
We followed the Iterative Proportional Fitting methodology and parameterized the drifts of two separate Ito processes with neural networks.
To reverse the stochastic processes correctly in time, we reformulated the intractable score term with a surrogate loss function.
The resulting criterion is substituted via integration by parts of the Jacobian of the model for the intractable gradient of the marginal distributions of the reference stochastic process.
This enabled the learning of the full bridge in the space of probability distributions.

For computational reasons, we employed stochastic trace estimation methods to accelerate the computation of the surrogate score term.
Numerically, we conducted a series of experiments evaluating the performance and the influence of the hyperparameters of the model and optimization on two illustrative data sets and a bacteria data set.
We found that the model was able to learn the dynamics of the bridge and accurately model the evolution of the bacteria populations between observations.

% In conclusion, the presented work has demonstrated the potential of machine learning to learn time-reversible dynamics constrained by boundary conditions across various dynamic systems.
% We showed that neural networks are suitable for learning deterministic, time-reversible dynamics in boundary value problems, discrete stochastic dynamics in half bridges, and full stochastic bridges in continuous state spaces.

\section{Discussion and Future Directions}

This thesis has demonstrated the potential of using machine learning to learn time-reversible dynamics constrained by boundary conditions across various dynamic systems. This approach introduces both significant advantages and notable limitations that are crucial for understanding its broader implications and opportunities for further research.

\vspace{0.25cm}
\noindent\textbf{Neural Networks Integrators for Initial Value Problems in MD}
\vspace{0.25cm}

Training neural networks as time-reversible integrators for molecular dynamics simulations has shown promising results in reconstructing the dynamics of complex molecular systems \cite{winkler_2022}.
The time-reversible nature of the deterministic part of the dynamics simplifies the learning process and allows for the reconstruction of higher-frequency components of the molecular dynamics simulation.
In terms of applications, our approach from \cite{winkler_2022} has also been successfully applied to non-adiabatic Hamiltonians in \cite{wang2023interpolating}.
However, the requirement of providing the final condition for the dynamics poses a limitation, as it necessitates solving the dynamics forward until the final condition.
% For an accurate simulation this would require solving the dynamics forward in the first place, which would make the neural network integrator redundant.
We see the potential to extend the approach to the learning of the dynamics of molecular systems in a multi-scale fashion posed as an initial value problem \cite{noe2020machine, kohler2023flow}.
The key goal of molecular dynamics is the fast and accurate simulation of the dynamics to estimate macroscopic observables reliably \cite{wang2019machine}.

Recently, there have been impressive advancements in protein structure prediction, such as AlphaFold \cite{jumper2021highly, varadi2022alphafold, abramson2024accurate}, which can predict crystallized protein structures from amino acid sequences.
Whereas molecular dynamics provides a dynamical simulation of the protein's conformational changes over time, models like AlphaFold are a specialized tool that can predict static protein structures with high accuracy and speed, providing a supplementary method to molecular dynamics simulations for certain applications \cite{unke2024biomolecular}.

In this work, we have used neural networks as time-reversible integrators to learn the dynamics of molecular systems in a data-driven manner, which reduces the computational cost at each step of the molecular dynamics simulation to the evaluation of a neural network.
This has clear computational advantages at the cost of precision.
As a well-established alternative, force fields can be learned directly from data and have been shown to be competitive with traditional force fields in terms of accuracy and efficiency and modeling existing symmetries \cite{unke2021machine, chmiela2017machine, unke2021spookynet, chmiela2023accurate}.

As yet another alternative methodology, generative modeling has been employed to directly model the probability distribution of the molecular system \cite{noe2019boltzmann, wu2020stochastic, kohler2019equivariant, kohler2021smooth, vaitl2024fast}.
These models aim to directly model the targeted probability distribution under which an expectation is computed, thereby sidestepping the molecular dynamics simulation.

The approach presented in this thesis is a promising step towards fast and accurate molecular dynamics simulations, but the learning of the dynamics is still an open problem.
Empirically, we have seen that although the prior knowledge of the dynamics yields good results in theory and on toy problems, at scale the modeling capacity of the function approximators seems to matter more.
This is in line with recent advances in natural language processing and video generation, in which the computational capabilities were the bottleneck for the performance of the models.
It would therefore be interesting to see how far the learning of the dynamics can be pushed with the current state of the art in machine learning which mostly relies on size and modelling capability.

\vspace{0.25cm}
\noindent\textbf{Bridging Discrete and Continuous Diffusion Models}
\vspace{0.25cm}

The convergence of the Ehrenfest process to the Ornstein-Uhlenbeck process provides a key insight into the link between discrete and continuous diffusion processes \cite{winkler2024ehrenfest}.
From this link, numerous theoretically and practically advantageous properties can be derived.
The main advantage of this approach is the ability to learn the dynamics of the discrete process with the continuous process, which can be directly learned with neural networks framing it as a regression instead of a variational inference task.

The main drawback is that the discrete to continuous link is only possible through the correspondence of discrete and continuous processes and does not work for arbitrary discrete processes, such as categorical jump processes.
Fundamentally, the link between the discrete Ehrenfest process and the continuous Ornstein-Uhlenbeck process is shown to exist in an ordered state space \cite{sumita2004numerical}.
While the number of problem classes with ordered state spaces is certainly high, this approach is not readily applicable to arbitrary discrete processes in unordered state spaces.
Here a recent approach using the 'concrete' score estimator has shown to surpass GPT-2 models in terms of text generation \cite{meng2022concrete, lou2023discrete, radford2019language}.

A particular numerical bottleneck is the sampling of the birth-death process, which requires either an explicit factorization of the dimensions or an implicit factorization via the used $\tau$-leaping sampling method \cite{cao2006efficient, gillespie1976general, gillespie2001approximate}.
Remarkable advances in sampling speed have been achieved for the continuous diffusion models by employing novel solvers based on the probability flow formulation.
These sampling methods leverage efficient and fast ODE solvers or use novel noise schedules \cite{maoutsa2020interacting, song2020score, song2020denoising, karras2022elucidating}.
Compared to continuous generative diffusion models the development of efficient, exact, and fast sampling of either birth-death or categorical jump processes can be considered to be still an open problem.

\vspace{0.25cm}
\noindent\textbf{Tractable and Scalable Schrödinger Bridges}
\vspace{0.25cm}

The Schrödinger Bridge Problem has been tackled with a novel criterion that uses a surrogate loss via integration by parts, moving the gradient of the log probability of the marginal distributions to the Jacobian of the model output \cite{winkler_2023}.
The main advantage of this approach is the ability to learn the full bridge without any ground truth data or reference process.

Yet, this requires the evaluation of the Jacobian of the neural network, which in its explicit form is computationally expensive and poses a significant bottleneck.
For the work resulting in this thesis, we employed stochastic trace estimation methods to accelerate the computation of the trace of the Jacobian term.
Naturally, the stochasticity of the trace estimator introduces variance into the loss, and already for moderately sized state spaces the variance of the estimator is prohibitively high.
This has been identified as the main bottleneck for scaling the proposed criterion to high-dimensional problems.
Alternative approaches have used simulation-based methods which learned the gradient of the log marginal probabilities implicitly, \cite{de2021diffusion}.
In this approach, the models were trained to reverse the diffusive process directly for very small time steps.
This circumvents the need to compute the score as a property of the marginal probabilities.

Another drawback is the need for explicitly sampling from the reference process during the IPF iterations.
This simulation-based training is slow and requires a large number of samples to obtain accurate estimates of the gradient, especially in high-dimensional spaces.
Recent work has shown that conditional flow matching with subsequent divergence correction is simulation-free and conceptually easy to implement \cite{lipman2022flow, shi2024diffusion}.
This approach directly learns a vector field of velocities inferred by an interpolation scheme.
Flow matching learns a velocity field between a Gaussian prior distribution and the target distribution.
This methodology can be extended to stochastic bridge learning with conditional flow matching, which enables learning bridges between two arbitrary distributions \cite{shi2024diffusion}.

% ---------------------------------------------------------------
% \backmatter % no page numbering from here
% \input{./misc/acronyms}

% if you want to provide a glossary with explanations of important terms put it in here
% \printglossaries
% \bibliographystyle{geralpha}
\printbibliography

\appendix

\newcommand{\KL}[2]{\text{KL}\left[ #1 \ || \ #2 \right]}
\newcommand{\denom}[1]{\frac{1}{#1}}

\chapter{Appendix}

\begin{tcolorbox}[colback=gray!10!white, colframe=black]
  Parts of this chapter are constituted from the publications:
  \begin{itemize}
    \item \fullcite{winkler_2022}
    \item \fullcite{winkler_2023}
    \item \fullcite{winkler2024ehrenfest}
  \end{itemize}
  or have been published on my personal blog at \href{https://ludwigwinkler.github.io}{ludwigwinkler.github.io}.
\end{tcolorbox}

\section{Supplementary Material for Chapter 2}
\subsection{Derivation of Fokker-Planck Equation}
\label{app:ch2fpederivation}

Let us consider the random variable $X(t)$ that follows an Ito drift-diffusion \cite{ito1951stochastic,ito1984introduction,oksendal2003stochastic,risken1996fokker} process of the form

\begin{align}
  dX(t) = \mu(X(t), t) dt + \sigma(X(t), t) dW(t)
\end{align}

where $W_t$ is a Wiener process with $W_t \sim \mathcal{N}(0, t)$ and induces a probability distribution $p(x, t)$.

What follows is a rederivation of the Fokker-Planck equation for the interested machine learning reader without a PhD in pure mathematics.
A more elaborate treatment of the Fokker-Planck equation can be found in \cite{risken1996fokker,tabar2019analysis}.

We want to study an arbitrary function $\mu(X(t), t)$ with a compact support, meaning that $\lim_{X(t) \rightarrow \pm \infty} f(X(t))=0$.
Intuitively, this means that for the extreme values of $\pm \infty$ the function $f(X(t))$ evaluates to zero.
The function $f(X(t))$ should be twice differentiable in its argument $X(t)$ such that we can use the Taylor expansion up to the second order, giving us

\begin{align}
  df(X(t)) = \partial_x f(X(t)) dX(t) + \frac{1}{2} \partial_x^2 f(X(t)) dX(t)^2.
\end{align}

For the infinitesimal values $dt$, any term with an exponent higher than one will go towards zero at a faster rate.
Thus the terms $dt^2$, $dt dW_t = dt^{1.5}$ will evaluate to zero at their infinitesimal limit.
We can then plug in the dynamics of $X(t)$ to obtain

\begin{align}
  df(X(t)) = & \partial_x f(X(t)) dX(t) + \denom{2} \partial_x^2 f(X(t)) dX(t)^2                           \\
  =          & \partial_x f(X(t)) \left( \drift dt + \diff dW_t \right)                                    \\
             & + \denom{2} \partial_x^2 f(X(t)) \Big( \drift dt + \diff dW_t \Big)^2                       \\
  =          & \partial_x f(X(t)) \left( \drift dt + \diff dW_t \right)                                    \\
             & + \denom{2} \partial_x^2 f(X(t)) \Big( \drift^2 \underbrace{dt^2}_{=0}                      \\
             & \quad + \drift \diff \underbrace{ dt \ dW_t}_{=0} + \diff^2 \underbrace{dW_t^2}_{=dt} \Big) \\
  =          & \left(\drift \partial_x f(X(t)) + \denom{2} \diff^2 \partial_x^2 f(X(t)) \right) dt         \\
             & + \diff \partial_x f(X(t)) dW_t
\end{align}

We can easily see that the differential $df$ follows an Ito drift-diffusion process, although with modified drift and diffusion terms in direct comparison to $dX(t)$.
Naturally we can take the expectation to isolate the drift of $df(X(t))$ since $\Efunc{dW_t}=0$ by definition,

\begin{align}
  \Efunc{df(X(t))}             & = \Efunc{ \drift \partial_x f(X(t)) + \denom{2} \diff^2 \partial_x^2 f(X(t)) } dt \\
  \frac{d}{dt} \Efunc{f(X(t))} & = \Efunc{ \drift \partial_x f(X(t)) + \denom{2} \diff^2 \partial_x^2 f(X(t)) }
\end{align}

Since the Wiener process $W_t$ introduces stochasticity into the evolution of $X(t)$, we are in fact dealing with a distribution $p(x, t)$.
We can then proceed by plugging in the distribution $p(x, t)$ into the expectation and writing it out,

\begin{align}
  \frac{d}{dt} \Efunc{f(X(t))} = & \Efunc{ \drift \partial_x f + \denom{2} \diff^2 \partial_x^2 f(X(t)) }        \\                                                                     \\
  =                              & \int_{-\infty}^\infty \drift \ \partial_x f(X(t)) \ p(x, t) dx                \\
                                 & + \denom{2} \int_{-\infty}^\infty \diff^2 \ \partial_x^2 f(X(t)) \ p(x, t) dx
\end{align}

The state so far is that we reduced the expected change in $f$ to two integrals which we now have to solve.
For this we can utilize integration by parts which utilizes the integral of the product rule.
Remember that

\begin{align}
  \partial_x \left[ u(x) v(x) \right] = \partial_x \left[  u(x) \right] v(x) + u(x) \partial_x \left[ v(x) \right]
\end{align}

or in a easier form

\begin{align}
  \left( u(x) v(x) \right)' = u'(x) v(x) + u(x) v'(x)
\end{align}

The integration by parts rule states that for a range $x \in [ a, b ]$

\begin{align}
  \left[ u(x) v(x) \right]_a^b = \int_a^b u'(x) v(x) dx + \int_a^b u(x) v'(x) dx
\end{align}

or alternatively

\begin{align}
  \int_a^b u(x) v'(x) dx = \left[ u(x) v(x) \right]_a^b - \int_a^b u'(x) v(x) dx
\end{align}

We can now proceed to identify the relevant terms $u(x)$ and $v(x)$ in the two integrals,

\begin{align}
  \frac{d}{dt} \Efunc{f(X(t))} = & \int_{-\infty}^\infty \underbrace{\drift \ p(x, t)}_{u(x)} \ \underbrace{\partial_x f(X(t))}_{v'(x)}  dx               \\
                                 & + \denom{2} \int_{-\infty}^\infty \underbrace{ \diff^2 \ p(x, t)}_{u(x)} \ \underbrace{\partial_x^2 f(X(t))}_{v(x)} dx \\
  =                              & \underbrace{\left[  \drift \ p(x, t)  \  f(X(t)) \right]_{-\infty}^\infty}_{=0}                                        \\
                                 & - \int_{-\infty}^\infty  \partial_x \left[ \drift \ p(x, t) \right] \ f(X(t)) \ dx                                     \\
                                 & + \denom{2} \underbrace{\left[  \diff^2 \ p(x, t)  \  \partial_x f(X(t)) \right]_{-\infty}^\infty}_{=0}                \\
                                 & - \denom{2} \int_{-\infty}^\infty  \partial_x \left[ \diff^2 \ p(x, t) \right] \ \partial_x f(X(t)) \ dx
\end{align}

For any reasonable continuous probability distribution, evaluating $p(x,t)$ at $\pm \infty$ evaluates to zero such that the evaluation brackets $\left[ p(x,t) \ldots \right]_{-\infty}^\infty = 0$.
We can then apply the integration by parts a second time on the second integral to obtain

\begin{align}
  \frac{d}{dt} \Efunc{f(X(t))} = & - \int_{-\infty}^\infty  \partial_x \left[ \drift \ p(x, t) \right] \ f(X(t)) \ dx                                                                      \\
                                 & - \denom{2} \int_{-\infty}^\infty  \underbrace{\partial_x \left[ \diff^2 \ p(x, t) \right]}_{u(x)} \ \underbrace{\partial_x f(X(t))}_{v'(x)} \ dx       \\
  =                              & \int_{-\infty}^\infty  \partial_x \left[ \drift \ p(x, t) \right] \ f(X(t)) \ dx                                                                        \\
                                 & - \denom{2} \underbrace{\left[ \partial_x \left[ \diff^2 \ p(x, t) \right] \ f(X(t)) \right]_{-\infty}^\infty}_{=0}                                     \\
                                 & + \denom{2} \int_{-\infty}^\infty \partial_x^2 \left[ \diff^2 \ p(x, t) \right] \ f(X(t))\ dx                                                           \\
  =                              & \int_{-\infty}^\infty f(X(t)) \left( - \partial_x \left[ \drift \ p(x, t) \right] + \denom{2} \partial_x^2 \left[ \sigma^2 \ p(x, t) \right] \right) dx
\end{align}

With Leibniz' rule we can pull in the time derivative on the left hand side to obtain

\begin{align}
  \frac{d}{dt} \Efunc{f(X(t))} = & \frac{d}{dt} \int_{-\infty}^\infty f(X(t)) p(x,t) dx   \\
  =                              & \int_{-\infty}^\infty f(X(t)) \ \partial_t \ p(x,t) dx
\end{align}

which gives us

\begin{align}
  \int_{-\infty}^\infty f(X(t)) \ \partial_t \ p(x,t) dx = \int_{-\infty}^\infty f(X(t)) \Big( & - \partial_x \left[  \drift \ p(x, t) \right]                      \\
                                                                                               & + \denom{2} \partial_x^2 \left[ \diff^2 \ p(x, t) \right] \Big) dx
\end{align}

The last step to obtain the Fokker-Planck equation is to observe that the function $f$ which is integrated over occurs both on the left and the right hand side.
Since the integrals $\int f(x) \ldots dx$ is identical on both sides we can equate the derivatives directly to obtain

\begin{align}
  \partial_t \ p(x,t) = - \partial_x \left[ \drift \ p(x, t) \right] + \denom{2} \partial_x^2 \left[ \diff^2 \ p(x, t) \right]
\end{align}

which results in the celebrated Fokker-Planck PDE.

\subsection{Derivation of Reverse Time Stochastic Differential Equations}
\label{app:cha2reversetimederivation}

For the interested reader, this is a basic derivation of reverse time stochastic differential equations.
A more thorough treatment can be found in \cite{nelson1979connection,nelson1966derivation,nelson1967dynamical,anderson1982reverse}.

We start out with the Fokker-Planck equation (FPE) which relates the change over time for the probability for a specific value of $x$ with a diffusion term $\sigma(t)$ which is only dependent on the time,
\begin{align}
  \partial_t \ p(x,t) = & - \partial_x \left[ \drift \ p(x, t) \right] + \denom{2} \partial_x^2 \left[ \sigma(t)^2 \ p(x, t) \right]  \\
  =                     & - \partial_x \left[ \drift \ p(x, t) \right] + \denom{2} \sigma(t)^2 \partial_x^2 \left[ \ p(x, t) \right].
\end{align}

Now we consider a time reversion $\tau(t) = 1 - t$ and are interested in what the change of the probability distribution is under this reversed time index,
\begin{align}
  \partial_{t} \ p(x,\tau(t)) = & - \partial_x \left[ \mu(X(\tau(t)), \tau(t)) \ p(x, \tau(t)) \right]       \\
                                & + \denom{2} \partial_x^2 \left[ \sigma(\tau(t))^2 \ p(x, \tau(t)) \right].
\end{align}
With the time transformation $\tau(t)$, we apply the chain rule on the time transformation on the left hand side to obtain
\begin{align}
  \frac{\partial p(x,\tau(t))}{\partial t}
  = \frac{ \partial p(x,\tau(t))}{\partial \tau} \ \frac{\partial \tau(t)}{\partial t}
  = \frac{ \partial p(x,\tau(t))}{\partial \tau} \
  \underbrace{ \frac{\partial \tau(t)}{\partial t}}_{-1}
  = -\frac{ \partial p(x,\tau(t))}{\partial \tau}.
\end{align}
Then, we pull the negative factor from the chain rule to the right hand side and combine the drift and diffusion term into a single derivative via the distributive property of the partial derivative,
\begin{align}
  \frac{ \partial p(x,\tau(t))}{\partial \tau} = & \partial_x \left[ \mu(X(\tau(t)), \tau(t)) \ p(x, \tau(t)) \right]         \nonumber                              \\
                                                 & - \denom{2} \sigma(\tau(t))^2 \partial_x^2 \left[ \ p(x, \tau(t)) \right]  \label{eq:app_reversetime_derivation1} \\
  =                                              & - \partial_x \Bigg[ -\mu(X(\tau(t)), \tau(t)) \ p(x, \tau(t))              \nonumber                              \\
                                                 & \qquad \quad + \denom{2} \sigma(\tau(t))^2 \partial_x \left[ \ p(x, \tau(t)) \right] \Bigg]
\end{align}
Applying the log derivative identity $\partial_x \log p(x) = \nicefrac{1}{p(x)} \partial_x p(x)$, which rearranged yields $ \partial_x p(x) = p(x) \partial_x \log p(x)$, we obtain
\begin{align}
  \frac{ \partial p(x,\tau(t))}{\partial \tau} = & - \partial_x \Bigg[ -\mu(X(\tau(t)), \tau(t)) \ p(x, \tau(t))                            \nonumber                                                                             \\
                                                 & \qquad \quad + \denom{2} \sigma(\tau(t))^2 \partial_x \log p(x, \tau(t)) p(x, \tau(t) ) \Bigg]                                                                                 \\
  =                                              & - \partial_x \Bigg[ \Big( \underbrace{-\mu(X(\tau(t)), \tau(t)) + \denom{2} \sigma(\tau(t))^2 \partial_x \log p(x, \tau(t))}_{\text{reverse drift}} \Big) p(x, \tau(t))\Bigg].
\end{align}

The equation above states that the inverted drift with an additional scaled score term of the forward distribution will invert the stochastic process.
Interestingly, there is no diffusion term occurring in this formulation of the reverse FPE.
We can in fact derive a more flexible reverse drift by returning to a slightly rearranged equation \ref{eq:app_reversetime_derivation1} with an additional scaling factor $\alpha^2$,
\begin{align}
  \frac{ \partial p(x,\tau(t))}{\partial \tau} = & - \partial_x \left[ - \mu(X(\tau(t)), \tau(t)) \ p(x, \tau(t)) \right]         \nonumber                                                                                                            \\
                                                 & - \denom{2} \sigma(\tau(t))^2 \partial_x^2 \left[ \ p(x, \tau(t)) \right] \underbrace{\pm \ \alpha^2 \ \sigma(\tau(t))^2 \partial_x^2 \left[ \ p(x, \tau(t)) \right]}_{\text{additional diffusion}} \\
  =                                              & - \partial_x \left[ - \mu(X(\tau(t)), \tau(t)) \ p(x, \tau(t)) \right]         \nonumber                                                                                                            \\
                                                 & - \sigma(\tau(t))^2 \left( \nicefrac{1}{2} + \alpha^2 \right) \partial_x^2 \left[ \ p(x, \tau(t)) \right]                                                                                           \\
                                                 & \underbrace{ + \ \frac{\alpha^2}{2} \ \sigma(\tau(t))^2 \partial_x^2 \left[ \ p(x, \tau(t)) \right]}_{\text{additional diffusion}}                                                                  \\
  =                                              & - \partial_x \Bigg[ \Big( - \mu(X(\tau(t)), \tau(t))                                                                                                                                                \\
                                                 & \qquad \qquad + \sigma(\tau(t))^2 \left( \nicefrac{1}{2} + \alpha^2 \right) \partial_x \log p(x, \tau(t)) \Big) \ p(x, \tau(t)) \Bigg]         \nonumber                                            \\
                                                 & + \ \alpha^2 \ \sigma(\tau(t))^2 \partial_x^2 \left[ \ p(x, \tau(t)) \right].
\end{align}

from which we can infer the reverse drift consisting of the inverted original drift with the additionally scaled score with $\alpha$ and the additional diffusion,
\begin{align}
  dX(\tau) = & ( - \mu(X(\tau(t)), \tau(t)) + \sigma(\tau(t))^2 \left( \nicefrac{1}{2} + \alpha^2 \right) \partial_x \log p(x, \tau(t)) ) dt \\
             & + \alpha \ \sigma(\tau(t)) dW(\tau)
\end{align}

%===================================================================================

\section{Supplementary Material for Chapter 3}

% \subsection{Experimental Evaluation}
% \label{app:chapter3experimentalresults}

\graphicspath{{./img/chapterMD}}

\subsection{Neural Network Architectures and Optimization}
\label{app:cha3training}

The networks were trained to predict the change in the $3N$ position and $3N$ momentum of every atom for a total input and output dimensions of $6N$ where $N$ is the number of atoms in the respective molecule.
For all networks we employed a standardized number of hidden layers and the number of neurons in the hidden layers was chosen as an integer multiple of the input dimension.
For our experi\-ments we chose five hidden layers and a multiple of 5 which resulted, i.e. for a molecule of $N=10$ atoms in $10 \cdot 3 \cdot 2=60$ input features and $60 \cdot 5=300$ neurons per hidden layer.

The neural networks were trained on predicting entire trajectories of varying length.
For each mini-batch a sub-trajectory of length $\Delta t$ with $n$ time steps was sampled and the first value was used as the initial condition for the integration of the differential equation.
For bi-directional models, the final time step was used for the backward integration.
The recurrent architectures such as RNN and LSTMs allowed for a trajectory as the initial respectively final condition.
This gave better results but made an unfair comparison vis-a-vis the NeuralODE and Hamiltonian Neural Network models, which by design can only take a single time step as their respective initial condition.

The optimization was done on mini-batches of 200 samples with the default parameters of the ADAM optimizer \cite{kingma2014adam}.
Furthermore, a plateau learning rate scheduler was employed which halved the learning rate every epoch if the criterion hadn't improved every epoch by a minimum of $0.001$ until a minimal learning rate of $0.00001$ was reached with a patience of three epochs.
The validation samples were sampled from the final 10\% of the training data set by the same sub-trajectory sampling as described above.

In order to speed up the training and inference speed in bi-directional models, the integration both forward and backward in time was combined into a single combined mini-batch.
The neural networks predicted the forward and backward integration in a single evaluation, and the backward integration was appropriately prepared by reversing the momentum in time.

The experiments were done on the extended-MD17 database~\cite{chmiela2017machine, sauceda2020JCP}. This consists in a set of molecular dynamics simulations carried out for nine molecules with different levels of fluxionality computed at the DFT level of theory using the PBE functional together with the Tkatchenko-Scheffler dispersion correction scheme~\cite{PBE1996,TS}. These molecules are shown in Fig.~\ref{fig:performance}.
This database is available at the \href{http://quantum-machine.org/gdml/#datasets}{quantum-machine website} \cite{chmiela2017machine}.
The length of the trajectories range from 100,000 time steps to almost 1,000,000 time steps with a step size of one femtosecond.
The position and momentum were normalized by subtracting the center of the molecule at each time step and rescaling the momenta such that they were standard normally distributed.
The original characteristics were easily recovered by reversing the normalization.

\begin{figure}[htbp!]
  \centering
  \includegraphics[width=0.66\textwidth]{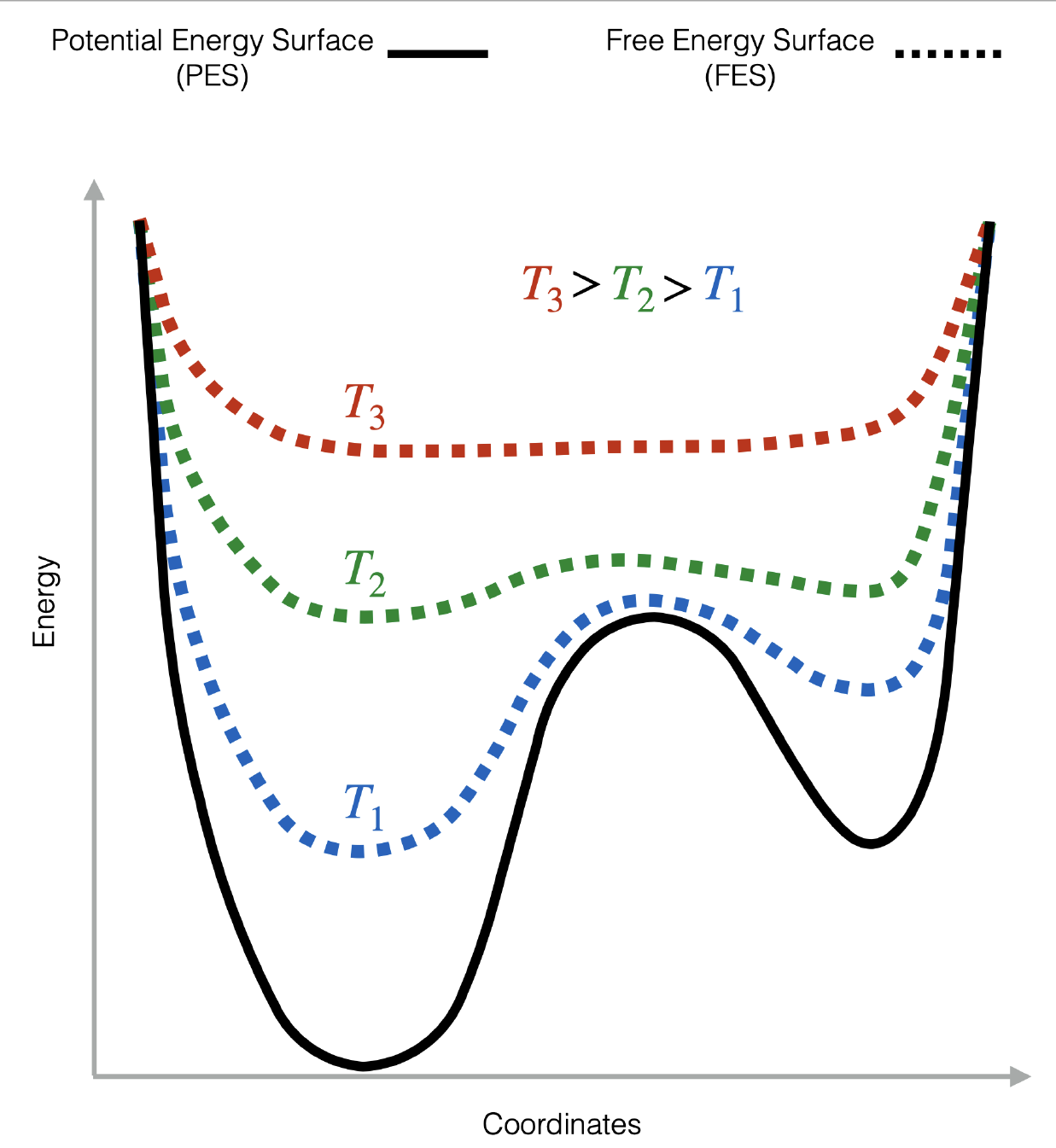}
  \caption{Pictorial representation of the effect of the temperature on the free energy surface (FES) (dotted lines) and its underlying potential energy surface (PES) (black continuous line). Here, the FES is shown at three different temperatures ($T_1 < T_2<T_3$) to show how increasing the temperature generates smoother surfaces.
  }
  \label{fig:FESvsPES}
\end{figure}

\begin{figure}[htbp!]
  \includegraphics[width=0.66\textwidth]{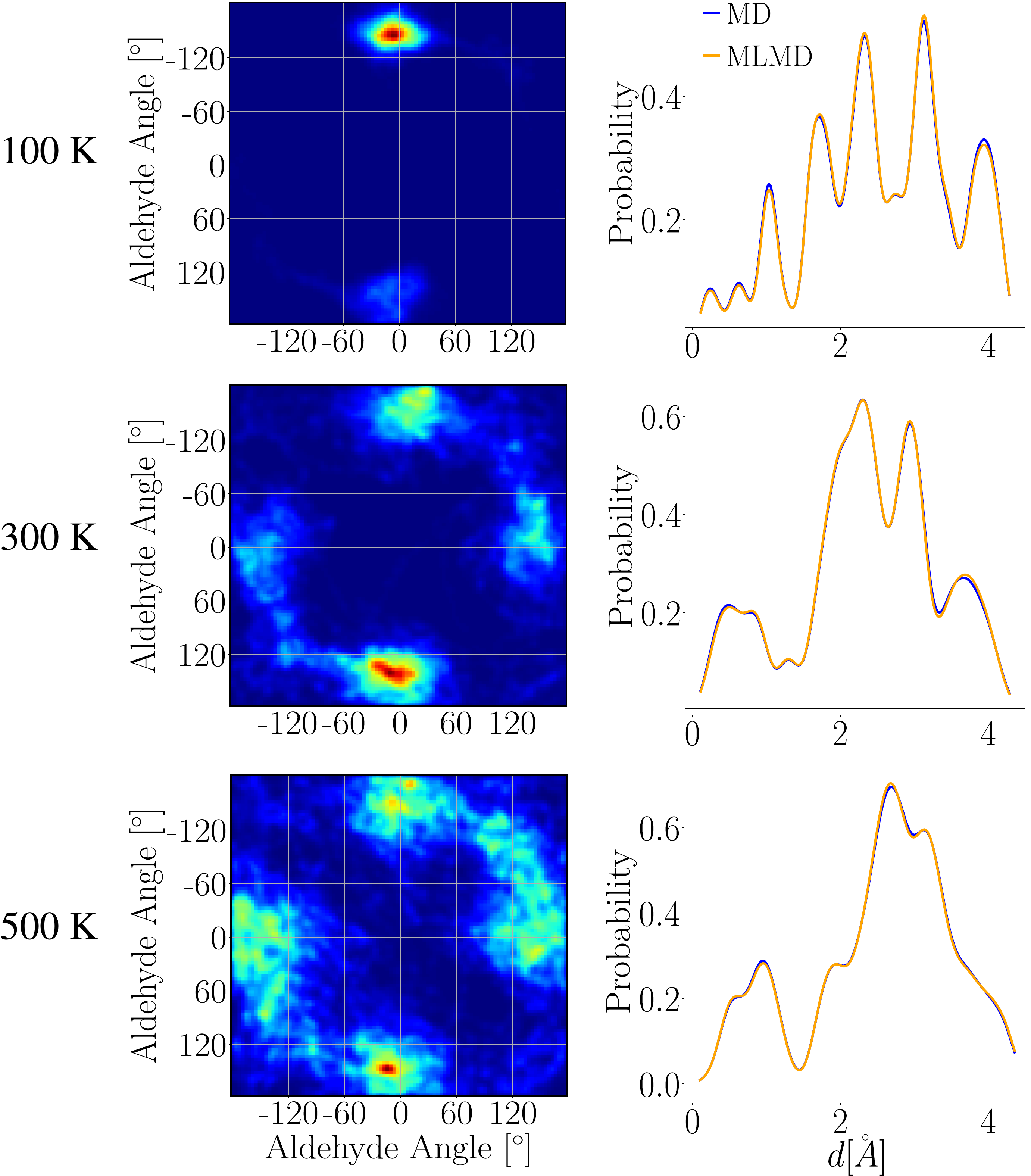}
  \caption{The distribution of interatomic distances $d$[\AA] of keto-MDA at 100 K, 300 K and 500 K. The predicted distribution of interatomic distances is shown in red and the target distribution is shown in blue.
    The distributions become less multi-modal as the temperature increases.
  }
  \label{fig:HistkMDA_distTemps}
\end{figure}

\begin{figure}[htbp!]
  \centering
  \includegraphics[width=\textwidth]{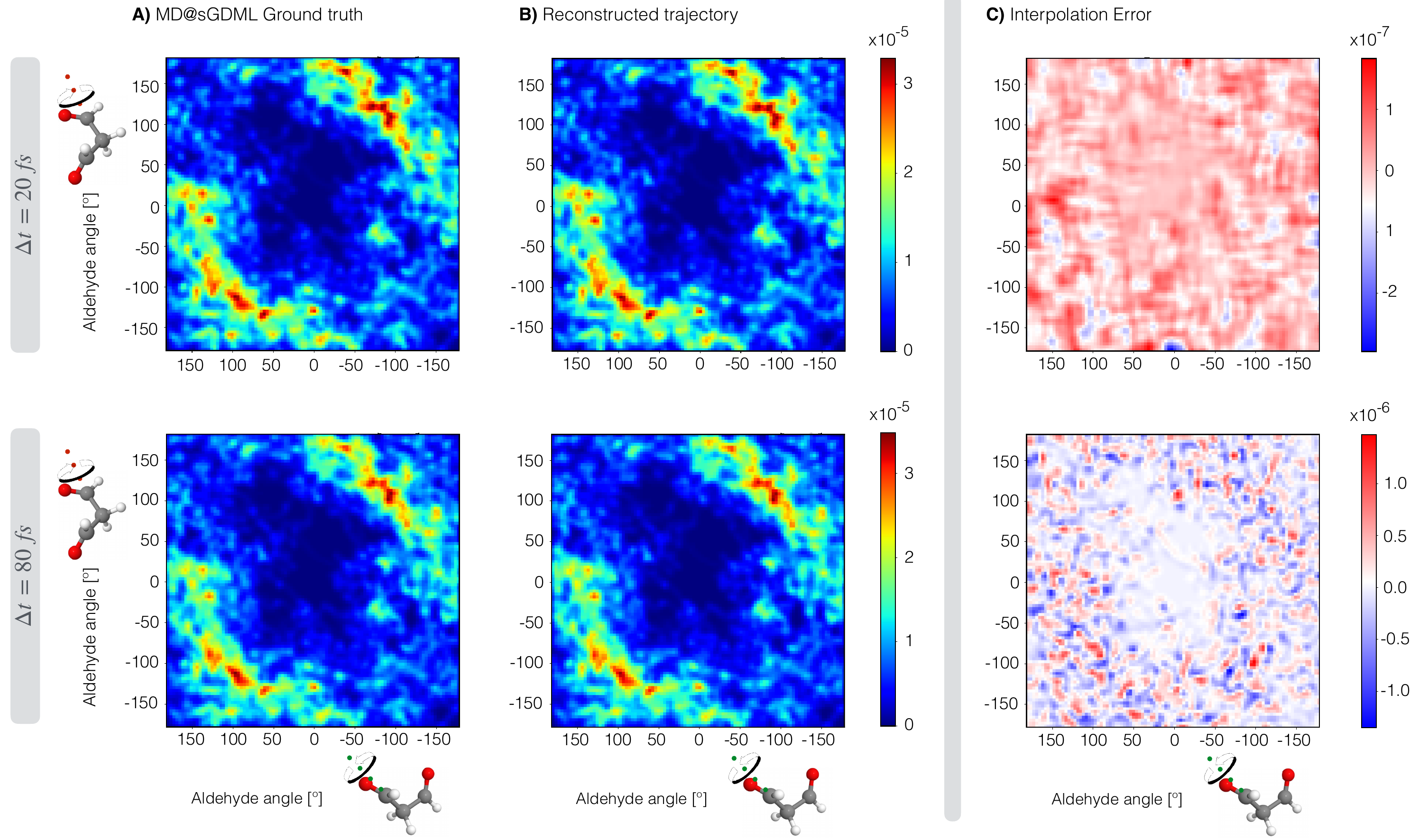}
  \caption{Sampling generated from molecular dynamics using (A) the sGDML force field trained on density functional theory (MD17 database~\cite{chmiela2017machine}) for the keto-MDA, (B) and its interpolation generated using bi-directional LSTM models for $\Delta t=20$ fs and $\Delta t=80$ fs shown in the upper and bottom rows, respectively. (C) Difference between the reference and the interpolated trajectories.}
  \label{fig:Malondialdehyde_DiffHist}
\end{figure}

\newpage
\section{Supplementary Material for Chapter 4}

\subsection{Proof of Convergence of the Ehrenfest Process}

\label{app:cha4proofconvergence}

In order to show the convergence of the Ehrenfest process to the Ornstein-Uhlenbeck process as first shown in \cite{sumita2004numerical}, we can gain some intuition by looking at the first two jump moments of the Markov jump process.
The first jump moment of a continuous time jump process $M(t)$ is defined as
\begin{subequations}
  \begin{align}
    b(x) & := \lim_{\Delta t \to 0} \frac{1}{\Delta t}\E\left[M(t + \Delta t) - M(t) | M(t) = x \right]                                                                \\
         & = \lim_{\Delta t \to 0}\frac{1}{\Delta t}\left(\sum_{y \in \Omega} y \, p_{t + \Delta t|t}(y | x) - x \sum_{y \in \Omega} p_{t + \Delta t|t}(y | x) \right) \\
         & = \lim_{\Delta t \to 0}\frac{1}{\Delta t}\left(\sum_{y \in \Omega; y\ne x} (y-x) p_{t + \Delta t|t}(y | x) \right)                                          \\
         & = \sum_{y \in \Omega; y\ne x} (y-x) r(y | x).
  \end{align}
\end{subequations}
Similarly, the second jump moment is defined as
\begin{subequations}
  \begin{align}
    D(x) & :=  \lim_{\Delta t \to 0} \frac{1}{\Delta t}\E\left[\left(M(t + \Delta t) - M(t)\right)\left(M(t + \Delta t) - M(t)\right)^\top | M(t) = x \right] \\
         & =  \sum_{y \in \Omega; y\ne x} (y-x)(y-x)^\top r(y | x).
  \end{align}
\end{subequations}
Note that the drift and diffusion coefficient for SDEs are defined analogously.

For the scaled Ehrenfest process \eqref{eq: scaled Ehrenfest} we can readily compute the corresponding jump moments with

\begin{align}
  \label{eq: forward first jump moments}
  b(x)
   & = \sum_{y\in \{x \pm \nicefrac{2}{\sqrt{S}}\}} (y - x) r(y | x)                                                             \\
   & = \frac{2}{\sqrt{S}} \frac{\sqrt{S}}{4}(\sqrt{S} - x) + \left( -\frac{2}{\sqrt{S}} \right) \frac{\sqrt{S}}{4}(\sqrt{S} + x) \\
   & = -x
\end{align}
and
\begin{align}
  \label{eq: forward second jump moment}
  D(x) & = \sum_{y\in \{x \pm \nicefrac{2}{\sqrt{S}}\}} (y - x)(y - x)^\top r(y | x)                                                                    \\
       & = \left( \frac{2}{\sqrt{S}} \right)^2 \frac{\sqrt{S}}{4}(\sqrt{S} - x) + \left( -\frac{2}{\sqrt{S}} \right)^2 \frac{\sqrt{S}}{4}(\sqrt{S} + x) \\
       & = (1 - x) + (1 + x)                                                                                                                            \\
       & = 2.
\end{align}
which is summarized in Proposition \ref{prop: convergence of forward Ehrenfest}.

\subsection{Proof of Rates of Reverse Ehrenfest Process}
\label{app:cha4proofreverseehrenfest}

\graphicspath{{./img/chapterDiscDiff}}

In the following proof, we will denote the state change as $\delta = \nicefrac{2}{\sqrt{S}}$.
To make the derivation of the reverse rates more rigorous, we shall introduce the notation
\begin{equation}
  \label{eq: forward finite difference}
  \Delta_\delta p_{t|0}(x | x_0) := \frac{ p_{t|0}(x + \delta | x_0) -  p_{t|0}(x  | x_0)}{\delta}
\end{equation}
and note the identity
\begin{equation}
  p_{t|0}(x + \delta | x_0) = p(x | x_0) + \delta \, \Delta_\delta p_{t|0}(x | x_0),
\end{equation}
which is sometimes called \textit{Newton's series for equidistant nodes} and can be seen as a discrete analog of a Taylor series, where, however, terms of order higher than one vanish.

In \cref{eq: forward first jump moments} and \cref{eq: forward second jump moment} it was shown how the first two jump moments of the Ehrenfest process converges to the drift and diffusion of an OU process.
Similarly, we can now compute the first jump moment of the reverse Ehrenfest process as well and see that
\begin{subequations}
  \begin{align}
    \cev{b}(x)
     & := \sum_{\delta \in \left\{-\frac{2}{\sqrt{S}}, \frac{2}{\sqrt{S}}\right\}} \delta \, \cev{r}(x + \delta | x)                                                                                                             \\
     & =  \sum_{\delta \in \left\{-\frac{2}{\sqrt{S}}, \frac{2}{\sqrt{S}}\right\}} \delta \, \E_{x_0 \sim p_{0|t}(x_0 | x)}\left[ \frac{p_{t|0}(x + \delta | x_0)}{p_{t|0}(x | x_0)} \right] r(x | x + \delta)                   \\
     & = \sum_{\delta \in \left\{-\frac{2}{\sqrt{S}}, \frac{2}{\sqrt{S}}\right\}} \delta \, \E_{x_0 \sim p_{0|t}(x_0 | x)}\left[ 1 + \frac{\delta \, \Delta_\delta p_{t|0}(x | x_0)}{p_{t|0}(x | x_0)} \right] r(x | x + \delta) \\
     & = \sum_{\delta \in \left\{-\frac{2}{\sqrt{S}}, \frac{2}{\sqrt{S}}\right\}} \delta \,   r(x | x + \delta)                                                                                                                  \\
     & \quad + \sum_{\delta \in \left\{-\frac{2}{\sqrt{S}}, \frac{2}{\sqrt{S}}\right\}} \delta^2 \,  \E_{x_0 \sim p_{0|t}(x_0 | x)}\left[\frac{\Delta_\delta p_{t|0}(x | x_0)}{p_{t|0}(x | x_0)} \right] r(x | x + \delta)
  \end{align}
\end{subequations}

At this point, we can use the fact that the forward rates are symmetric in their arguments, i.e. $r(x | x + \delta) = r(x + \delta | x)$ up to a constant factor, to rewrite the forward rates as
\begin{subequations}
  \begin{align}
    r\left(x \bigg| x \pm \frac{2}{\sqrt{S}}\right)
     & = \frac{\sqrt{S}}{4}\left(\sqrt{S} \pm \left(x \pm \frac{2}{\sqrt{S}}\right) \right) \\
     & =  \frac{\sqrt{S}}{4}\left(\sqrt{S} \pm x  \right) \pm \frac{1}{2}                   \\
     & =  - \frac{\sqrt{S}}{4}\left(\sqrt{S} \mp x  \right) \pm \frac{1}{2}                 \\
     & = - r\left(x \pm \frac{2}{\sqrt{S}}  \bigg| x\right) \pm \frac{1}{2}.
  \end{align}
\end{subequations}

Consequently, we can rewrite the first jump moment as
\begin{align}
  \sum_{\delta \in \left\{-\frac{2}{\sqrt{S}}, \frac{2}{\sqrt{S}}\right\}} \delta \, r(x | x + \delta)
   & = \sum_{\delta \in \left\{-\frac{2}{\sqrt{S}}, \frac{2}{\sqrt{S}}\right\}} \delta \ \left(-r(x+ \delta | x )  + \frac{1}{2}\right) \\
   & = -\sum_{\delta \in \left\{-\frac{2}{\sqrt{S}}, \frac{2}{\sqrt{S}}\right\}} \delta \ r(x+ \delta | x ) + o(S^{-1/2}),
\end{align}
where we extracted the sign of the jump direction into the summands.
Similarly, the second jump moment can be rewritten as
\begin{equation}
  \label{eq: rate reversal}
  \sum_{\delta \in \left\{-\frac{2}{\sqrt{S}}, \frac{2}{\sqrt{S}}\right\}} \delta^2 r(x | x + \delta) = \sum_{\delta \in \left\{-\frac{2}{\sqrt{S}}, \frac{2}{\sqrt{S}}\right\}} \delta^2 r(x+ \delta | x ) + o(S^{-1}).
\end{equation}

Secondly, we want to express the ratio $\nicefrac{\Delta_\delta p_{t|0}(x | x_0)}{p_{t|0}(x | x_0)}$ independently of the increment $\delta$ in order to distill the second jump moment.
To that end we substitute the forward finite difference $\Delta_{\delta}p_{0|t}(x|x_0)$ between two neighboring states with its backward finite difference $\Delta_{\bar{\delta}}p_{0|t}(x|x_0)$ such that
\begin{subequations}
  \begin{align}
    \Delta_{\bar{\delta}}p_{0|t}(x|x_0) & =
    \frac{p(x | x_0) - p(x - \bar{\delta} | x_0)}{\bar{\delta}}                                                                               \\
                                        & = \frac{p(x + \bar{\delta}| x_0) - p(x | x_0)}{\bar{\delta}}                                        \\
                                        & \quad + \frac{2 \, p(x | x_0) - p(x + \bar{\delta}| x_0) - p(x - \bar{\delta} | x_0)}{\bar{\delta}} \\
                                        & =\frac{p(x + \bar{\delta}| x_0) - p(x | x_0)}{\bar{\delta}} + o(\bar{\delta}),
  \end{align}
\end{subequations}

The reverse-time Ehrenfest process can therefore be expressed in terms of the forward rates and the conditional expectation of the ratio of the forward transition probabilities as
\begin{subequations}
  \begin{align}
    \cev{b}(x) & = \sum_{\delta \in \left\{-\frac{2}{\sqrt{S}}, \frac{2}{\sqrt{S}}\right\}} \delta \,   r(x | x + \delta)                                                                                                                                       \\
               & \quad + \E_{x_0 \sim p_{0|t}(x_0 | x)}\left[\frac{\Delta_{\bar{\delta}} p_{t|0}(x | x_0)}{p_{t|0}(x | x_0)} \right] \sum_{\delta \in \left\{-\frac{2}{\sqrt{S}}, \frac{2}{\sqrt{S}}\right\}} \delta^2 \,   r(x | x + \delta) + o(\bar{\delta}) \\
               & = -b(x)+ D(x)\E_{x_0 \sim p_{0|t}(x_0 | x)}\left[\frac{\Delta_{\bar{\delta}} p_{t|0}(x | x_0)}{p_{t|0}(x | x_0)} \right]  + o(\bar{\delta}).
  \end{align}
\end{subequations}

In this section we elaborate on numerical details regarding our experiments in \Cref{sec: experiments}.

\subsection{A tractable Gaussian toy example}
\label{app: Gaussian mixture score}

In order to illustrate the properties of the Ehrenfest process, we consider the following toy example. Let us start with the SDE setting and consider the data distribution
\begin{equation}
  p_\mathrm{data}(x) = \sum_{m=1}^M \gamma_m \mathcal{N}(x; \mu_m, \Sigma_m),
\end{equation}
where $\sum_{m=1}^M \gamma_m = 1$ and $\mu_m \in \R^d, \Sigma_m \in \R^{d\times d}$. Further, for the inference SDE we consider the Ornstein-Uhlenbeck process
\begin{equation}
  \mathrm d X_t = - AX_t \mathrm  dt + B \, \mathrm dW_t, \qquad X_0 \sim p_\mathrm{data},
\end{equation}
with matrices $A, B \in \R^{d\times d}$. For simplicity, let us consider $A = \alpha \mathbbm{1}$ and $B = \beta \mathbbm{1}$ with $\alpha, \beta \in \R$. Conditioned on an initial condition $x_0$, the marginal densities of $X$ are then given by
\begin{equation}
  p_t^\mathrm{SDE}(x; x_0) = \mathcal{N}\left(x; x_0 e^{-\alpha t}, \frac{\beta^2}{2\alpha}\left(1-e^{-2\alpha t} \right)\mathbbm{1} \right).
\end{equation}
We can therefore compute
\begin{subequations}
  \begin{align}
    p^\mathrm{SDE}_t(x) & = \int_{\R^d} p^\mathrm{SDE}_t(x; x_0)p_\mathrm{data}(x_0) \mathrm d x_0                                                                                                                               \\
                        & = \sum_{m=1}^M \gamma_m \int_{\R^d}  \mathcal{N}\left(x; y_0 e^{-\alpha t}, \frac{\beta^2}{2\alpha}\left(1-e^{-2\alpha t}  \right)\mathbbm{1} \right)  \mathcal{N}(x_0; \mu_m, \Sigma_m) \mathrm d x_0 \\
                        & = \sum_{m=1}^M \gamma_m  \mathcal{N}\left(x; e^{-\alpha t} \mu_m, \frac{\beta^2}{2\alpha}\left(1-e^{-2\alpha t}  \right)\mathbbm{1}+ e^{-2\alpha t} \Sigma_m  \right).
  \end{align}
\end{subequations}
We can now readily compute the score $\nabla \log p_t^\mathrm{SDE}(x)$.

%\begin{figure}
%\centering
%\includegraphics[width=0.75\linewidth]{figures/forward_OU_Ehrenfest.pdf}
%\caption{We display the evolutions of the Ornstein-Uhlenbeck process (top right) and the Ehrenfest process for $S=100$ (bottom right), respectively, on the time interval $[0, 2]$. The left panels show the data distribution (in orange) and the distribution at time $T=2$ (in green). We can see that both the trajectories and the terminal distributions seem to agree quite well even for a moderate state space size $S$. For details we refer to ... (This is not final yet; maybe decrease step size in OU process)}
%\label{fig: forward OU and Ehrenfest}
%\end{figure}

\subsection{Illustrative example}
\label{app: illustrative example}

As an illustrative example we choose a distribution which is tractable and perceivable.
We model a two dimensional distribution of pixels, which are distributed proportionally to the pixel value of an image of a capital ``E''. The visualization of the data distribution is governed by its $33 \times 33$ pixels and a single sample from the distribution is a black pixel indexed by its location $(x, y)$ on the $33 \times 33$ pixel grid. The diffusive forward process acts upon the coordinate and diffuses with progressing time the black pixels into a two dimensional (approximately) binomial distribution at time $t=1$.

We use the identical architecture as \cite{campbell2022continuous}, used for their illustrative example.
Subsequently, the architecture incorporates two residual blocks, each comprising a Multilayer Perceptron (MLP) with a single hidden layer characterized by a dimensionality of 32, a residual connection that links back to the MLP's input, a layer normalization mechanism, and ultimately, a Feature-wise Linear Modulation (FiLM) layer, which is modulated in accordance to the time embedding. The architecture culminates in a terminal linear layer, delivering an output dimensionality of $2$.
The time embedding is accomplished utilizing the Transformer's sinusoidal position embedding technique, resulting in an embedding of dimension 32. This embedding is subsequently refined through an MLP featuring a single hidden layer of dimension 32 and an output dimensionality of 128. In order to generate the FiLM parameters within each residual block, the time embedding undergoes processing via a linear layer, yielding an output dimension of 2.

We test our proposed reverse rate estimators by training them to reconstruct the data distribution at time $t=0$. For evaluation, we draw $500.000$ individual pixels proportionally to the approximated equilibrium distribution and plot their respective histograms at time $t=0$ in Figure \ref{fig: toy example E}.

For training, we sample 1.000.000 pixel values proportional to the gray scale value of the 'E' image serving as the true data distribution.
We perform optimization with Adam with a learning rate of $0.001$ and optimize for 100.000 time steps with a batch size of 2.000.

\begin{figure}[ht]
  \centering
  \includegraphics[width=\linewidth]{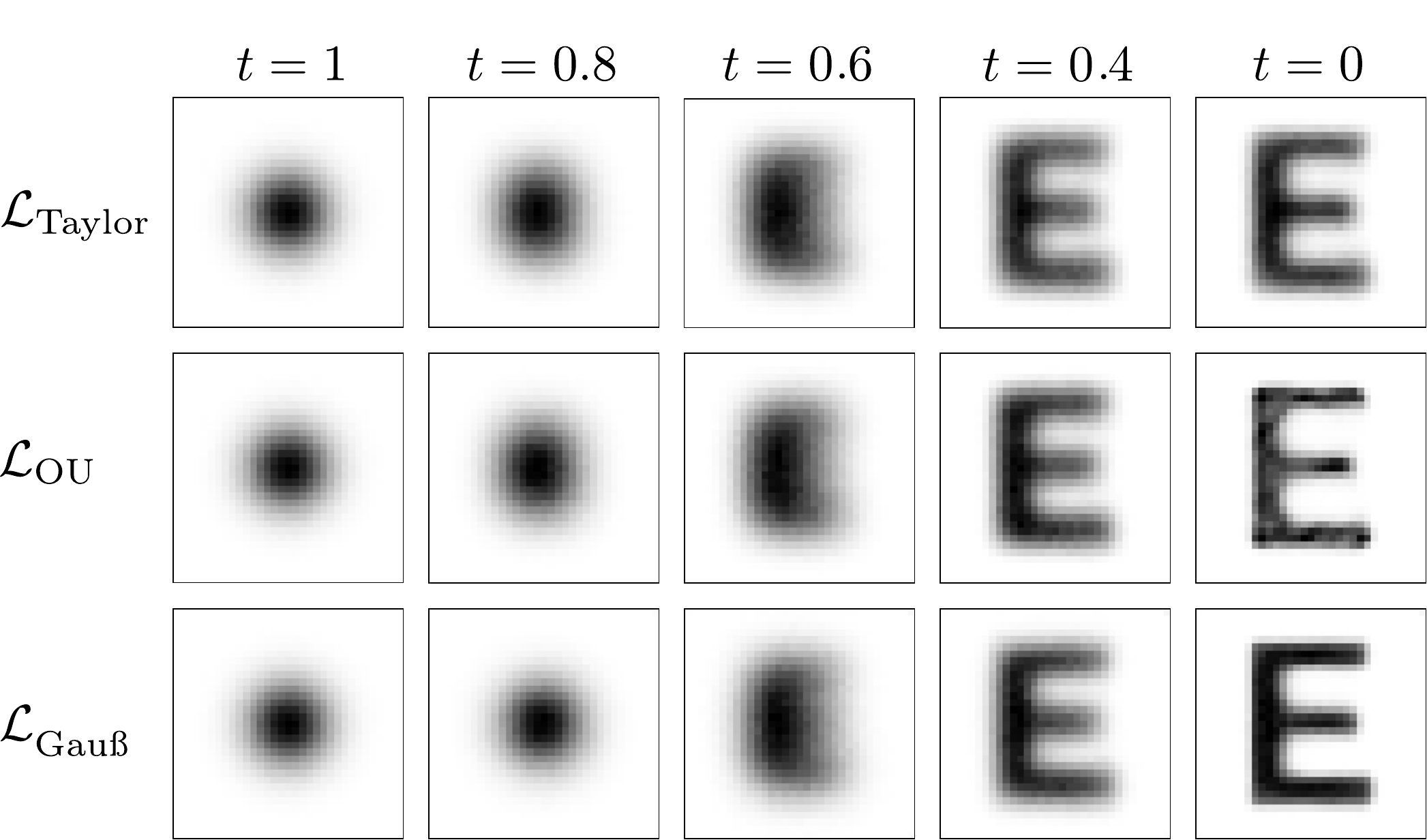}
  \caption{We plot histograms of $500.000$ samples from the time-reversed scaled Ehrenfest process at different times. The processes have been trained with three different losses.}
  \label{fig: toy example E}
\end{figure}

\subsection{MNIST}
\label{app: MNIST}

The MNIST experiments were conducted with the scaled Ehrenfest process.
The MNIST data set \cite{lecun1998gradient} consists of $28 \times 28$ gray scale images which we resized to $32 \times 32$ in order to be processable by use our standard DDPM architecture \cite{ho2020denoising}.
We used $S=256^2$ states to ensure 256 states in the range of $[-1, 1]$ with a difference between states of of $\frac{2}{\sqrt{S}}$.
For optimization, we resorted to the default hyperparameters of Adam \cite{kingma2014adam} and used an EMA of 0.99 with a batch size of 128.
For the rates we chose the continuous DDPM schedule proposed by Song and we stopped the reverse process at $t=0.01$ due to vanishing diffusion and resulting high variance rates close to the data distribution.

\begin{figure}[H]
  \centering
  \includegraphics[width=\linewidth]{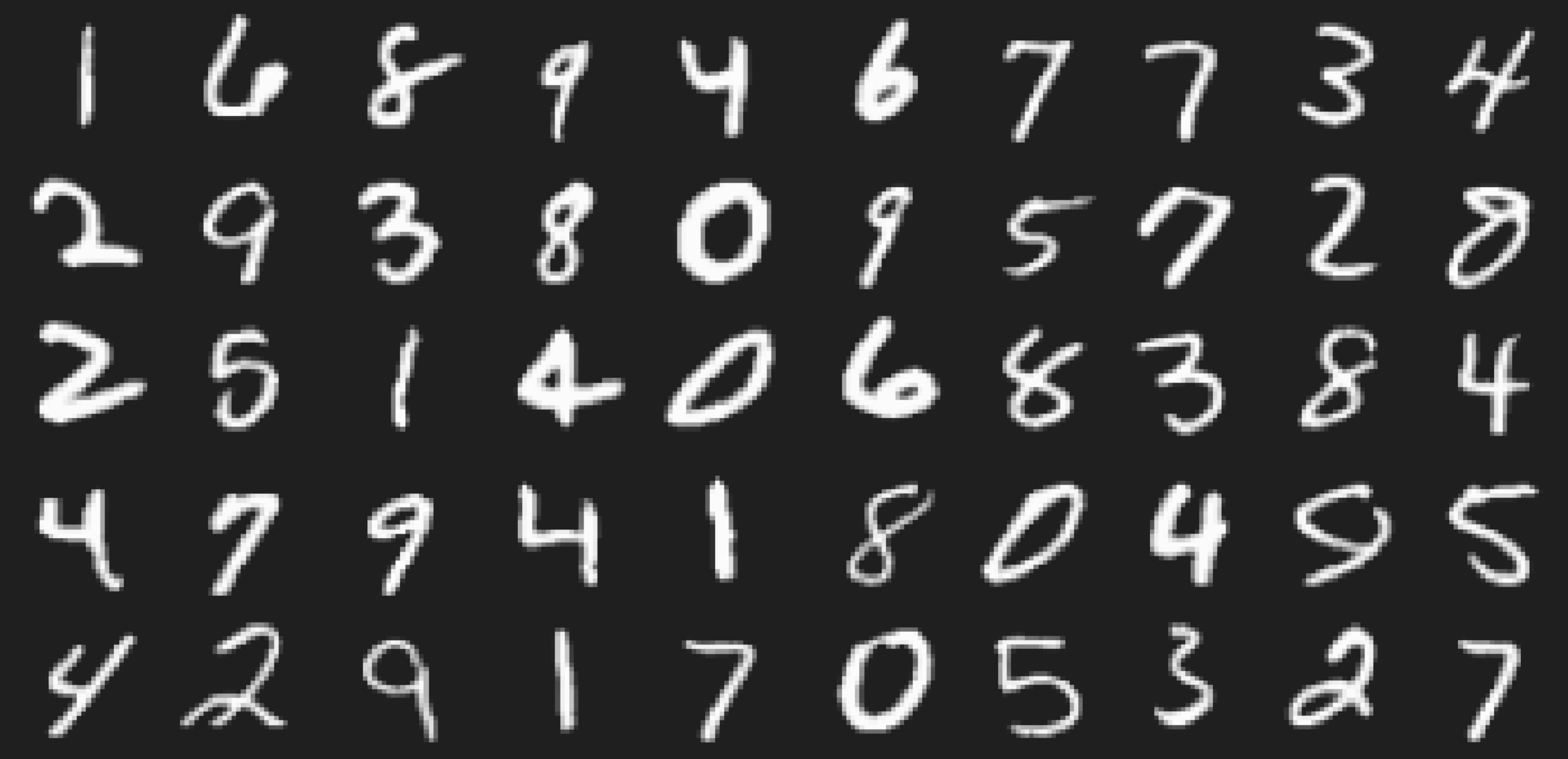}
  \caption{MNIST samples obtained with the time-reversed scaled Ehrenfest process which was trained with $\mathcal{L}_\mathrm{OU}$.}
  \label{fig: mnist}
\end{figure}

\subsection{Image modeling with CIFAR-10}
\label{app: CIFAR}

% For higher dimensional data, we evaluate our model on the CIFAR-10 data set which only provides samples from a distribution without revealing the true tractable distribution.
% Appropriately for the Ehrenfest process, we center the original data $x \in \{0, \ldots, S\}^{C \times H \times W} = \{0, \ldots, 256\}^{3 \times 32 \times 32}$ to $x \in \{-128, \ldots, 127\}^{3 \times 32 \times 32}$.
% In order to cover the shifted state space with a sufficiently distributed equilibrium distribution $t=T$, we augment the state space by an additional 3500 states.

We employ the standard DDPM architecture from \cite{ho2020denoising} and adapt the output layer to twice the size when required by the conditional expectation and the Gaussian predictor.
The score and first order Taylor approximations did not need to be adapted.
For the ratio case, we adapted the architecture by doubling the final convolutional layer to six channels such that the first half (three channels) predicted the death rate and the second three channels predicted the birth rate.
For the time dependent rate $\lambda_t$ we tried the cosine schedule of \cite{nichol2021improved} and the variance preserving SDE schedule of \cite{song2020score}. The cosine schedule ensures the expected value of the scaled Ehrenfest process to converges to zero with $\mathbb{E}_{x_0}[x_t] = \cos\left(\frac{\pi}{2} \ t\right)^2 x_0$ and translates to a time dependent jump process rate of $\lambda_t = \frac{1}{4} \,\pi \tan\left(\frac{\pi}{2} \ t\right)$, which is unbounded close to the equilibrium distribution and therefore has to be clamped. We choose $\lambda_t \in [0, 500]$ in our case.
Due to numerical considerations regarding the exploding rates due to diminishing diffusion close to $t=0$, we restricted the reverse process to times $t\in [0.01, 1]$. In general, we can transform any deliberately long sampling time $T$ to $T=1$ via the time transformation of the master equation.

We use the standard procedure for training image generating diffusion models \cite{loshchilov2016sgdr}.
In particular, we employ a linear learning rate warm up for $5.000$ steps and a cosine annealing from $0.0002$ to $0.00001$ with the Adam optimizer. The batch size was chosen as $256$ and an EMA with the factor $0.9999$ was applied for the model used for sampling. For sampling we ran the reverse process for $1.000$ steps and employed $\tau$-Leaping as showcased in \cite{campbell2022continuous} with a resulting $\tau=0.001$. We also utilized the predictor-corrector sampling method starting at $t=0.1$ to the minimum time of $t=0.01$. Whereas \cite{campbell2022continuous} reported significant gains performing corrector sampling, we observe behavior close to other state-continuous diffusion models which only apply few or no corrector steps at all.

\begin{figure}[ht]
  \centering
  \begin{minipage}{0.48\textwidth}
    \centering
    \includegraphics[width=\linewidth]{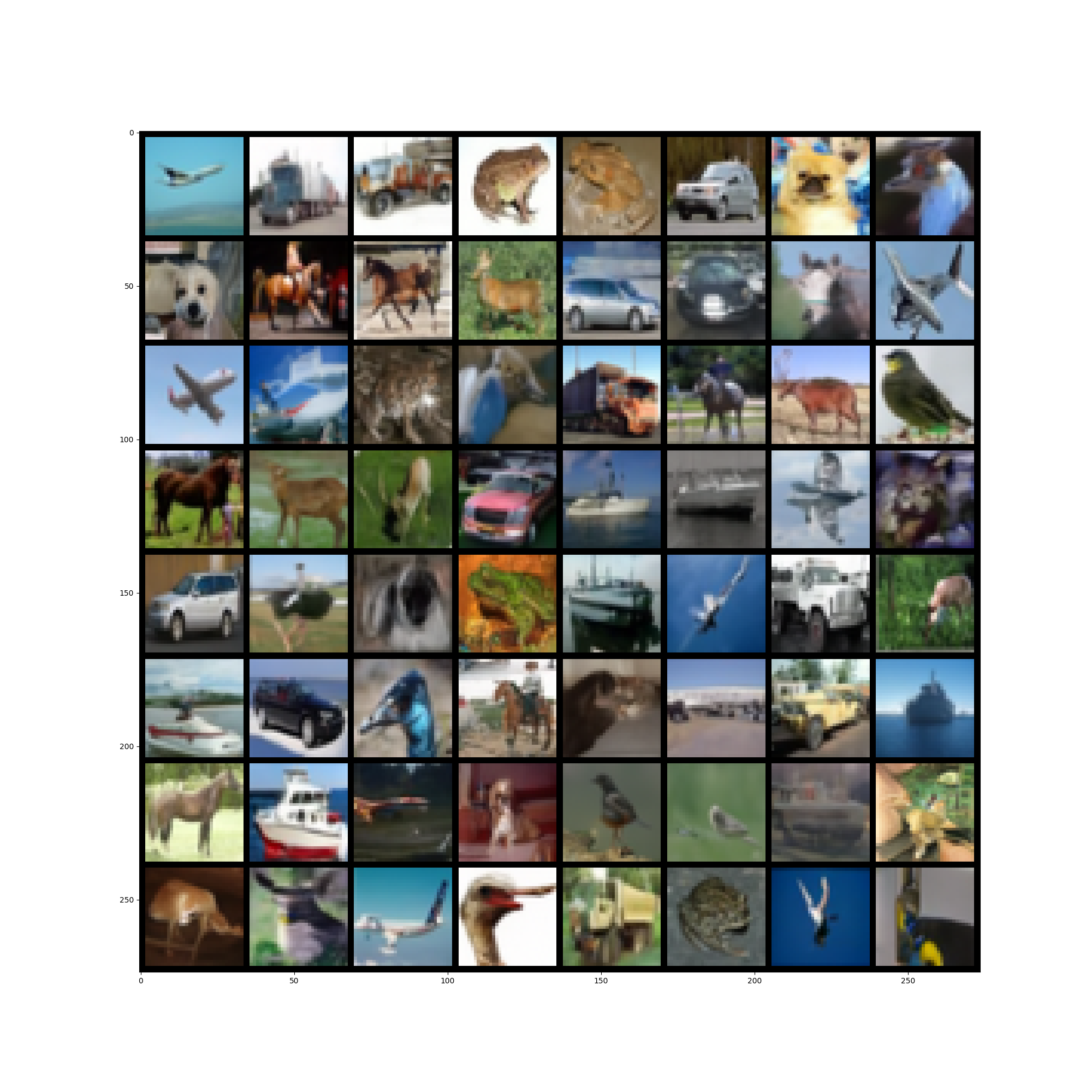} % Example image
    \caption{Samples from the reverse scaled Ehrenfest process obtained by finetuning the DDPM architecture with $\mathcal{L}_\text{Taylor}$ (\ref{eq: first Taylor ratio loss}).}
    \label{fig:cifar 10 taylor big 1}
  \end{minipage}\hfill
  \begin{minipage}{0.48\textwidth}
    \centering
    \includegraphics[width=\linewidth]{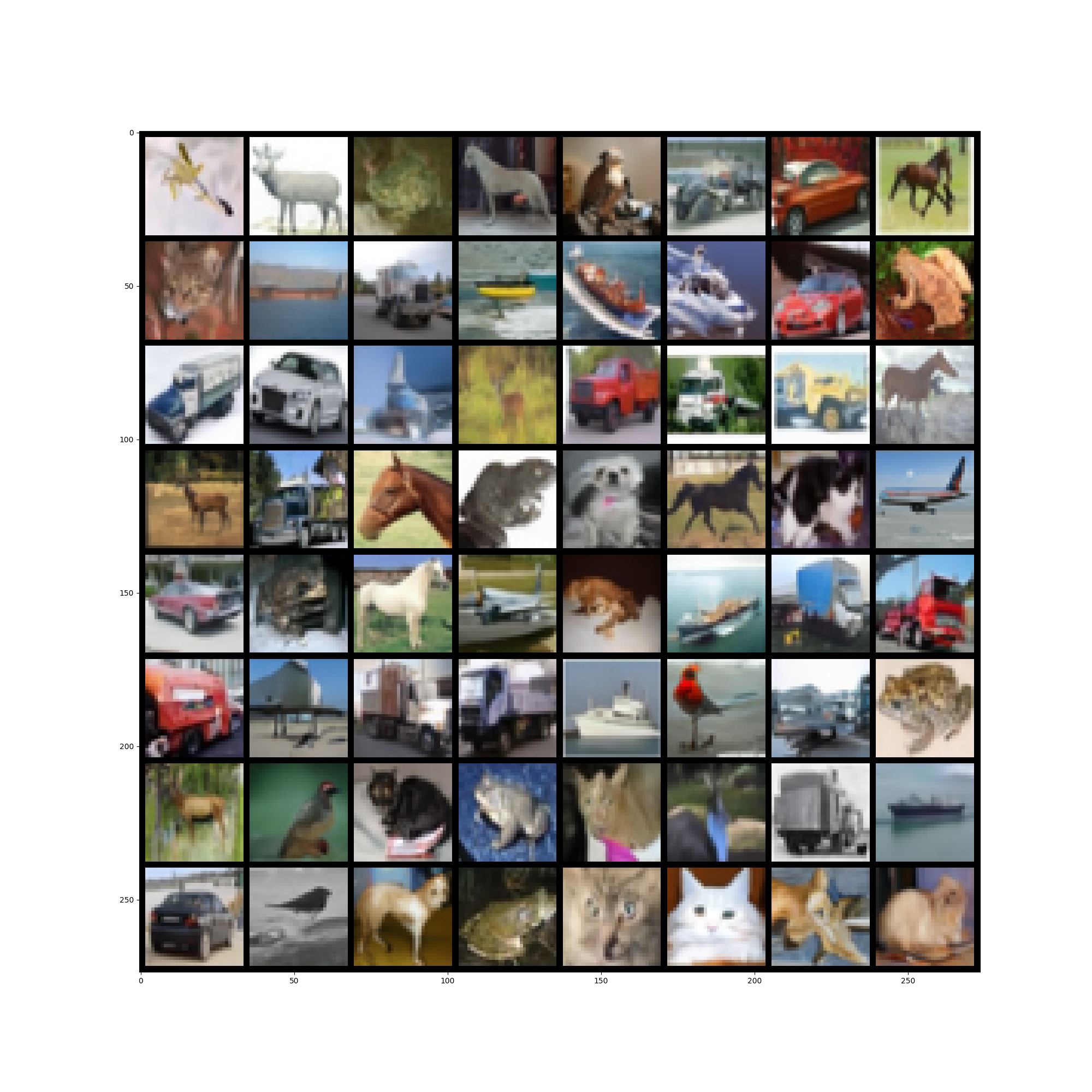} % Example image
    \caption{Samples from the reverse scaled Ehrenfest process obtained by finetuning the DDPM architecture with $\mathcal{L}_\text{Taylor}$ (\ref{eq: first Taylor ratio loss}).}
    \label{fig:cifar 10 taylor big 2}
  \end{minipage}
\end{figure}

\begin{figure}[ht]
  \centering
  \begin{minipage}{0.48\textwidth}
    \centering
    \includegraphics[width=\linewidth]{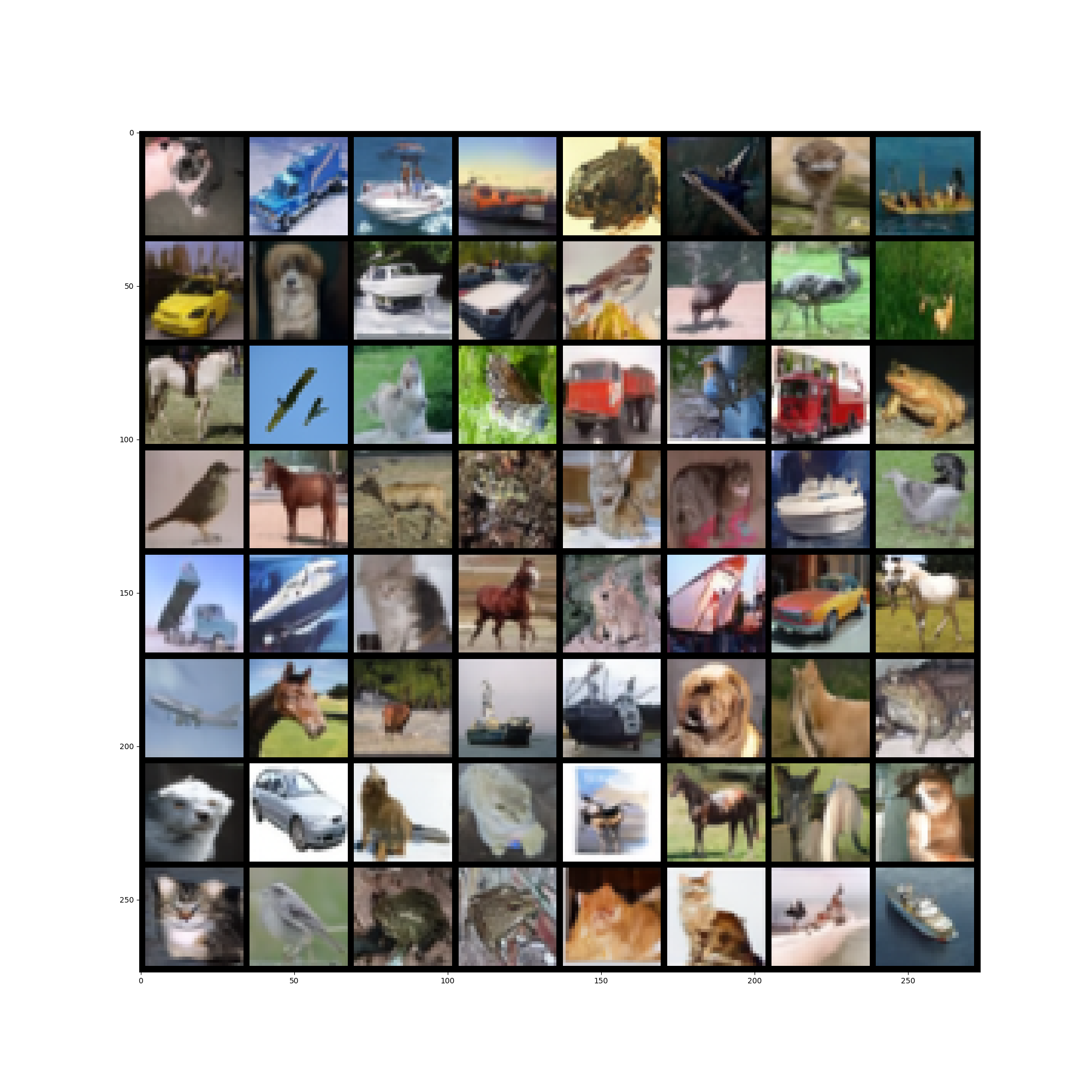} % Example image
    \caption{Samples from the reverse scaled Ehrenfest process obtained by finetuning the DDPM architecture with $\mathcal{L}_\text{OU}$ (\ref{eq: forward OU loss}).}
    \label{fig:cifar 10 score big 1}
  \end{minipage}\hfill
  \begin{minipage}{0.48\textwidth}
    \centering
    \includegraphics[width=\linewidth]{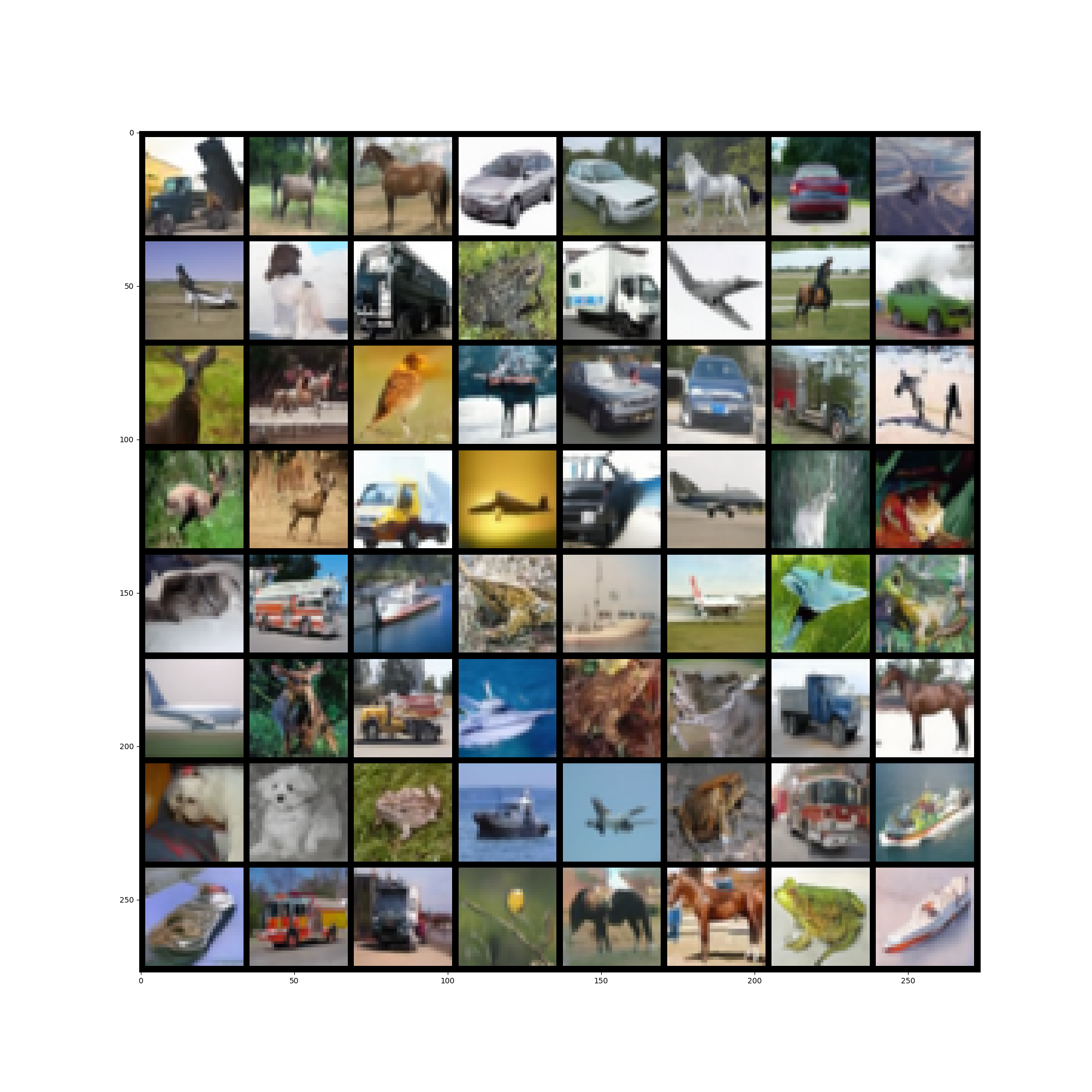} % Example image
    \caption{Samples from the reverse scaled Ehrenfest process obtained by finetuning the DDPM architecture with $\mathcal{L}_\text{OU}$ (\ref{eq: forward OU loss}).}
    \label{fig:cifar 10 score big 2}
  \end{minipage}
\end{figure}

\section{Supplementary Material for Chapter 5}
\subsection{Proof of the Score Matching with a Reference Function}
\label{app:cha5scorematchingproof}

Here we present the proof of the equivalency that the optimum of the criterion does indeed yield the correct optimum when $\phi(x)=-\mu(x) + \sigma^2 \nabla_x \log p(x)$.
Without loss of generality and for notational simplicity we drop the dependency on the time index $t$ and consider the one dimensional case, as the proof holds for higher dimensions and for auxiliary inputs.

Under the assumption of vanishing probability of data at infinity, namely
\newline $\lim_{x \rightarrow \pm \infty} p(x)=0$, integration by parts yields
\begin{align}
  \int_{-\infty}^{\infty} p(x) \partial_x \phi(x) dx
   & = \overbrace{\left[ p(x) \phi(x) \right]_{-\infty}^{\infty}}^{=0} - \int \partial_x p(x) \phi(x) dx \\
   & = - \int \partial_x p(x) \phi(x) dx
\end{align}

Additionally, we will refer to two additional identities which are used in the proof, namely the log-derivative trick and the completion of the square.
The log-derivative trick is a standard technique to express the derivative of a function in terms of the function itself and its log-derivative, i.e.
\begin{align}
  \partial_x \log f(x) = \frac{1}{f(x)}\partial_x f(x) \quad \Leftrightarrow \quad \partial_x f(x) = f(x) \partial_x \log f(x)
\end{align}
Completing the square is a standard technique to express a quadratic function in a more convenient form, i.e.
\begin{align}
  a^2 + 2 ab & = a^2 + 2 ab \pm b^2    \\
             & = a^2 + 2ab + b^2 - b^2 \\
             & = (a + b)^2 - b^2
\end{align}

The proof goes as follows,
\begin{align}
  \mathcal{L}[\phi, \mu] & = \Efunc{\phi^2(x) + 2 \mu(x) \phi(x) + 2 \sigma^2 \partial_x \phi(x)}                                                                                        \\
                         & = \int p(x) \left( \phi^2(x) + 2 \mu(x) \phi(x) + 2 \sigma^2 \partial_x \phi(x) \right) dx                                                                    \\
                         & = \int p(x) \left( \phi^2(x) + 2 \mu(x) \phi(x) \right) dx + 2 \sigma^2 \underbrace{\int p(x) \partial_x \phi(x) dx}_{\mathclap{\text{integration by parts}}} \\
                         & = \int p(x) \left( \phi^2(x) + 2 \mu(x) \phi(x) \right) dx - 2 \sigma^2 \int \underbrace{\partial_x p(x) \phi(x)}_{\mathclap{\text{log-derivative trick}}} dx \\
                         & = \int p(x) \left( \phi^2(x) + 2 \mu(x) \phi(x) - 2 \sigma^2 \phi(x) \partial_x \log p(x)) \right) dx                                                         \\
                         & = \int p(x) \Big( \underbrace{\phi^2(x) + 2 \phi(x) \{ \mu(x) - \sigma^2 \partial_x \log p(x) \}}_{\text{complete the square}} \Big) dx                       \\
                         & = \int p(x) \left( \left(\phi(x) + \{ \mu(x) - \sigma^2 \partial_x \log p(x) \} \right)^2 \right) \nonumber                                                   \\
                         & \quad - \int p(x) \left( \sigma^2 \partial_x \log p(x) - \mu(x) \right)^2 dx                                                                                  \\
                         & = \Efunc{ \left(\phi(x) - \{ -\mu(x) + \sigma^2 \partial_x \log p(x)\} \right)^2} \nonumber                                                                   \\
                         & \quad - \Efunc{ (- \mu(x) + \sigma^2\partial_x \log p(x))^2}
\end{align}

The criterion above achieves its optimum with respect to $\phi(x)$ when the first term evaluates to zero, namely $\arg \min_\phi \mathcal{L}[\phi, \mu]=-\mu(x) + \sigma^2 \nabla_x \log p(x)$.
Furthermore, we can see that the original criterion which does not require the explicit score, trains the reverse drift $\phi(x)$ implicitly on the correct score up to an additive term.

\subsection{Experimental Setup}
\label{app:cha5experimentalsetup}

For all our experiments we used a deep neural network taking both the spatial input $x_t$ and the time $t$ as distinct inputs.
We fixed the size of the fully connected layers in the hidden layers of the deep neural networks to a integer multiple of the spatial dimension $D$.
As a rule of thumb, we used $10 D$ neurons in the hidden layers and scaled the depth of the deep neural network with $\text{max}(2, D/5)$.

We used LayerNorm \cite{ba2016layer} before the spatial features of the hidden layers as to not destroy the time embeddings.
LayerNorm normalizes the representation of each sample to a standard Gaussian distribution and has empirically been shown to numerically aid the gradient computation.
Thus we used blocks of the shape $x_{i+1}$ = $x_i$ + Tanh( Linear( LayerNorm($x_i$), Embedding(t))).
We used the ADAM optimizer \cite{kingma2014adam} and Cosine Annealing \cite{loshchilov2016sgdr}, training each half-bridge for 1000 steps and annealing the learning rate from $10^{-3}$ to $10^{-5}$.
We drew $N_x = 128$ trajectories per bridge and stored them in a buffer of 512 sample trajectories, discarding old trajectories as needed to maintain the fixed buffer size.
We constructed the Schr\"odinger bridge by running 10 IPF iterations.

The simulations of the SDE were performed using by the \textit{Euler--Maruyama} \cite{oksendal2013stochastic} approximation with a step size $dt$ with $N_t$ and we found $N_t=100$ and $dt = 0.01$ to be robust values working well for our experiments.
Our choice of the diffusion $\sigma$ was motivated by the idea the samples of the half bridge process at the first IPF iteration
should sufficiently cover the marginal distribution increasing the possibility that this distribution is 'hit' with at least some sampled trajectories.
As the diffusion parameter is yet another hyperparameter, we chose the diffusion according to $\sigma=\nicefrac{1}{N_t \cdot dt}$.

The drifts of both processes, forward and backward, received as input the spatial information $x$ and the time $t$.
We normalized the time index $t \in [0, N_t \cdot dt]$ to $t \in \{0, 1\}$ as the time index remained fixed over the course of the entire training of the bridge.

% \subsection{Trace Estimation}
% \subsection[\appendixname~\thesubsection]{}
\subsection{Hutchinson's Stochastic Trace Estimation}
\label{appendix:traceestimationtrick}

For a square matrix $A \in \mathbb{R}^{d \times d}$ the trace is defined as
\begin{align}
  \tr{A} = \sum_i^d A_{ii}
\end{align}
which sums over the diagonal terms of the matrix $A$.

We can approximate the exact trace with a sampled approximation.
We therefore from a sample random samples $Z \in \mathbb{R}^D$ for which the mean is a zero vector and the covariance matrix is a identity matrix, i.e. $\Sigma[Z] = I$.
% More precisely we determine the covariance matrix as
% \begin{align}
% 	\Sigma[Z]
% 	 & = \Efunc{(z - \Efunc{Z})(z - \Efunc{Z})^T} \\
% 	 & = \Efunc{zz^T} - \Efunc{Z} \Efunc{Z}^T     \\
% 	 & = \Efunc{zz^T}                             \\
% 	 & = I
% \end{align}

The Rademacher distribution which samples from the set $\{-1, +1\}$ with equal probability offers the lowest estimator variance and is commonly used in the trace estimation trick for this reason.

\begin{align}
  \tr{A}
   & = \tr{I A}                           \\
   & = \tr{\Efuncc{z \sim p(z)}{z z^T} A} \\
  % & = \Efuncc{z \sim p(z)}{\tr{z z^T A}} \\
  % & = \Efuncc{z \sim p(z)}{\tr{z^T A z}} \\
   & = \Efuncc{z \sim p(z)}{z^T A z}
\end{align}
where the trace operator disappears as $z^T A z \in \mathbb{R}$ is a scalar value for which the trace is a superfluous operation.

For estimating the trace of the Jacobian, we can circumvent the quadratic nature of the Jacobian by reducing the network output with a random vector z to a scalar, which can then be readily derived with a single backward pass.
\begin{align}
  \tr{J_f(x)}
   & = \Efuncc{z \sim p(z)}{z^T J_f(x) z}             \\
   & = \Efuncc{z \sim p(z)}{z^T \nabla_x [f(x)^T] z}  \\
   & = \Efuncc{z \sim p(z)}{z^T \nabla_x [f(x)^T z] }
\end{align}

% \begin{table}[H] 
% \caption{This is a table caption.\label{tab5}}
% \newcolumntype{C}{>{\centering\arraybackslash}X}
% \begin{tabularx}{\textwidth}{CCC}
% \toprule
% \textbf{Title 1}	& \textbf{Title 2}	& \textbf{Title 3}\\
% \midrule
% Entry 1		& Data			& Data\\
% Entry 2		& Data			& Data\\
% \bottomrule
% \end{tabularx}
% \end{table}

\subsection{Stein's Lemma}
\label{appendix:steinslemma}

Let $X \in \mathbb{R}^N$ be a normally distributed random variable $p(x) =\mathcal{N}(x ; \mu, \sigma^2)$ with mean $\mu$ and variance $\sigma^2$.
Here, we will derive an identity which is commonly known as \emph{Stein's lemma} \cite{lin2019stein, ingersoll1987theory}.
Let the derivative of the Gaussian distribution with respect to $x$ be
\begin{align}
  \partial_x p(x)
   & = \partial_x \left[\frac{1}{\sqrt{2\pi} \sigma} e^{-\frac{(x-\mu)^2}{2\sigma^2}} \right]  \\
   & = -\frac{(x-\mu)}{\sigma^2} \frac{1}{\sqrt{2\pi} \sigma} e^{-\frac{(x-\mu)^2}{2\sigma^2}} \\
   & = - \frac{(x-\mu)}{\sigma^2} p(x).
\end{align}
% Integration by parts (IbP) serves as a inverse of the product rule $\partial_x [u(x) v(x)] = \partial_x u(x) v(x) + u(x) \partial_x v(x)$ namely
% \begin{align}
% [u(x)v(x)]_{x=-\infty}^{\infty} & = \int_{x=-\infty}^{\infty} u(x) \partial_x v(x) + \partial_x u(x) v(x) dx                              \\
% & = \int_{x=-\infty}^{\infty} u(x) \partial_x v(x) dx + \int_{x=-\infty}^{\infty} \partial_x u(x) v(x) dx
% \end{align}
Integration by parts (IbP) which yields the often used identity
\begin{align}
  \int_{x=-\infty}^{\infty} u(x) \partial_x v(x) dx
   & = [u(x)v(x)]_{x=-\infty}^{\infty} - \int_{x=-\infty}^{\infty} \partial_x u(x) v(x) dx.
\end{align}
In practice, the property that either $u(x)$ or $v(x)$ or both evaluate to zero at $x = \pm \infty$ as it is the case with common probability distributions is leveraged as an algebraic trick to 'switch the derivative to the other function'.

Given a function $g(x)$ we can obtain a gradient estimator with the following steps via integration by parts
\begin{align}
  \Efuncc{p(x))}{g(x) ( x - \mu)}
   & = \int g(x) (x-\mu) \frac{1}{\sqrt{2\pi} \sigma} e^{-\frac{(x-\mu)^2}{2\sigma^2}} dx                                                           \\
  % & = \int g(x) (x-\mu) \frac{-\sigma^2}{-\sigma^2}\frac{1}{\sqrt{2\pi} \sigma} e^{-\frac{(x-\mu)^2}{2\sigma^2}} dx                                \\
   & = -\sigma^2 \int g(x) \underbrace{\frac{(x-\mu)}{-\sigma^2}\frac{1}{\sqrt{2\pi} \sigma} e^{-\frac{(x-\mu)^2}{2\sigma^2}}}_{\partial_x p(x)} dx \\
   & = - \sigma^2 \underbrace{\int g(x) \partial_x p(x) dx}_{\text{IbP}}                                                                            \\
   & = -\sigma^2 \big\{ \underbrace{[ g(x) p(x)]_{x=-\infty}^{\infty}}_{p(\pm \infty)=0} - \int \partial_x g(x) p(x) dx \big\}                      \\
  % & = \sigma^2 \int \partial_x g(x) p(x) dx                                                                                                        \\
   & = \sigma^2 \Efuncc{p(x)}{\partial_x g(x)}
\end{align}

\subsection{Trace Estimation with Stein's Lemma}
\label{appendix:steingradients}
By choosing a perturbation $\epsilon \sim p(0, \sigma_\epsilon^2)$ with zero mean and a small variance $\sigma_\epsilon^2$ we can define a perturbed data point $x' \sim p(x,\sigma_\epsilon^2)$ via $x' = x + \epsilon$.
This transforms Stein's lemma \cite{lin2019stein, ingersoll1987theory} into
\begin{align}
   & \Efuncc{p(\nu))}{g(x') ( x' - x)}
  = \Efuncc{p(\epsilon))}{g(x + \epsilon) \epsilon}
  = \sigma_\epsilon^2 \Efuncc{p(\epsilon)}{\partial_{x'} g(x')}.
\end{align}
In practice we rescale with $1/\sigma_\epsilon^2$ and evaluate the left side of the following identity
\begin{align}
  \Efuncc{p(\epsilon)}{g(x + \epsilon) \frac{\epsilon}{\sigma_\epsilon^2}} = \Efuncc{p(\epsilon)}{\partial_{x+\epsilon} g(x+\epsilon)}.
\end{align}
which gives us an estimator of the gradient $\partial_x g(x)$ by averaging the gradients in the $\epsilon$-neighborhood of $x$.
For a function $g: \mathbb{R}^M \rightarrow \mathbb{R}^N$, the gradient estimation with Stein's lemma estimates the trace of the Jacobian $J_g(x+\epsilon)$
\begin{align}
  \Efuncc{p(\epsilon)}{g(x + \epsilon) \frac{\epsilon}{\sigma_\epsilon^2}} = \Efuncc{p(\epsilon)}{\text{Tr}\left[ J_g(x+\epsilon)\right]}.
\end{align}
In the limit of $\sigma_\epsilon \rightarrow 0$ we obtain the trace estimator
\begin{align}
  \text{Tr}\left[ J_g(x) \right]
  = \lim_{\sigma_\epsilon \downarrow 0} \Efuncc{p(\epsilon)}{\text{Tr}\left[ J_g(x+\epsilon)\right]}
  = \lim_{\sigma_\epsilon \downarrow 0} \Efuncc{p(\epsilon)}{g(x + \epsilon) \frac{\epsilon}{\sigma_\epsilon^2}}
\end{align}
in which we compute the right most term to obtain the left most term.

% The scaling of the perturbation scale $\sigma_\epsilon$ offers at least in theory intriguing similarities to the forward diffusive process of diffusion models.
% These models estimate the scores of the data distribution $x'_t \sim p(x, \sigma_t^2)$ in which $x$ is a sample from the true data distribution which is being modelled and the perturbation scale $\sigma_t$ is time dependent which decreases as the generative process is integrated in time.
% Thus to stabilize the score estimation in higher dimensions we aim to to make the perturbation scale in the Stein trace estimator time dependent.

\end{document}